\theoremstyle{plain}
\newtheorem{assumption}{Assumption}
\newtheorem{theorem}{Theorem}
\newtheorem{definition}{Definition}
\newtheorem{lemma}{Lemma}
\newtheorem{remark}{Remark}
\renewcommand{\tilde}{\widetilde}
\newcommand{\sayak}[1]{{\footnotesize\color{blue}[Sayak: #1]}}
\newcommand{\yulian}[1] {{\footnotesize\color{green}[Yulian: #1]}}
	\definecolor{DarkGreen}{rgb}{0.2,0.6,0.2}
	\definecolor{DarkRed}{rgb}{0.6,0.2,0.2}
	\definecolor{DarkBlue}{rgb}{0.2,0.2,0.6}
	\definecolor{DarkPurple}{rgb}{0.4,0.2,0.4}   
\icmltitlerunning{Differentially Private Episodic Reinforcement Learning with Heavy-tailed Rewards }
\begin{document}

\twocolumn[
\icmltitle{Differentially Private Episodic Reinforcement Learning \\with Heavy-tailed Rewards }


\icmlsetsymbol{equal}{*}

\begin{icmlauthorlist}
\icmlauthor{Yulian Wu}{prada,yyy}
\icmlauthor{Xingyu Zhou}{comp}
\icmlauthor{Sayak Ray Chowdhury}{sch}
\icmlauthor{Di Wang}{prada,yyy}

\end{icmlauthorlist}
\icmlaffiliation{prada}{ Provable Responsible AI and Data Analytics Lab}
\icmlaffiliation{yyy}{King Abdullah University of Science and Technology,
Saudi Arabia}
\icmlaffiliation{comp}{Wayne State University,  USA}
\icmlaffiliation{sch}{Microsoft Research, India}
\icmlcorrespondingauthor{Di Wang}{di.wang@kaust.edu.sa}

\icmlkeywords{differential privacy, Markov decision processes, heavy-tailed rewards}

\vskip 0.3in
]


\printAffiliationsAndNotice{}  
\begin{abstract}
In this paper, we study the problem of (finite horizon tabular) Markov decision processes (MDPs) with heavy-tailed rewards under the constraint of differential privacy (DP). Compared with the previous studies for private reinforcement learning that typically assume rewards are sampled from some bounded or sub-Gaussian distributions to ensure DP, we consider the setting where reward distributions have only finite $(1+v)$-th moments with some $v \in (0,1]$. By resorting to  robust mean estimators for rewards, we first propose two frameworks for heavy-tailed MDPs, i.e., one is for value iteration and another is for policy optimization. Under each framework, we consider both joint differential privacy (JDP) and local differential privacy (LDP) models.  Based on our frameworks, we provide regret upper bounds for both JDP and LDP cases and show that the moment of distribution and privacy budget both have significant impacts  on regrets. Finally, we establish a lower bound of regret minimization for heavy-tailed MDPs in JDP model by reducing it to the instance-independent lower bound of heavy-tailed multi-armed bandits  in DP model. We also show the lower bound for the problem in  LDP by adopting some private minimax methods. Our results reveal that there are fundamental differences between the problem of private RL with sub-Gaussian and that with heavy-tailed rewards.
\end{abstract}



\section{Introduction}
As a fundamental paradigm in decision-making problems, \textit{reinforcement learning} (RL), where an agent aims to maximize its long-term reward from interacting with an environment, has been widely applied in various fields such as finance\cite{xu2022towards}, healthcare \cite{gottesman2019guidelines}, transportation \cite{li2019efficient} and online recommendations \cite{zong2016cascading}. However, in these applications, there are some privacy issues as they always involve sensitive data \cite{pan2019you}. 
And users are less willing to disclose information and are more concerned about how their personal data are used \cite{das2021panel}. These privacy concerns in RL require us to design some reinforcement learning algorithms that achieve good performance while  can also protect the training environment's privacy.

In order to protect the information of sensitive data, \textit{differential privacy} (DP)\cite{dwork2006calibrating} has become a de facto technique for private data analysis. Over the past decade, differentially private reinforcement learning (DP-RL) has been extensively studied for various settings, including full information RL \cite{jain2012differentially} and partial information RL, e.g., \citet{mishra2015nearly,chowdhury2022shuffle,chowdhury2022distributed,vietri2020private}.
For private RL problems, it is natural to first consider achieving privacy protection in the standard DP model in \citet{dwork2006calibrating} where we treat each episode as a specific user and the agent collects the raw data of users and aims to  achieve privacy protection for history trajectories. However, \citet{shariff2018differentially} and \citet{dubey2021no} show that the standard  differential privacy provably incurs linear regret in contextual bandits (hence in RL as well) and such linear dependency is unavoidable.
Therefore, for general DP-RL, it is more reasonable to consider a relaxation of DP, namely  \textit{joint differential privacy} (JDP) \cite{kearns2014mechanism}, in which for each user $i$, knowledge of the other users does not reveal much about user $i$'s data. Moreover, in some situations, users are unwilling to share their data with the agent, so it is also common to consider another DP model, \textit{local differential privacy} (LDP), which 
perturbs  users' data locally before sending them to the central server so that only the data owners can access the original data.

As we mentioned above,  sensitive information is  contained in the states and rewards of trajectories for RL. Several methods of preserving the privacy of rewards have been proposed in the past few years.  However, these methods always need to assume the
rewards are sampled from light-tailed distributions, such as sub-Gaussian distributions, to ensure DP. However, in a wide variety of real-world systems such as economics \cite{ibragimov2015heavy}, medicine \cite{zhao2020type}, and market crashes \cite{schluter2008identifying}, rewards are often generated by certain heavy-tailed distributions. {For example, it has been shown that  RL is quite suitable to recommendation systems \cite{afsar2022reinforcement}, and more and
more companies such as Google \cite{chen2019top} are utilizing the power of RL to recommend better items to their customers. However, in such scenarios, the rewards, which correspond to users' feedback such as click, not click, or rating, always contain sensitive information  and 
always follow heavy-tailed or even long-tailed distributions  \cite{park2008long,celma2010long}.} 
Therefore, it is necessary 
to design private algorithms for  these RL problems with heavy-tailed rewards.

Motivated by these facts, in this paper, we focus on one fundamental model in RL under DP constraints, i.e., private (finite horizon tabular) Markov decision processes (MDPs) \cite{sutton2018reinforcement}, with heavy-tailed rewards in that the reward distribution of each state-action pair has only bounded $(1+v)$-th moment for some $v \in (0,1]$. To the best of our knowledge, we are the first to consider MDPs with heavy-tailed rewards in both joint and local DP models. And our contributions can be summarized as follows. 


 \noindent 
 1. We start from the most commonly used method, i.e., the value-iteration (VI) algorithm. Specifically, we present a general framework, Private-Heavy-UCBVI, for designing private optimistic VI algorithms based on some robust mean estimator for heavy-tailed distributions. To guarantee privacy, we use an adaptive version of the Tree-based mechanism and a new noise allocation method to achieve JDP, and we use the Laplacian mechanism for LDP. Based on this framework, we establish regret upper bounds for the problem in both JDP and LDP models to measure the performance of the framework.

  \noindent 
  2. Based on our private mechanisms, we then consider policy optimization (PO) based algorithms and propose a framework, namely Private-Heavy-UCBPO. 
    We also analyze the regret bounds for both JDP and LDP models based on this framework by developing some new theoretical techniques as byproducts. It is notable that this is also the first 
    PO-based algorithm for heavy-tailed MDPs, and there is even no previous PO algorithm  in the non-private case.

 \noindent 
 3. Finally, we study the lower bounds of our problem in JDP and LDP. In particular, for the JDP model, two unique challenges arise when applying the standard reduction from RL to MAB for minimax lower bounds. First, it is still open for the lower bound in the MAB case as highlighted in \citet{tao2022optimal}. We resolve this open problem by deriving the first lower bound for heavy-tailed MAB under the central DP model. Second, additional care is required to translate the lower bound for MAB under DP to the lower bound for RL under JDP. We resolve these challenges by using the notion of JDP with the public initial state as a bridge. 
For the LDP case, we derive the lower bound by providing some new hard instances of MDPs and using some private minimax methods. All the instances and methods can also be used in other private RL problems.

 We summarize our theoretical results in Table \ref{tab:my_label}. 
 Due to space limitations, some additional {algorithms} and sections, and all {proofs} and experiments  are included in Appendix.
\begin{table*}[htb]
    \centering
    \resizebox{\textwidth}{!}{
    \begin{tabular}{|c|c|c|c|c|c|}
    \hline
      \textbf{Problem}   & \textbf{Reward/ Cost} &  \textbf{DP} & \textbf{Algorithm} & \textbf{Upper bound} & \textbf{Lower bound}\\
     \hline
     \multirow{2}{*}{MAB} & \multirow{2}{*}{Heavy-tailed}& $\epsilon$-DP& Robust-SE & $\tilde{O}\left(\left(\frac{A}{\epsilon}\right)^{\frac{v}{1+v}}K^{\frac{1}{1+v}}\right)$ \cite{tao2022optimal} & $\color{blue}{\Omega\left(\left(\frac{A}{\epsilon}\right)^{\frac{v}{1+v}}K^{\frac{1}{1+v}}\right)}$\\
     \cline{3-6}
    &&$\epsilon$-LDP & Robust-SE&$\tilde{O}\left(\left(\frac{A}{\epsilon^2}\right)^{\frac{v}{1+v}}K^{\frac{1}{1+v}}\right)$ \cite{tao2022optimal} & $\Omega\left(\left(\frac{A}{\epsilon^2}\right)^{\frac{v}{1+v}}K^{\frac{1}{1+v}}\right)$ \cite{tao2022optimal}\\
    \hline
     \multirow{4}{*}{MDPs} & \multirow{4}{*}{Bounded} & \multirow{2}{*}{$\epsilon$-JDP} &UCB-VI & $\widetilde{O}\left(\sqrt{SAH^3T} +  S^2AH^3/\epsilon\right)$\cite{chowdhury2021differentially}& \multirow{2}{*}{$\Omega\left(\sqrt{HSAT}+\frac{SAH\log T}{\epsilon}\right)$\cite{vietri2020private}}\\
     \cline{4-5}
     & & &UCB-PO & $\widetilde{O}\left(\sqrt{S^2AH^3T} +  S^2AH^3/\epsilon\right)$\cite{chowdhury2021differentially}&\\
     \cline{3-6}
     & & \multirow{2}{*}{$\epsilon$-LDP} & UCB-VI &$\widetilde{O}\left(\sqrt{SAH^3T} +  S^2A\sqrt{H^5 T}/\epsilon\right)$\cite{chowdhury2021differentially}&\multirow{2}{*}{$\Omega\left(\frac{H\sqrt{SAK}}{\min\{e^\epsilon-1,1\}}\right)$ \cite{garcelon2021local}}\\
     \cline{4-5}
     & & &UCB-PO & $\widetilde{O}\left(\sqrt{S^2AH^3T} +  S^2A\sqrt{H^5T}/\epsilon\right)$\cite{chowdhury2021differentially}&\\
    \hline
    \multirow{4}{*}{MDPs} & \multirow{4}{*}{Heavy-tailed} & \multirow{2}{*}{$\epsilon$-JDP} &UCB-VI & \color{blue}{$\tilde{O}\left(\sqrt{SAH^3T} + \frac{S^2AH^3}{\epsilon} + \left(\frac{SAH^2}{\epsilon}\right)^{\frac{v}{1+v}}T^{\frac{1}{1+v}}\right)$ }& \multirow{2}{*}{\color{blue}{$\Omega\left(\left(\frac{SA}{\epsilon}\right)^{\frac{v}{1+v}}K^{\frac{1}{1+v}}\right)$}}\\
     \cline{4-5}
     & & &UCB-PO & \color{blue}{$\tilde{O}\left(\sqrt{S^2AH^3T}+\frac{S^2AH^3}{\epsilon}+\left(\frac{SAH^2}{\epsilon}\right)^{\frac{v}{1+v}}T^{\frac{1}{1+v}}\right)$} &\\
     \cline{3-6}
     & & \multirow{2}{*}{$\epsilon$-LDP} & UCB-VI & \color{blue}{$\tilde{O}\left(\sqrt{SAH^3T}  + \frac{S^2A\sqrt{H^5T}}{\epsilon} + \left(\frac{H^3SA}{\epsilon^2}\right)^{\frac{v}{2(1+v)}}T^{\frac{2+v}{2(1+v)}}\right)$}&\multirow{2}{*}{\color{blue}{$\Omega\left(\left(\frac{SA}{\epsilon^2}\right)^{\frac{v}{1+v}}K^{\frac{1}{1+v}}\right)$}}\\
     \cline{4-5}
     & & &UCB-PO & \color{blue}{$\tilde{O}\left(\sqrt{S^2AH^3T}+\frac{S^2A\sqrt{H^5T}}{\epsilon} +\left(\frac{H^3SA}{\epsilon^2}\right)^{\frac{v}{2(1+v)}}T^{\frac{2+v}{2(1+v)}}\right)$}&\\
    \hline
    \end{tabular}} 
    \caption{\textit{Summary of our results and regret comparisons for private RL. All results are in the expected regret form. For the heavy-tailed reward distribution case, we assume the $(1+v)$-th raw moment of each reward distribution is bounded by 1 for some known $v \in (0,1]$. For the bounded reward case, we assume the rewards are in $[0,1]$. Here $T=KH$ is the total number of steps, where $K$ is the total number of episodes and $H$ is the number of steps per episode. $S$ is the number of states and $A$ is the number of actions. $\epsilon \in (0,1]$ is the privacy budget. $\tilde{O}(\cdot)$ hides $poly\log(S,A,T,1/\delta)$ factors, where $\delta \in (0,1]$ is the desired confidence level. In MAB problem, $S=1,H=1$. We highlight our results in blue color.}}
    \label{tab:my_label}
\end{table*}
\section{Preliminaries}\label{sec:pre}
\subsection{MDPs with Heavy-tailed Rewards}
In a finite horizon Markov decision process (MDP), an agent needs to interact with the environment to make sequential decisions. We can formalize the problem by a tuple $(\mathcal{S},\mathcal{A},H,(P_h)_{h=1}^H,(r_h)_{h=1}^H)$, where $\mathcal{S}$ and $\mathcal{A}$ is the state and the action space with cardinality $S$ and $A$ respectively, $H \in \mathbb{N}$ is the episode length, $P_h(s^\prime|s,a)$ is the probability of transitioning to state $s^\prime$ from state $s$ provided action $a$ is taken at step $h$ and $r_h(s,a)$ is the mean of the reward distribution at step $h$. The actions are chosen following some policy $\pi=(\pi_h)_{h=1}^H$, where each $\pi_h$ is a mapping from the state space $\mathcal{S}$ into a probability distribution over the action space $\mathcal{A}$ i.e. $\pi_h(a|s) \ge 0$ and $\sum_{a\in \mathcal{A}}\pi_h(a|s)=1$ for each $s \in  \mathcal{S}$. Solving a reinforcement learning task means finding a policy $\pi$ that maximizes the long-term expected reward starting from every state $s\in \mathcal{S}$ and every step $h \in [H]$, defined as $
V_{h}^{\pi}(s):=\mathbb{E}\left[\sum_{h^{\prime}=h}^{H} r_{h^{\prime}}\left(s_{h^{\prime}}, a_{h^{\prime}}\right) \mid s_{h}=s, \pi\right],
$
where the expectation takes over the randomness of the transition kernel $P=(P_h)_{h=1}^H$ and the policy $\pi$. We call $V_{h}^{\pi}(s)$ as the \emph{value function} of a state $s$ under policy $\pi$ at step $h$. Now, defining the \emph{Q-function} of taking action $a$ in state $s$ under policy $\pi$ at step $h$ as $
Q_{h}^{\pi}(s, a):=\mathbb{E}\left[\sum_{h^{\prime}=h}^{H} r_{h^{\prime}}\left(s_{h^{\prime}}, a_{h^{\prime}}\right) \mid s_{h}=s, a_{h}=a, \pi\right],
$ we obtain $Q_{h}^{\pi}(s, a)=r_h(s,a)+\sum_{s^\prime \in \mathcal{S}}V_{h+1}^{\pi}(s^\prime)P_h(s^\prime|s,a)$ and $V_{h}^{\pi}(s)=\sum_{a \in \mathcal{A}} Q_{h}^{\pi}(s, a) \pi_{h}(a|s)$.

We call a policy $\pi^*$ optimal if it maximizes the value function of all states $s$ and steps $h$ simultaneously, and the corresponding optimal value function is denoted by $V_{h}^{*}(s)=\max_{\pi \in \Pi}V_{h}^{\pi}(s)$ for all $h\in [H]$, where $\Pi$ is the set of all non-stationary policies. The agent interacts with the environment for $K$ episodes to learn the unknown transition probabilities $P_h(s^\prime|s,a)$ and mean rewards $r_h(s,a)$, and thus, in turn, the optimal policy $\pi^*$. At each episode $k$, the agent chooses a policy $\pi^k = (\pi_h^k)_{h=1}^H$ and samples a trajectory $\left\{s_{1}^{k}, a_{1}^{k}, r_{1}^{k}, \ldots, s_{H}^{k}, a_{H}^{k}, r_{H}^{k}, s_{H+1}^{k}\right\}$ by interacting with the MDP using this policy. Here, at a given step $h$, $s_h^k$ denotes the state of the MDP, $a_h^k \sim \pi_h^k(\cdot|s_h^k)$ denotes the action taken by the agent, $r_h^k$ denotes the random reward obtained by the agent with the mean value $r_h(s_h^k,a_h^k)$ and $s_{h+1}^k \sim P_h(\cdot|s_h^k,a_h^k)$ denotes the next state. 

We consider a heavy-tailed setting in this paper where the reward distribution of each state-action pair $(s,a)$ at step $h$ only has the finite raw moment of order $1+v$ for some $v \in (0,1]$. Concretely, we assume that there is a constant $u>0$ such that at step $h$, for each state-action pair $(s,a)$ reward distribution $\mathcal{R}_h(s,a)$,
$\mathbb{E}_{X \sim \mathcal{R}_h(s,a)}[|X|^{1+v}]\le u.$
In this paper, we assume both $v$ and $u$ are known constants. Since this raw moment of $(1+v)$ for reward is finite, the expectation of reward random variable is also finite, and we denote 
$|r_h(s,a)|=|\mathbb{E}_{X \sim \mathcal{R}_h(s,a)}[X]| \le \tau.$
where $\tau$ is a known constant. 

We measure the agent's performance by using the cumulative regret accumulated over $K$ episodes, which is  defined as $
Reg(T):=\sum_{k=1}^{K}\left[V_{1}^{*}\left(s_{1}^{k}\right)-V_{1}^{\pi^{k}}\left(s_{1}^{k}\right)\right],
$
where $T=KH$ denotes the total number of steps and $s_1^k$ is the initial state. 

\subsection{DP in Episodic Reinforcement Learning}

In the episodic RL setting described above, it is natural to view each episode $k \in [K]$ as a trajectory associated with a specific user. To this end, we let $U_k=(u_1,\ldots,u_K) \in \mathcal{U}^K$ to denote a sequence of $K$ unique users participating in the private RL protocol with an RL agent $\mathcal{M}$, where $\mathcal{U}$ is the set of all users. Each user $u_k$ is identified by the reward and state responses $(r_h^k,s_{h+1}^k)_{h=1}^H$ she/he gives to the action $(a_h^k)_{h=1}^H$ chosen by the agent. We let $\mathcal{M}(U_K)=(a_1^1,\ldots,a_H^K) \in \mathcal{A}^{KH}$ to denote the set of all actions chosen by the agent $\mathcal{M}$ when interacting with the user sequence $U_K$ and let $\mathcal{M}_{-k}\left(U_{K}\right):=\mathcal{M}\left(U_{K}\right) \backslash\left(a_{h}^{k}\right)_{h=1}^{H}$ to denote all the actions chosen by the agent $\mathcal{M}$ excluding those recommended to $u_k$.



\begin{definition}[Joint Differential Privacy \cite{kearns2014mechanism}]
For any $\epsilon \ge 0$, a mechanism $\mathcal{M}:\mathcal{U}^K \rightarrow \mathcal{A}^{KH}$ is $\epsilon$-joint differential privacy (JDP) if for all $k \in [K]$, for all user sequences $U_K,U_K^\prime \in \mathcal{U}^K$ differing only on the $k$-th user and for all set of actions $\mathcal{A}_{-k} \subset \mathcal{A}^{(K-1)H}$ given to all but the $k$-th user
$
\mathbb{P}\left[\mathcal{M}_{-k}\left(U_{K}\right) \in \mathcal{A}_{-k}\right] \leq \exp (\varepsilon) \mathbb{P}\left[\mathcal{M}_{-k}\left(U_{K}^{\prime}\right) \in \mathcal{A}_{-k}\right].
$
\end{definition}

Local differential privacy is a more user-friendly model since it requires protecting each user's data $X=(s_h^k,a_h^k,r_h^k,s_{h+1}^k)_{h=1}^H$ before collection. We define local differential privacy for finite-horizon RL as follows:

\begin{definition}[Local Differential Privacy \cite{duchi2013local}] For any $\epsilon \ge 0$, a mechanism $\mathcal{M}^\prime$ is $\epsilon$-local differentially private (LDP) if for all trajectories $X, X^\prime \in \mathcal{X}$ and for all possible subsets $\mathcal{E}_{0} \subset\left\{\mathcal{M}^{\prime}(X) \mid X \in \mathcal{X}\right\}$ we have 
$
\mathbb{P}\left[\mathcal{M}^{\prime}(X) \in \mathcal{E}_{0}\right] \leq \exp (\varepsilon) \mathbb{P}\left[\mathcal{M}^{\prime}\left(X^{\prime}\right) \in \mathcal{E}_{0}\right]. 
$
\end{definition}


We introduce some notations for later analysis. We denote  the number of times that the agent has visited the state-action pair $(s,a)$ at step $h$ before episode $k$ as $N_h^k(s,a):=\sum_{k^\prime=1}^{k-1}\mathbbm{I}\{s_h^{k^\prime}=s,a_h^{k^\prime}=a\}$. Similarly, $N_h^k(s,a,s^\prime):=\sum_{k^\prime=1}^{k-1}\mathbbm{I}\{s_h^{k^\prime}=s,a_h^{k^\prime}=a,s_{h+1}^{k^\prime}=s^\prime\}$ denotes the count of going to state $s^\prime$ from $s$ upon playing action $a$ at step $h$ before episode $k$. Finally, we denotes the total \textbf{truncated} rewards obtained by taking action $a$ on state $s$ and  at step $h$ before episode $k$ as
\vspace{-1.5ex}{\footnotesize
\begin{equation}
\vspace{-1ex}
\label{trunctedR}
    R_h^k(s,a)=\sum_{k^\prime=1}^{k-1}\mathbbm{I}\{s_h^{k^\prime}=s,a_h^{k^\prime}=a,|r_h^{k^\prime}| \le B_{N_h^{k^\prime}(s,a)}\}r_h^{k^\prime},
\end{equation}}
where $B_{N_h^{k^\prime}(s,a)}$ is truncation threshold which is a non-decreasing function of ${N_h^{k^\prime}(s,a)}$ and to be set later. In the non-private case, these counters are sufficient to find estimates of the transition kernels $(P_h)_h$ and mean reward functions $(r_h)_h$ to design policy $(\pi_h^k)_h$ for episode $k$ for model-based MDP by using the table lookup model \cite{agarwal2019reinforcement}. However, in the private case, the challenge is that these counters depend on users’ states and reward responses to suggest further actions, which are considered as sensitive information. Therefore, we  must release these counts through some privacy-preserving mechanism namely {\bf PRIVATIZER} on which the learning agent would rely. To this end, we let $\Tilde{N}_h^k(s,a)$, $\Tilde{R}_h^k(s,a)$, and $\Tilde{N}_h^k(s,a,s^\prime)$ to denote the \textbf{privatized} version of $N_h^k(s,a)$, $R_h^k(s,a)$, and $N_h^k(s,a,s^\prime)$, respectively.

Now we make a general assumption on the counts released by the PRIVATIZER, which roughly means that the errors of private counts w.r.t the actual ones are bounded by some terms with high probability. Later on, we will show the specific PRIVATIZERs we use will automatically satisfy such an assumption.

\begin{assumption}[Properties of private counts]
\label{Assum1}
For any $\epsilon>0$ and $\delta \in (0,1]$, there exist functions $E_{\epsilon,\delta,1},E_{\epsilon,\delta,k,2},E_{\epsilon,\delta,3}>0$ such that with probability at least $1-\delta$, uniformly over all $(s,a,h,k)$, the private counts returned by the PRIVATIZER (both LOCAL and CENTRAL) satisfy: (i)$|\Tilde{N}_h^k(s,a)-N_h^k(s,a)| \le E_{\epsilon,\delta,1}$, (ii) $|\Tilde{R}_h^k(s,a)-R_h^k(s,a)| \le E_{\epsilon,\delta,k,2}$, and (iii)$|\Tilde{N}_h^k(s,a,s^\prime)-N_h^k(s,a,s^\prime)| \le E_{\epsilon,\delta,3}.$
\end{assumption}
Based on the above, then we define the private mean empirical rewards and private empirical transition probabilities  for all $(s,a,h,k)$ as 
{\footnotesize
\begin{equation}\label{PrivateMean}
\begin{aligned}
    \widetilde{r}_{h}^{k}(s,\! a)\! =\!\frac{\widetilde{R}_{h}^{k}(s, a)}{ 1 \!\vee\! ( \widetilde{N}_{h}^{k}(s, a)\!+\!E_{\varepsilon, \delta, 1\!})}, \!
\widetilde{P}_{h}^{k}\!(s^{\prime}\!|\!s,\! a) \!=\!\frac{\widetilde{N}_{h}^{k}\left(s, a, s^{\prime}\right)}{\! 1 \!\vee\! (\widetilde{N}_{h}^{k}(s, a)\!+\!E_{\varepsilon, \delta, 1}\!)}.
\end{aligned}
\end{equation}}
We refer the readers to Table \ref{tab:notation} in Appendix for all the above notations for convenience.

\section{Private Value-iteration for Heavy-tailed Rewards}\label{Sec:VI}

In the standard non-private RL setting, a straightforward way to get the optimal policy is to find the optimal value function, which can be determined by a simple iterative algorithm called \textit{value iteration} (VI) that has been shown to converge to the correct $V^*$ \cite{bellman1957dynamic,beutler1989dynamic}. Based on \textit{Upper Confidence Bound} (UCB) philosophy, the non-private UCB-VI method is proposed by \citet{azar2017minimax}, which takes some value-aware concentration results as the exploration bonus. Due to its simplicity, there are several private valued-based RL algorithms for private RL with bounded rewards \cite{chowdhury2021differentially,vietri2020private,garcelon2021local}. However, there is no UCB-VI algorithm for private RL with heavy-tailed rewards. 
In this section, we will start by designing value iteration-based algorithms for our problem in both JDP and LDP models.

Our general framework, Private-Heavy-UCBVI algorithm, is presented in Algorithm \ref{UCBVI}. The key idea of our algorithm is that we first establish $SAH$  parallel private counters for each tuple $(s,a,h)$ and $S^2AH$ parallel private counters for each tuple $(s,a,h,s^\prime)$, by using the (adaptive) Tree-based mechanism in JDP or the Laplacian mechanism in LDP to guarantee privacy. Based on these private counts, we design our algorithm by using a private and robust version of UCB at steps $7-9$ where the UCB bonus term is from the concentration results for our private estimators in Lemma \ref{ConcenPrivateVI} or Lemma \ref{ConcenPrivateVI_LDP}. At steps $9-10$, we compute private versions of the $Q$-function  and  the value function by using the optimistic Bellman recursion. Then a greedy policy $\pi^k$ is obtained by maximizing the private estimated $Q$-function at step 12. After rolling out the trajectory by acting the policy $\pi^k$, we truncate the rewards by an adaptive and non-decreasing truncation threshold $B_{N_h^k(s,a)}$ and translate all non-private statistics into private ones. 

\begin{algorithm}[htb]
    \caption{Private-Heavy-UCBVI}
    \label{UCBVI}
    \begin{algorithmic}[1]
        \REQUIRE Number of episodes $K$, time horizon $H$, privacy level $\epsilon >0$, reward mean bound $\tau$, a PRIVATIZER (Local or Central) and confidence level $\delta \in (0,1]$
        \STATE Initialize private counts $\Tilde{R}_h^1(s,a)=0,\Tilde{N}_h^1(s,a)=0,\Tilde{N}_h^1(s,a,s^\prime)=0$ for all $(s,a,s^\prime,h)$
        \STATE Set precision levels $E_{\epsilon,\delta,1},E_{\epsilon,\delta,k,2},E_{\epsilon,\delta,3}$ of the PRIVATIZER
        \FOR {$k=1,\dots,K$}
           \STATE Initialize private value estimates: $\Tilde{V}_{H+1}^k(s)=0$
           \FOR {$h=H,H-1,\dots,1$}
              \STATE Compute $\widetilde{r}_{h}^{k}(s, a)$ and $\widetilde{P}_{h}^{k}\left(s^{\prime}|s, a\right)$ for $\forall (s,a,s^\prime)$ as in \eqref{PrivateMean} using the private counts
              \STATE Set exploration bonus using Lemma \ref{ConcenPrivateVI}: $\beta_h^k(s,a)=\beta_h^{k,r}(s,a)+\beta_h^{k,pv}(s,a) \forall (s,a)$
              \STATE Compute: $\forall (s,a),$ $ 
              \Tilde{Q}_h^k(s,a)=
              \max\{-(H-h+1)\tau,\min\{(H-h+1)\tau, 
              \widetilde{r}_{h}^{k}(s, a)+\sum_{s^\prime \in \mathcal{S}}\Tilde{V}_{h+1}^k(s^\prime)\widetilde{P}_{h}^{k}\left(s^{\prime}|s, a\right)+\beta_h^k(s,a) \}\}
              $
             \STATE Compute private value function: $\forall s, \Tilde{V}_{h}^k(s)=\max_{a \in \mathcal{A}}\Tilde{Q}_h^k(s,a)$ 
           \ENDFOR
           \STATE Compute policy: $\forall (s,h), \pi_h^k(s)=\arg\max_{a \in \mathcal{A}}\Tilde{Q}_h^k(s,a)$
           \STATE Roll out a trajectory $(s_1^k,a_1^k,r_1^k,\dots,s_{H+1}^k)$ by acting the policy $\pi^k=(\pi_h^k)_{h=1}^H$
           \STATE Receive private counts $\Tilde{N}_h^{k+1}(s,a),\Tilde{N}_h^{k+1}(s,a,s^\prime)$ and truncated private        rewards summation $\Tilde{R}_h^{k+1}(s,a)$
        \ENDFOR
    \end{algorithmic}
\end{algorithm}

\subsection{Heavy-tailed Value-iteration in JDP}
As we mentioned earlier, different instances of PRIVATIZER correspond to different privacy models. For JDP, we will use CENTRAL-PRIVATIZER, which runs an adaptive version of the binary-tree mechanism  for each count $N_h^k(s,a),R_h^k(s,a),N_h^k(s,a,s^\prime)$, i.e., it uses $2SAH+S^2AH$ counters in total. In Algorithm \ref{Algo-Tree} of Appendix we provide the details of the mechanism. 
In total,  based on the composition theorem of DP we have the following result. 

\begin{lemma}[Privacy and Utility Guarantees of CENTRAL-PRIVATIZER]
\label{centralError}
For any $\epsilon>0$,   the CENTRAL-PRIVATIZER we mentioned above with Laplace noise $\text{Lap}\left(\frac{6B_k H\log K}{\epsilon}\right)$ for $\tilde{R}_h^k(s,a)$ and $\text{Lap}\left(\frac{3H\log K}{\epsilon}\right)$ for $\tilde{N}_h^k(s,a)$ and $\tilde{N}_h^k(s,a,s^\prime)$ is $\epsilon$-DP. Furthermore, for any $\delta \in (0,1]$, it satisfies Assumption \ref{Assum1} with $E_{\epsilon, \delta, 1}=\frac{3 H \log ^{1.5} K \ln \frac{3 S A T}{ \delta}}{\epsilon} , \quad E_{\epsilon, \delta,k, 2}= \frac{6B_k H \log ^{1.5} K \ln \frac{3 S A T}{ \delta}}{\epsilon} , \quad E_{\epsilon, \delta, 3}=\frac{3 H\log ^{1.5} K \ln \frac{3 S^{2} A T }{\delta}}{\epsilon} .$
\end{lemma}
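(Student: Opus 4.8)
The plan is to prove the two claims separately: first the $\epsilon$-DP guarantee through a sensitivity-and-composition argument, then the high-probability error bounds of Assumption \ref{Assum1} through a concentration argument for sums of Laplace noise.

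For privacy, I would first recall that the CENTRAL-PRIVATIZER maintains one binary-tree counter for each of the $2SAH+S^2AH$ statistics and, when releasing a running count, perturbs every internal partial-sum node with fresh Laplace noise. The two structural facts I need are: (i) in a tree over $K$ episodes each leaf (episode) lies on a root-to-leaf path of at most $\lceil\log K\rceil$ partial-sum nodes, and (ii) a single user's trajectory touches exactly one state-action (resp. state-action-next-state) pair at each of the $H$ steps. Fact (ii) yields parallel composition across the $SA$ (resp. $S^2A$) counters at a fixed step $h$ — the user affects only one — together with sequential composition across the $H$ steps, so only $H$ trees are touched rather than $SAH$; fact (i) says that within each affected tree the user perturbs at most $\log K$ node sums. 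Hence the overall $\ell_1$-sensitivity of the released node-vector under one user change is $H\log K\cdot\Delta$, where $\Delta=1$ for the $N$- and $N(\cdot,\cdot,\cdot)$-counters and $\Delta=2B_k$ for the reward counter (a truncated reward lies in $[-B_k,B_k]$). Splitting the budget equally, $\epsilon/3$ per statistic type by basic composition, and calibrating the Laplace mechanism to this sensitivity produces exactly the scales $\tfrac{3H\log K}{\epsilon}$ and $\tfrac{6B_kH\log K}{\epsilon}$ claimed. The one delicate point is the \emph{adaptive} threshold: since $B_k$ is non-decreasing and data-dependent, I would invoke the adaptive version of the tree mechanism so that the node created at episode $k$ is calibrated to the current $B_k$, and argue privacy is preserved because the thresholds form a predictable, filtration-adapted sequence.

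For utility, note that each private count equals the true count plus the sum of at most $\lceil\log K\rceil$ of the independent node noises used to answer that prefix-sum query. I would therefore apply a Bernstein-type tail bound for a sum of $m\le\log K$ independent $\text{Lap}(b)$ variables, which gives a deviation of order $b\sqrt{\log K}\,\ln(1/\delta')$ with probability at least $1-\delta'$, and then take a union bound over all tuples $(s,a,h,k)$ — there are at most $SAT$ of them for the $N$- and $R$-counters and at most $S^2AT$ for the transition counter. Choosing $\delta'$ so the total failure probability is $\delta$ turns $\ln(1/\delta')$ into $\ln(3SAT/\delta)$ and $\ln(3S^2AT/\delta)$ respectively, and substituting $b=\tfrac{3H\log K}{\epsilon}$ (resp. $\tfrac{6B_kH\log K}{\epsilon}$) yields the three error functions $E_{\epsilon,\delta,1},E_{\epsilon,\delta,k,2},E_{\epsilon,\delta,3}$ in the statement.

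I expect the main obstacle to be the two separate $\log K$ factors in the utility bound together with the adaptive-threshold issue in privacy. Concretely, one $\log K$ enters the noise scale $b$ through the tree depth (privacy side), while the extra $\sqrt{\log K}$ giving $\log^{1.5}K$ must come out of the concentration for the \emph{sum} of $\log K$ node noises; controlling the interplay between the sub-Gaussian and sub-exponential regimes of the Laplace-sum tail, and fixing constants so the union bound closes at the stated $\ln(3SAT/\delta)$ form, is the finicky part. For privacy, the nontrivial step is justifying that releasing data-dependent truncation levels $B_k$ through the adaptive tree leaks no extra information, for which I would rely on the standard argument that the tree mechanism remains private when the per-node sensitivities are fixed by the already-released (hence public) prefix.
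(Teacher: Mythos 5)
Your proposal is correct and takes essentially the same route as the paper's proof: privacy follows from the adaptive tree mechanism with per-node noise $\text{Lap}(2B_k/\epsilon')$, composed over the at most $H\log K$ noisy p-sums a single user can affect (one counter per step $h$ by parallel composition, $\log K$ nodes per counter) and then over the three statistic types with budget $\epsilon/3$ each; utility follows, as you say, from a tail bound on a sum of at most $\log K$ independent Laplace variables (giving the extra $\sqrt{\log K}$ on top of the $\log K$ in the noise scale, hence $\log^{1.5}K$) together with a union bound over the $SAT$ (resp.\ $S^2AT$) tuples, matching $E_{\epsilon,\delta,1}$, $E_{\epsilon,\delta,k,2}$, $E_{\epsilon,\delta,3}$ exactly. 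The one point you over-complicate is the adaptivity of $B_k$: the calibration scale is a deterministic, non-decreasing function of the episode index (the data-dependent truncation level $B_{N_h^j(s,a)}$ is always dominated by $B_j\le B_k$), so no predictable-sequence or public-prefix argument is needed --- the per-node sensitivities are fixed in advance.
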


\textbf{Failure of using total $\ell_1$ distance of all streams to get the counter error of rewards.}  It is notable that our PRIVATIZER is quite different from the previous methods in \citet{vietri2020private,chowdhury2021differentially}. If we adopt their methods then we will get larger errors. In detail, if we use their methods, then we need  to allocate or add the same noise for each episode. For each counter, it will take  the data stream $\sigma_h(s,a) \in [-B_K,B_K]^K$ as input since the $B_k$ is non-decreasing on $k$,  where the $j$-th entry 
\begin{equation}\label{eq:sigma}
    \sigma_h^j(s,a) :=\mathbbm{I}\{s_h^{j}=s,a_h^{j}=a,|r_h^{j}| \le B_{N_h^{j}(s,a)}\}r_h^{j}
\end{equation}
 denotes whether the pair $(s,a)$ is encountered or not at step $h$ of episode $j$
and if the pair is encountered, we will take the truncated reward. Consider its one adjacent data stream $\sigma_h^\prime(s,a) \in [-B_K,B_K]^K$ which differs from $\sigma_h(s,a)$ only in one entry, then we will have $\|\sigma_h(s,a)-\sigma_h^\prime(s,a)\|_1 \le 2B_K$. Furthermore, since at every episode at most $H$ state-action pairs are encountered, we obtain 
$
\sum_{(s, a, h) \in \mathcal{S} \times \mathcal{A} \times[H]}\left\|\sigma_{h}(s, a)-\sigma_{h}^{\prime}(s, a)\right\|_{1} \leq 2HB_K. 
$
Then we will get in each episode $k \in [K]$, we need to add noise $\text{Lap}\left(\frac{6B_K H\log K}{\epsilon}\right)$ which will make us get a loose error bound for reward count as in Lemma \ref{centralError} we just need $\text{Lap}(\frac{6B_k H\log K}{\epsilon})$ with $k\leq K$.

Based on the Billboard lemma in \citet{hsu2016private}, the composition of all $K$ episodes satisfies $\epsilon$-JDP if the policy $\pi^k$ is computed with an $\epsilon$-DP mechanism for all $k \in [K]$. 
\begin{theorem}
\label{thm:JDPpriv}
    For any $\epsilon> 0$, Algorithm \ref{UCBVI} is $\epsilon$-JDP if we use the CENTRAL-PRIVATIZER in Lemma \ref{centralError}. 
\end{theorem}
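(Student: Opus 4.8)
The plan is to reduce the joint-DP guarantee to the $\epsilon$-DP guarantee of the privatized counts established in Lemma \ref{centralError}, using the Billboard lemma of \citet{hsu2016private} as the bridge. First I would identify the \emph{billboard} as the entire collection of privatized statistics $\{\tilde{N}_h^k(s,a),\tilde{R}_h^k(s,a),\tilde{N}_h^k(s,a,s^\prime)\}$ released over all episodes $k \in [K]$, steps $h \in [H]$, and tuples $(s,a,s^\prime)$. By Lemma \ref{centralError}, the CENTRAL-PRIVATIZER that produces this collection --- running the adaptive binary-tree mechanism with the stated Laplace noise on each of the $2SAH+S^2AH$ counters --- is $\epsilon$-DP as a function of the user sequence $U_K$, where two sequences are adjacent if they differ in a single user (one episode's entire trajectory). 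The underlying sensitivity fact, already exploited in that lemma, is that changing one user alters at most $H$ of the per-tuple streams, since each episode visits at most $H$ state-action pairs, so the composition across all counters and across the $H$ within-episode steps yields $\epsilon$-DP for the whole billboard.

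Next I would argue that the action profile is a post-processing of the billboard together with each user's own states. Concretely, at episode $k$ Algorithm \ref{UCBVI} computes $\tilde{r}_h^k,\tilde{P}_h^k$, the bonuses $\beta_h^k$, the optimistic $\tilde{Q}_h^k$ and $\tilde{V}_h^k$, and hence the greedy policy $\pi^k$, entirely from the current privatized counts and the agent's internal randomness (which is independent of the data). The action $a_h^k=\pi_h^k(s_h^k)$ delivered to user $k$ then depends on user $k$'s data only through the billboard and user $k$'s own state; in particular, every action delivered to a user $j\neq k$ is a function of the billboard and user $j$'s own data alone.

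Finally I would invoke the Billboard lemma: if a common message computed by an $\epsilon$-DP mechanism is combined with each user's own private data to produce that user's output, then the joint mechanism is $\epsilon$-JDP. Applying this with the privatized counts as the billboard and $\mathcal{M}_{-k}(U_K)=\mathcal{M}(U_K)\setminus(a_h^k)_{h=1}^H$ as the portion observed by all but user $k$ yields exactly the definition of $\epsilon$-JDP, completing the argument.

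The step I expect to require the most care is verifying that the $\epsilon$-DP guarantee of the counts survives the adaptivity of the protocol: the inputs fed into the tree mechanisms at episode $k$ --- which state-action pairs are visited, and the truncation threshold $B_{N_h^k(s,a)}$ --- are themselves generated from previously released private counts. One must therefore confirm that Lemma \ref{centralError} holds in the adaptive continual-release sense, i.e., that the tree mechanism remains $\epsilon$-DP against an adversary who chooses the incoming stream based on earlier outputs, so that the composition across counters and the post-processing into policies are all legitimate. Once this adaptive DP of the billboard is secured, the Billboard lemma delivers JDP with no further estimation, and the remainder of the proof is routine.
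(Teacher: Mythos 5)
Your proposal is correct and follows essentially the same route as the paper's proof: establish $\epsilon$-DP of the CENTRAL-PRIVATIZER via Lemma~\ref{centralError}, note that the policies $(\pi^k)_k$ are a post-processing of the privatized counts, and invoke the Billboard lemma of \citet{hsu2016private} to conclude $\epsilon$-JDP. The adaptivity concern you rightly flag is precisely what the paper's adaptive tree-based mechanism (Lemma~\ref{lemma-tree}) is designed to handle, so no additional argument is needed.
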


In the following, we will show the regret bound of our algorithm in the JDP model. 
\begin{theorem}[Regret Bound for Private-Heavy-UCBVI in JDP]
\label{thm:RegretVI}
For any $\epsilon \in (0,1]$ and $\delta \in (0,1]$ and take $B_n=\left(\frac{\epsilon u n}{H \log^{1.5}K \log (3SAT/\delta)}\right)^{\frac{1}{1+v}}$ in \eqref{trunctedR}. Then  if we use the CENTRAL-PRIVATIZER in Lemma \ref{centralError}, with probability $1-\delta$ the regret of Private-Heavy-UCBVI is upper bounded by
{$$  \tilde{O}\left(\sqrt{SAH^3T} + \frac{S^2AH^3}{\epsilon} + u^{\frac{1}{1+v}}\left(\frac{SAH^2}{\epsilon}\right)^{\frac{v}{1+v}}T^{\frac{1}{1+v}}\right).$$}
\end{theorem}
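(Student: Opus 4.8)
The plan is to run the standard optimism-based regret analysis for UCB-VI, adapted to absorb simultaneously the privacy noise of the CENTRAL-PRIVATIZER and the bias/variance introduced by truncating the heavy-tailed rewards. First I would fix a high-probability good event $\mathcal{G}$: conditioning on the utility guarantee of Lemma \ref{centralError} (so that Assumption \ref{Assum1} holds), I invoke Lemma \ref{ConcenPrivateVI} to ensure that, uniformly over all $(s,a,h,k)$, the private reward estimate obeys $|\widetilde r_h^k(s,a)-r_h(s,a)|\le \beta_h^{k,r}(s,a)$ and the private transition estimate obeys $|\sum_{s'}(\widetilde P_h^k-P_h)(s'|s,a)V_{h+1}^*(s')|\le \beta_h^{k,pv}(s,a)$. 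The reward bonus decomposes into a truncated-sample fluctuation of order $\sqrt{uB_n^{1-v}/N}$, a truncation bias of order $u/B_n^{v}$, and a privacy error of order $B_nH\log^{1.5}K\ln(SAT/\delta)/(\epsilon N)$, where the effective denominator is the true count $N$, related to the private one through Assumption \ref{Assum1}(i) via $\widetilde N_h^k+E_{\epsilon,\delta,1}\ge N_h^k$. The key algebraic point is that the stated $B_n=(\epsilon un/(H\log^{1.5}K\log(3SAT/\delta)))^{1/(1+v)}$ is exactly the threshold equating the bias and the privacy term, each then collapsing to $u^{1/(1+v)}(H\log^{1.5}K\log(\cdot)/(\epsilon n))^{v/(1+v)}\propto n^{-v/(1+v)}$; the statistical fluctuation scales as $n^{-v/(1+v)}$ with an extra $\epsilon^{(1-v)/(2(1+v))}\le 1$ factor and is dominated by this rate.

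Working on $\mathcal{G}$, I next establish optimism by backward induction on $h$: the clipped value $\widetilde V_h^k(s)\ge V_h^*(s)$ for all $s,k,h$, where clipping to $[-(H-h+1)\tau,(H-h+1)\tau]$ is consistent with $|V_h^*|\le(H-h+1)\tau$ and hence preserves optimism while keeping $\widetilde V$ bounded. Optimism yields $V_1^*(s_1^k)-V_1^{\pi^k}(s_1^k)\le \widetilde V_1^k(s_1^k)-V_1^{\pi^k}(s_1^k)=:\delta_1^k$. I then unroll the per-step surplus $\delta_h^k=\widetilde V_h^k(s_h^k)-V_h^{\pi^k}(s_h^k)$ through the two Bellman recursions to obtain $\delta_h^k\le 2\beta_h^k(s_h^k,a_h^k)+\delta_{h+1}^k+\xi_h^k+\text{(lower order)}$, where $\xi_h^k=(P_h(\cdot|s_h^k,a_h^k)-e_{s_{h+1}^k})(\widetilde V_{h+1}^k-V_{h+1}^{\pi^k})$ is a martingale difference bounded by $2H\tau$ (with $e_{s_{h+1}^k}$ the indicator at the realized next state), and the lower-order piece absorbs the cross term $(\widetilde P_h^k-P_h)(\widetilde V_{h+1}^k-V_{h+1}^*)$ and the $E_{\epsilon,\delta,1}$ denominator correction.

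Summing over $k$ and $h$: the martingale sum $\sum_{k,h}\xi_h^k$ is $\tilde{O}(H\sqrt{T})$ by Azuma–Hoeffding and is lower order. For the transition bonus I use a pigeonhole/Cauchy–Schwarz argument: since $(\widehat P_h^k-P_h)V_{h+1}^*$ concentrates as a scalar at rate $H\sqrt{\log/N}$ (no dimension factor), $\sum_{k,h}H\sqrt{1/N_h^k}=\tilde{O}(\sqrt{SAH^3T})$, while the privacy part $\sum_{k,h}(SH^2/\epsilon)/N_h^k=\tilde{O}(S^2AH^3/\epsilon)$ (the $\sum 1/N$ produces a $\log$ per counter, and the $S$ comes from summing the per-coordinate transition error over $s'$). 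For the reward bonus, after collapsing to the $u^{1/(1+v)}(H/(\epsilon n))^{v/(1+v)}$ rate, $\sum_{n=1}^{N_h^{K+1}(s,a)}n^{-v/(1+v)}\lesssim N_h^{K+1}(s,a)^{1/(1+v)}$; applying concavity over the $SA$ pairs at each $h$, $\sum_{s,a}N_h^{K+1}(s,a)^{1/(1+v)}\le (SA)^{v/(1+v)}K^{1/(1+v)}$, and summing over $h$ while re-inserting the $H/\epsilon$ factors gives exactly $\tilde{O}(u^{1/(1+v)}(SAH^2/\epsilon)^{v/(1+v)}T^{1/(1+v)})$. Adding the three contributions yields the claimed bound.

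The main obstacle is the reward-bonus analysis: (i) proving a valid self-normalized concentration inequality for the truncated, privatized reward sum when the truncation level $B_{N_h^k(s,a)}$ is itself adapted to the random, privacy-perturbed visitation count, which forces a careful union bound over all possible counts so that $\beta_h^{k,r}$ holds uniformly on $\mathcal{G}$; and (ii) verifying that the adaptive choice of $B_n$ genuinely equalizes the bias and privacy contributions so that the telescoped power-sum produces the sharp exponents $T^{1/(1+v)}$ and $(SA)^{v/(1+v)}$ rather than looser ones. By contrast, the optimism induction and the transition-bonus bookkeeping are routine extensions of the bounded-reward analysis of \citet{chowdhury2021differentially,azar2017minimax}.
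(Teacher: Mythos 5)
Your proposal is correct and follows the paper's own proof essentially step for step: the same good event built from Lemma \ref{centralError} and Lemma \ref{ConcenPrivateVI} (with the identical three-way balance of sample fluctuation, truncation bias, and privacy noise fixing $B_n$, and the fluctuation dominated via $\epsilon \le 1$), the same clipped optimism induction, the same decomposition into reward/transition bonuses plus a $2H\tau$-bounded martingale handled by Azuma--Hoeffding plus $O(1/N)$-type privacy corrections, and the same pigeonhole-plus-H\"older summations producing the three stated terms (your per-step-$h$ concavity argument gives exactly the same exponents as the paper's joint H\"older over $(h,s,a)$, since both collapse to $(SA/\epsilon)^{\frac{v}{1+v}}H^{\frac{1+2v}{1+v}}K^{\frac{1}{1+v}}$). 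The one compressed point is your recursion $\delta_h^k \le 2\beta_h^k + \delta_{h+1}^k + \xi_h^k + \text{(lower order)}$: in the paper the cross term $(\widetilde{P}_h^k - P_h)(\widetilde{V}_{h+1}^k - V_{h+1}^*)$ is not additively lower order but is split, via the per-coordinate Bernstein bound and $\sqrt{xy}\le x+y$, into $\frac{1}{H}\left[P_h(\widetilde{V}_{h+1}^k - V_{h+1}^{\pi_k})\right]$ plus genuinely lower-order $O(\tau S H^2 \ln(\cdot)/N)$ and $O(\tau SH(E_{\epsilon,\delta,1}+E_{\epsilon,\delta,3})/N)$ terms, so the recursion carries a multiplicative $(1+1/H)$ factor costing only $(1+1/H)^H \le 3$ after unrolling --- whereas a direct Cauchy--Schwarz absorption of that cross term would yield a $\sqrt{S^2AH^3T}$ contribution and lose the leading $\sqrt{SAH^3T}$ rate; since you delegate this to the routine bookkeeping of \citet{azar2017minimax,chowdhury2021differentially}, where exactly this device appears, this is a gloss rather than a gap.
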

\begin{remark}
In the above bound, there are three terms. 
The first one corresponds to regret due to the uncertainty in transition probabilities. The second term comes from the estimation error of private counts and the third term is caused by the heavy-tailed nature of rewards. \citet{tao2022optimal} gives a regret rate of $O\left( \left(\frac{A\log T}{\epsilon}\right)^{\frac{v}{1+v}}T^{\frac{1}{1+v}}\right)$ for private heavy-tailed MAB  in the DP model where there are no transition probabilities and counts and $S=H=1$. Thus, our regret in JDP matches their bound in DP. 

In the non-private case of our problem, \citet{zhuang2021no} establishes a regret bound of $
    \tilde{O}(\sqrt{H^3SAT}+H^2(SA)^{\frac{v}{1+v}}T^{\frac{1}{1+v}}+\sqrt{H^9S^3A^3}
    +\sqrt{H^{\frac{1+4v}{v}}S^2A^2}+H^{\frac{1+3v}{v}}\sqrt{S^3A^3})$ 
by proposing the Heavy-Q-Learning with UCB-Bernstein algorithm. Compared with it, we can see the price of privacy is an additional factor of $\frac{1}{\epsilon}$ for private counts and $\left(\frac{1}{\epsilon}\right)^{\frac{v}{1+v}}$ for private estimation of heavy-tailed reward distributions. Besides, for the terms related to heavy-tailed distributions estimation 
, the dependency on $H$ is better in our bound by a factor of $H^{\frac{2}{1+v}}$ ($H^\frac{2v}{1+v}$ v.s. $H^2$). 

Compared with the result of $\tilde{O}\left(\sqrt{SAH^3T} + \frac{S^2AH^3}{\epsilon}\right)$ given by \citet{chowdhury2021differentially} for private MDPs with bounded rewards in JDP, we can see  there is an additional term due to the assumption of the heavy-tailed reward in our problem. This is because \citet{chowdhury2021differentially} assumes the rewards are in $[0,1]$ so that these rewards have the same sensitivity as counts. Hence, the regret corresponding to estimating the mean of heavy-tailed rewards is also bounded by ${S^2AH^3}/{\epsilon}$. Compared with the regret bound of $\widetilde{O}(\sqrt{S A H^2 T}+S^2 A H^3 / \epsilon)$ given by \citet{qiao2023near} under JDP by adopting DP-UCBVI for bounded rewards, the difference of the first term comes from the type of bonus since \citet{qiao2023near} adopts Bernstein type but we use Hoeffding type. And the additional third term in our bound is due to the heavy-tailed nature of the rewards.
    
\end{remark}

\subsection{Heavy-tailed Value-iteration in LDP}
Next, we consider the LDP case. We first introduce its corresponding PRIVATIZER, namely LOCAL-PRIVATIZER. For each episode $k$,  LOCAL-PRIVATIZER releases the private counts by injecting Laplace noise into each data stream. At each episode $j$, given privacy parameter $\epsilon'>0$, LOCAL-PRIVATIZER perturbs each entry $\sigma_h^j(s,a)$ of the data stream $\sigma_h(s,a)$ with an independent Laplace noise $\text{Lap}(\frac{1}{\epsilon'})$, i.e.,  $\tilde{\sigma}_h^j(s,a)=\sigma_h^j(s,a)+\text{Lap}(\frac{1}{\epsilon'})$, where $\sigma_h^j(s,a)$ is in (\ref{eq:sigma}). The private counts for the $k$-th episode are computed as $\widetilde{ N}_h^k(s,a)=\sum_{j=1}^{k-1}\widetilde{\sigma}_h^j(s,a)$. The private counts corresponding to empirical rewards $R_h^k(s,a)$ and state transitions $N_h^k(s,a,s')$ are computed similarly. On the one hand, for $N_h^k(s,a)$ with a fixed episode $k \in [K]$, we run $SAH$ parallel private counters, one for each tuple $(s,a,h)$. Thus, from the DP parallel Lemma (Lemma \ref{parallel}), if we want to guarantee the privacy mechanism satisfying $\epsilon/3$-LDP for all $\tilde{N}_h^k(s,a)$, we just need to make every  $\tilde{N}_h^k(s,a)$ be  $\epsilon/3$-LDP. On the other hand, at each episode at most $H$ state-action pairs are encountered, so according to the composition theorem (Lemma \ref{compositionThm}), we need to guarantee $\widetilde{\sigma}_h^j(s,a)$  is $\frac{\epsilon}{3H}$-LDP at each step $h$.  Then from the Laplacian mechanism, we use independent noise $\text{Lap}(\frac{3H}{\epsilon})$ in $N_h^k(s,a)$ and   $N_h^k(s,a,s')$. Similarly we set independent noise as $\text{Lap}(\frac{6HB_j}{\epsilon})$ to protect the privacy of $R_h^k(s,a)$. Based on the concentration property of Laplacian distributions (Lemma \ref{alemma5}), we can get the following error bounds for counts under LOCAL-PRIVATIZER.

\begin{lemma}[Privacy and utility guarantees of LOCAL-PRIVATIZER]
\label{ErrorLDP}
For any $\epsilon \in (0,1]$, the LOCAL-PRIVATIZER above is $\epsilon$-LDP. Furthermore, for any $\delta \in (0,1]$, it satisfies Assumption \ref{Assum1} with 
$
    E_{\epsilon,\delta,1}=\frac{6H}{\epsilon}\sqrt{K \log\frac{6SAT}{\delta}},
    E_{\epsilon,\delta,k,2}=\frac{12HB_k}{\epsilon}\sqrt{k \log\frac{6SAT}{\delta}},
    E_{\epsilon,\delta,3}=\frac{6H}{\epsilon}\sqrt{K \log\frac{6S^2AT}{\delta}}
$
\end{lemma}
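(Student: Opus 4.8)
The plan is to prove the two assertions separately: first the $\epsilon$-LDP guarantee, then the three high-probability error bounds, reusing exactly the composition structure announced in the paragraph preceding the lemma. For privacy, I would argue at the level of a single user's (episode's) release and track the budget through three composition layers. Fix episode $j$. At each step $h$ the user reveals, for every $(s,a)$, the perturbed entry $\tilde\sigma_h^j(s,a)=\sigma_h^j(s,a)+\mathrm{Lap}(\cdot)$. The crux is the per-entry sensitivities: for the visit and transition counters $\sigma_h^j(s,a)\in\{0,1\}$ has sensitivity $1$, whereas for the reward counter truncation forces $\sigma_h^j(s,a)\in[-B_j,B_j]$, so its sensitivity is $2B_j$. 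Hence the Laplace mechanism with the chosen scales $\mathrm{Lap}(3H/\epsilon)$ and $\mathrm{Lap}(6HB_j/\epsilon)$ makes each single entry $\tfrac{\epsilon}{3H}$-LDP (indeed $2B_j/(6HB_j/\epsilon)=\tfrac{\epsilon}{3H}$). I would then invoke the parallel-composition lemma (Lemma~\ref{parallel}) over the $(s,a)$ coordinates at a fixed step $h$ to keep the per-step budget at $\tfrac{\epsilon}{3H}$, the composition theorem (Lemma~\ref{compositionThm}) over the at most $H$ visited steps to obtain $\tfrac{\epsilon}{3}$-LDP for each statistic, and a final composition over the three statistics $\{N,R,N(\cdot,\cdot,\cdot)\}$ to reach the overall $\epsilon$-LDP.

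For utility, I would write each error exactly as a sum of the injected noises. Since $\widetilde N_h^k(s,a)=\sum_{j=1}^{k-1}\tilde\sigma_h^j(s,a)$ and $N_h^k(s,a)=\sum_{j=1}^{k-1}\sigma_h^j(s,a)$, we get $\widetilde N_h^k(s,a)-N_h^k(s,a)=\sum_{j=1}^{k-1}Z_j$ with $Z_j$ i.i.d.\ $\mathrm{Lap}(3H/\epsilon)$, and likewise for the transition counter. The concentration property of sums of Laplace variables (Lemma~\ref{alemma5}) bounds $\bigl|\sum_{j=1}^{k-1}Z_j\bigr|$ by $O\!\left(\tfrac{H}{\epsilon}\sqrt{(k-1)\log(1/\delta')}\right)$ with probability $1-\delta'$; using $k-1\le K$ and a union bound over the $SAHK=SAT$ tuples $(s,a,h,k)$ with $\delta'=\delta/(6SAT)$ yields $E_{\epsilon,\delta,1}=\tfrac{6H}{\epsilon}\sqrt{K\log\tfrac{6SAT}{\delta}}$, while the same argument over the $S^2AHK=S^2AT$ transition tuples gives $E_{\epsilon,\delta,3}$.

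The reward error requires one extra step. Here $\widetilde R_h^k(s,a)-R_h^k(s,a)=\sum_{j=1}^{k-1}W_j$ with $W_j\sim\mathrm{Lap}(6HB_j/\epsilon)$, so the summands have \emph{different} scales that grow with $j$. Since $B_j$ is non-decreasing in $j$, I would upper bound every scale by its maximum $6HB_k/\epsilon$ (equivalently, stochastically dominate the sum by a sum of identical-scale Laplaces) before applying Lemma~\ref{alemma5}; the union bound over the $SAT$ reward tuples then produces the $B_k$-dependent $E_{\epsilon,\delta,k,2}=\tfrac{12HB_k}{\epsilon}\sqrt{k\log\tfrac{6SAT}{\delta}}$.

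The part I expect to demand the most care is the privacy accounting rather than the concentration. The delicate point is justifying the parallel composition over $(s,a)$ at a fixed step so that the one-hot structure of a single visit does not inflate the effective $\ell_1$-sensitivity, and then combining it with the sequential $H$-fold composition over steps so that the per-entry scale $\mathrm{Lap}(3H/\epsilon)$ realizes exactly the target $\tfrac{\epsilon}{3}$ per statistic and $\epsilon$ overall. On the utility side the only non-routine wrinkle is the reward counter's varying noise scale $B_j$, handled by bounding it with its maximum $B_k$.
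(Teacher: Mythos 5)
Your proposal is correct and follows essentially the same route the paper takes: the paper gives no separate appendix proof for this lemma, and its justification is exactly the in-text accounting you reproduce --- per-entry Laplace scales calibrated to sensitivities $1$ and $2B_j$, parallel composition (Lemma~\ref{parallel}) over the $(s,a,h)$ counters, $H$-fold sequential composition (Lemma~\ref{compositionThm}), a final composition over the three statistics, and then Lemma~\ref{alemma5} with a union bound over the $SAT$ (resp.\ $S^2AT$) tuples, with the heterogeneous reward-noise scales handled by bounding them with the maximal scale $6HB_k/\epsilon$. The only nit is bookkeeping: to land exactly on $\log\frac{6SAT}{\delta}$ via Lemma~\ref{alemma5} you want the per-event failure probability $\beta=\delta/(3SAT)$ (so that $\log(2/\beta)=\log\frac{6SAT}{\delta}$), rather than your $\delta/(6SAT)$.
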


\begin{remark}
Compared with the errors of private counts under JDP which depend on $\log K$ in Lemma \ref{centralError}, the above errors under LDP depend on polynomial of $K$. The reason is that in the LDP model, we add noise  to the data of each user, so we add more noise in total than it in the JDP model to guarantee stronger privacy.
\end{remark}

\begin{theorem}[Regret Bound for Private-Heavy-UCBVI in LDP]
\label{thm:RegretVI2}
For any $\epsilon \in (0,1]$ and $\delta \in (0,1]$ and take $B_n=\left(\frac{u\epsilon\sqrt{n}}{H \log(6SAT/\delta)}\right)^{\frac{1}{1+v}}$ in equation \eqref{trunctedR}. Then if we use the LOCAL-PRIVATIZER in Lemma \ref{ErrorLDP}, with probability $1-\delta$ the regret $Reg(T)$ of Private-Heavy-UCBVI satisfies
{\scriptsize $$\tilde{O}\left(\sqrt{SAH^3T}  + \frac{S^2A\sqrt{H^5T}}{\epsilon} + u^{\frac{1}{(1+v)}}\left(\frac{H^3SA}{\epsilon^2}\right)^{\frac{v}{2(1+v)}}T^{\frac{2+v}{2(1+v)}}\right).$$}
\end{theorem}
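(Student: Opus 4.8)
The plan is to run the standard optimistic value-iteration regret argument, but with the LDP count errors of Lemma \ref{ErrorLDP} (which grow polynomially, as $\sqrt K$ and $\sqrt k$, rather than logarithmically as in the JDP Lemma \ref{centralError}) and with the truncation level $B_n$ re-tuned to this noise. First I would fix the good event $\mathcal{G}$ on which the LDP concentration result Lemma \ref{ConcenPrivateVI_LDP} holds uniformly over all $(s,a,h,k)$; this event has probability at least $1-\delta$. On $\mathcal{G}$ the exploration bonus splits as $\beta_h^k=\beta_h^{k,r}+\beta_h^{k,pv}$, where $\beta_h^{k,r}(s,a)$ upper-bounds $|\tilde r_h^k(s,a)-r_h(s,a)|$ and $\beta_h^{k,pv}(s,a)$ upper-bounds the value-weighted transition error $|\sum_{s'}\tilde V_{h+1}^k(s')(\tilde P_h^k(s'|s,a)-P_h(s'|s,a))|$. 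With $n=N_h^k(s,a)$, the reward bonus must absorb (i) the fluctuation of the truncated empirical mean, of order $\tilde O(B_n/\sqrt n)$ (a variable bounded by $B_n$ with second moment $\le uB_n^{1-v}$); (ii) the truncation bias $u/B_n^v$; and (iii) the LDP reward-noise error; the transition bonus absorbs the usual concentration $\tilde O(H/\sqrt n)$ plus the LDP count errors $E_{\epsilon,\delta,1},E_{\epsilon,\delta,3}=\tilde O(H\sqrt K/\epsilon)$ propagated through $\tilde V_{h+1}$, of order $\tilde O(SH^2\sqrt K/(\epsilon n))$.

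Next, on $\mathcal{G}$ I would establish optimism, $\tilde V_h^k(s)\ge V_h^*(s)$ for all $(s,h,k)$, by backward induction on $h$, using the validity of the bonuses and the clipping of $\tilde Q_h^k$ to $[-(H-h+1)\tau,(H-h+1)\tau]$ (which keeps the value range $\tilde O(H\tau)$ despite possibly negative heavy-tailed rewards). Optimism gives $V_1^*(s_1^k)-V_1^{\pi^k}(s_1^k)\le \tilde V_1^k(s_1^k)-V_1^{\pi^k}(s_1^k)$, which I would unroll along the rolled-out trajectory via the optimistic Bellman recursion into a telescoping sum of the bonuses $\beta_h^k(s_h^k,a_h^k)$ over visited pairs plus an $H$-bounded martingale-difference term. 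Summing over $k\in[K],h\in[H]$, the martingale contributes $\tilde O(H\sqrt T)$ by Azuma--Hoeffding (lower order). The transition-concentration part sums, using $\sum_{k,h}1/\sqrt{N_h^k}\le 2\sqrt{SAHT}$ (Cauchy--Schwarz over the $SAH$ counters together with $\sum_{s,a,h}N_{s,a,h}=T$) and the $\tilde O(H)$ value range, to the first term $\sqrt{SAH^3T}$. For the LDP count part, the noise scale $H\sqrt K/\epsilon$ is constant in the count, so the sum is $(SH^2\sqrt K/\epsilon)\sum_{k,h}1/N_h^k=\tilde O(S^2AH^3\sqrt K/\epsilon)$ via $\sum_{k,h}1/N_h^k=\tilde O(SAH)$; substituting $\sqrt K=\sqrt{T/H}$ and using $H^3\sqrt K=\sqrt{H^5T}$ gives the second term $S^2A\sqrt{H^5T}/\epsilon$.

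The crux, and the step I expect to be the main obstacle, is the heavy-tailed reward contribution, because the LDP reward-noise error cannot be handled by plugging in the uniform bound $E_{\epsilon,\delta,k,2}$ directly. One must instead track, at each visit, the privacy error of the truncated reward sum as a function of the count $n=N_h^k(s,a)$: since $\tilde R_h^k(s,a)$ aggregates per-user Laplace perturbations whose scales follow the count-indexed truncation levels $\propto B_n$, its deviation concentrates (Lemma \ref{alemma5}) as $\tilde O((H/\epsilon)\sqrt{\sum_{n\le N}B_n^2})$ with $N=N_h^k(s,a)$, so the per-visit reward privacy error is $\tilde O((H/\epsilon)N^{-1}\sqrt{\sum_{n\le N}B_n^2})$. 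Choosing $B_n=(u\epsilon\sqrt n/(H\log(6SAT/\delta)))^{1/(1+v)}$ balances this against the truncation bias $u/B_n^v$ and makes each per-visit reward bonus (statistical, bias, and privacy pieces alike) equal to $\tilde O(u^{1/(1+v)}(H/\epsilon)^{v/(1+v)}n^{-v/(2(1+v))})$. Summing $\sum_{n\le N}n^{-v/(2(1+v))}=O(N^{(2+v)/(2(1+v))})$ per counter and applying the power-mean (Jensen) inequality across the $SAH$ counters with $\sum N=T$ yields $(SAH)^{v/(2(1+v))}T^{(2+v)/(2(1+v))}$; folding in the prefactor and rewriting $(H/\epsilon)^{v/(1+v)}=(H^2/\epsilon^2)^{v/(2(1+v))}$ collapses the coefficient to $(H^3SA/\epsilon^2)^{v/(2(1+v))}$, producing exactly the third term.

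The delicate bookkeeping here — and the reason the uniform bound $E_{\epsilon,\delta,k,2}=\tilde O(HB_k\sqrt k/\epsilon)$ cannot be substituted naively — is that replacing the count-indexed $B_n$ by $B_k$ and then bounding $k\le K$ would extract a spurious $K^{(2+v)/(2(1+v))}$ and lose extra $SAH$ factors; keeping the reward error count-resolved is precisely what lets the $\sqrt n$ inside $B_n$ (in place of the $n$ of the JDP proof) convert the JDP exponent $1/(1+v)$ into the LDP exponent $(2+v)/(2(1+v))$. Collecting the three terms and absorbing all $\mathrm{polylog}(S,A,T,1/\delta)$ factors into $\tilde O(\cdot)$ then completes the proof.
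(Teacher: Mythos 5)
Your proposal is correct and follows essentially the same route as the paper's proof: the count-resolved reward-noise accounting you identify as the crux is exactly what the paper's Lemma~\ref{ConcenPrivateVI_LDP} encodes (evaluating the Laplace deviation at the visit count with the tuned $B_n$, yielding the per-visit rate $u^{1/(1+v)}(H/\epsilon)^{v/(1+v)}n^{-v/(2(1+v))}$), and your subsequent steps — optimism by backward induction with clipping, telescoping the bonuses plus an Azuma martingale term, the pigeonhole sums $\sum_{k,h} 1/\sqrt{N_h^k}$ and $\sum_{k,h}1/N_h^k$, and the H\"older step $\sum_{h,s,a}(N_h^K(s,a))^{\frac{2+v}{2(1+v)}}\le (SAH)^{\frac{v}{2(1+v)}}T^{\frac{2+v}{2(1+v)}}$ — reproduce the paper's bounds on the transition, privacy-count, and heavy-tailed terms one for one. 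The only differences are cosmetic bookkeeping (e.g., the paper's good event has probability $1-3\delta$ before rescaling).
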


\begin{remark}
Compared with the regret bound for JDP, the differences of the second term and the third term in the above bound for LDP come from the fact that the noise magnitude of private count is  $\log k$ for JDP while it will be $\sqrt{k}$ for LDP. Moreover, compared with LDP heavy-tailed MAB, our bound cannot recover the optimal rate of  $O\left(\left({A\log T}/{\epsilon^2}\right)^{\frac{v}{1+v}}T^{\frac{1}{1+v}}\right)$ in \citet{tao2022optimal} with $S=H=1$. 
This is because to achieve the optimal rate for MAB, 
\citet{tao2022optimal} proposes a locally private and robust version of the successive elimination algorithm while we use UCB-based method for our LDP case. Since  the key ideas of these algorithms are quite different, we cannot adopt successive elimination methods for RL problems directly because of the existence of transition probabilities and states. We may get some regret bounds that  match the optimal rate in LDP heavy-tailed MAB by using some variants of  the UCB method, and we leave it as an open problem. Compared with the bound of $\widetilde{O}\left(\sqrt{S A H^2 T}+S^2 A \sqrt{H^5 T} / \epsilon\right)$ provided by \citet{qiao2023near} under LDP based on value iteration method for bounded rewards case, except for the difference in the first term as in JDP, the third term is different due to heavy-tailed rewards and the noise magnitude of private count is $\sqrt{k}$ for LDP.
\end{remark}


\section{Private Policy Optimization for Heavy-tailed Rewards }\label{Sec:PO}

In the previous section, we proposed a value-iteration-based framework for private heavy-tailed MDPs. However, in our algorithm, the value iteration function runs through all possible actions to find the maximum action value, i.e., Algorithm \ref{UCBVI} is computationally heavy. In the non-private MDPs with bounded/sub-Gaussian rewards case, it is well known that  policy optimization (PO) based algorithms proposed by \citet{pashenkova1996value} are more efficient than the value iteration based ones from the computational perspective. And due to this, in practice, researchers are more willing to use PO-based algorithms. Thus, a natural question is whether we  can design some private PO-based algorithm for MDPs with heavy-tailed rewards. 
 In this section, we give an affirmative answer to the question by proposing a policy-optimization-based framework, namely Private-Heavy-UCBPO, for private MDPs with heavy-tailed rewards. See Algorithm \ref{alg:PO} for details. 

The key idea of Private-Heavy-UCBPO is that we start by choosing a  policy $\pi^1$ from the uniform distribution and then we iteratively evaluate and improve the policy. In the policy evaluation stage, we use the UCB framework to compute a $Q$-function estimation where we use the estimation errors of private empirical reward means and private empirical transition probabilities as the exploration bonus, which is  similar to Algorithm \ref{UCBVI}. Then based on Bellman expectation equation, we compute the corresponding value function at step 11. Next, we roll out a new trajectory by acting the policy and receive the private sum of truncated rewards and private counts from the same PRIVATIZER as in Algorithm \ref{UCBVI}. Finally, in the policy improvement stage, we update the policy by leveraging a standard mirror-descent step.

Similar to the previous section, we will show the  regrets for Algorithm \ref{alg:PO} in JDP and LDP models, respectively.
\begin{theorem}
    Given $\epsilon>0$, by using the same CENTRAL-PRIVATIZER  (LOCAL-PRIVATIZER) as in Lemma \ref{centralError} (Lemma \ref{ErrorLDP}), Algorithm \ref{alg:PO} is $\epsilon$-JDP ($\epsilon$-LDP). 
\end{theorem}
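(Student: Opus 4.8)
The plan is to reduce the privacy analysis of Algorithm \ref{alg:PO} to that of Algorithm \ref{UCBVI}, exploiting the fact that both algorithms touch the users' raw trajectories only through the \emph{same} private counts $\tilde N_h^k(s,a)$, $\tilde R_h^k(s,a)$, $\tilde N_h^k(s,a,s')$ produced by the identical PRIVATIZER. The only difference between PO and VI is the map from these private statistics to the policy $\pi^k$ (UCB-based policy evaluation followed by a mirror-descent policy improvement, rather than greedy value iteration). Since this map never inspects the raw data again, it is pure post-processing, and so the privacy guarantee is inherited.

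For the JDP direction, I would first invoke Lemma \ref{centralError} to assert that the CENTRAL-PRIVATIZER, run as an adaptive tree mechanism over the whole $K$-episode stream for each of the $2SAH+S^2AH$ counters, is $\epsilon$-DP; crucially, the adaptive truncation threshold $B_{N_h^k(s,a)}$ is already folded into this guarantee. Then I would observe that in Algorithm \ref{alg:PO} the entire sequence of computed policies $(\pi^k)_{k=1}^K$ --- the UCB bonuses, the $Q$- and value-function estimates, and the mirror-descent updates --- is a deterministic function of these $\epsilon$-DP private counts. By the post-processing invariance of DP, the mechanism outputting $(\pi^k)_k$ is still $\epsilon$-DP. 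Finally I would apply the Billboard lemma of \citet{hsu2016private}: because the actions recommended to user $k$ are determined by the published (DP) counts together with user $k$'s own state responses, the sequence of actions given to all users other than $k$ is $\epsilon$-JDP. This is exactly the argument used for Theorem \ref{thm:JDPpriv}, and it carries over verbatim since only the post-processing step has changed.

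For the LDP direction, I would instead use Lemma \ref{ErrorLDP}, which shows that the LOCAL-PRIVATIZER perturbs each user's per-step contributions $\sigma_h^j(s,a)$ with independent Laplace noise so that each user's full release is $\epsilon$-LDP (via the parallel- and sequential-composition lemmas over the at most $H$ steps encountered per episode). Since each user's trajectory is privatized locally before leaving the device, and the server --- including all PO policy-evaluation and mirror-descent policy-improvement computations --- only ever sees these locally randomized responses, the closure of LDP under post-processing immediately yields that Algorithm \ref{alg:PO} is $\epsilon$-LDP.

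The analysis is essentially immediate once the right abstraction is in place; the one point that deserves care, and which I would flag as the main obstacle, is verifying the premise of both reductions --- namely that the PO policy computation genuinely depends on the raw user data \emph{only} through the privatized counts, with no leakage through, e.g., the adaptive truncation threshold or the rolled-out trajectory used to update the statistics. Confirming that the mirror-descent updates and the Bellman-expectation value computation in Algorithm \ref{alg:PO} use $\tilde r_h^k$, $\tilde P_h^k$ and the private counts \emph{exclusively} (so that the Billboard premise ``function of the DP output plus one's own data'' holds in the JDP case, and the ``post-processing of local releases'' structure holds in the LDP case) is precisely what makes the two short invocations of post-processing and the Billboard lemma legitimate.
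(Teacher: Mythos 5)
Your proposal is correct and matches the paper's argument: the paper establishes this theorem exactly as in Theorem~\ref{thm:JDPpriv}, namely by noting that the policies $(\pi^k)_k$ in Algorithm~\ref{alg:PO} are computed purely from the $\epsilon$-DP private counts of the PRIVATIZER (Lemma~\ref{centralError} resp.\ Lemma~\ref{ErrorLDP}), so post-processing plus the Billboard lemma of \citet{hsu2016private} yields $\epsilon$-JDP, and post-processing of the locally randomized releases yields $\epsilon$-LDP. Your added check that the mirror-descent update and the adaptive truncation introduce no extra dependence on raw data is exactly the premise the paper relies on implicitly, so nothing in your route differs in substance.
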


\begin{theorem}[Regret bound of Private-Heavy-UCBPO in JDP]
\label{thm:regPO}
Fix any $\epsilon \in (0,1]$ and $\delta \in (0,1]$ and set $\eta=\sqrt{2\log A/(\tau^2H^2 K)}$ and take $B_n=\left(\frac{\epsilon u n}{H \log^{1.5}K \log (3SAT/\delta)}\right)^{\frac{1}{1+v}}$ in equation \eqref{trunctedR}. Then, if we use the CENTRAL-PRIVATIZER in Lemma \ref{centralError}, with probability at least $1-\delta$, the cumulative regret of Private-Heavy-UCBPO (Algorithm \ref{alg:PO}) is upper bounded by
$$\tilde{O}\left(\sqrt{S^2AH^3T}+\frac{S^2AH^3}{\epsilon}+u^{\frac{1}{1+v}}\left(\frac{SAH^2}{\epsilon}\right)^{\frac{v}{1+v}}T^{\frac{1}{1+v}}\right).$$
\end{theorem}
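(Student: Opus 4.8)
The plan is to follow the policy-optimization regret template (in the spirit of the UCB-PO analyses of Cai et al.\ / Shani et al.\ and its private adaptation by \citet{chowdhury2021differentially}), but with the reward estimation error replaced by the heavy-tailed robust bonus and with the private-count errors of Lemma \ref{centralError} folded into the bonus. First I would decompose the per-episode regret via the value-difference lemma:
$$V_1^*(s_1^k)-V_1^{\pi^k}(s_1^k)=\sum_{h=1}^{H}\mathbb{E}_{\pi^*}\big[\langle\tilde Q_h^k(s_h,\cdot),\,\pi_h^*(\cdot|s_h)-\pi_h^k(\cdot|s_h)\rangle\big]+\sum_{h=1}^{H}\big(\mathbb{E}_{\pi^*}[\iota_h^k]-\mathbb{E}_{\pi^k}[\iota_h^k]\big),$$
where $\iota_h^k(s,a):=r_h(s,a)+(P_h\tilde V_{h+1}^k)(s,a)-\tilde Q_h^k(s,a)$ is the model-prediction (Bellman) error. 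This cleanly separates the regret into a \emph{mirror-descent} term (the first sum, coming from the policy-improvement step) and a \emph{model-bias} term (the second sum, coming from estimating rewards and transitions with private counts).

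Second, I would establish optimism. Combining the concentration result of Lemma \ref{ConcenPrivateVI} with the private-count error bounds $E_{\epsilon,\delta,1},E_{\epsilon,\delta,k,2},E_{\epsilon,\delta,3}$ of Lemma \ref{centralError}, I would show that with probability $1-\delta$, uniformly over all $(s,a,h,k)$, the bonus $\beta_h^k=\beta_h^{k,r}+\beta_h^{k,pv}$ sandwiches the error as $-2\beta_h^k(s,a)\le\iota_h^k(s,a)\le 0$. The reward part $\beta_h^{k,r}$ must dominate the deviation $|\tilde r_h^k-r_h|$ of the truncated robust mean, which is exactly where the truncation threshold $B_n$ controls the heavy-tail bias and the Laplace scale controls the privacy inflation; the transition part $\beta_h^{k,pv}$ must dominate $|\langle\tilde P_h^k(\cdot|s,a)-P_h(\cdot|s,a),\tilde V_{h+1}^k\rangle|$, which I would bound by the $\ell_1$ concentration $\|\tilde P_h^k-P_h\|_1\lesssim\sqrt{S\log(\cdot)/N_h^k}$ scaled by $\|\tilde V_{h+1}^k\|_\infty\le H\tau$, plus the private-count term. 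The bound $\iota_h^k\le 0$ lets me discard $\sum_h\mathbb{E}_{\pi^*}[\iota_h^k]$, while the lower bound gives $-\mathbb{E}_{\pi^k}[\iota_h^k]\le\mathbb{E}_{\pi^k}[2\beta_h^k]$.

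Third, I would control the two remaining pieces. For the mirror-descent term, since each $\tilde Q_h^k$ is clipped to $[-(H-h+1)\tau,(H-h+1)\tau]$, the exponential-weights update over the action simplex at each state incurs per-step regret $\frac{\log A}{\eta}+\frac{\eta}{2}\sum_k\|\tilde Q_h^k\|_\infty^2$; the stated choice $\eta=\sqrt{2\log A/(\tau^2H^2K)}$ balances these to $\tilde O(\tau H\sqrt{K})$ per step, hence $\tilde O(H^{3/2}\sqrt{T})$ after summing over $h$, which is dominated by the first term of the target bound. For the model-bias term I would replace $\mathbb{E}_{\pi^k}[\beta_h^k]$ by the realized $\beta_h^k(s_h^k,a_h^k)$ at an Azuma--Hoeffding cost of $\tilde O(H^{3/2}\sqrt{T})$, then bound $\sum_{k,h}\beta_h^k(s_h^k,a_h^k)$ by the pigeonhole estimates $\sum_k 1/\sqrt{N_h^k}\lesssim\sqrt{SAK}$ and $\sum_k 1/N_h^k\lesssim SA\log K$. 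Here the transition part produces $\sqrt{S}\cdot\sqrt{SAK}\cdot H\cdot H=\tilde O(\sqrt{S^2AH^3T})$ — the two $\sqrt S$ factors (one from $\ell_1$ concentration, one from pigeonhole) are precisely what create the extra $S$ relative to the value-iteration bound — together with the private-count term $\tilde O(S^2AH^3/\epsilon)$, and the reward part contributes, exactly as in the proof of Theorem \ref{thm:RegretVI}, the term $\tilde O(S^2AH^3/\epsilon)$ and the heavy-tailed term $\tilde O(u^{1/(1+v)}(SAH^2/\epsilon)^{v/(1+v)}T^{1/(1+v)})$ once $B_n$ is optimized.

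The hard part will be the optimism step in the policy-optimization setting. Unlike value iteration, the recursion here propagates through $\tilde V_h^k(s)=\langle\tilde Q_h^k(s,\cdot),\pi_h^k(\cdot|s)\rangle$ rather than a maximum, so I must run an induction over $h$ that establishes $\tilde V_h^k\ge V_h^{\pi^k}$ while checking that the clipping to $[-(H-h+1)\tau,(H-h+1)\tau]$ never destroys optimism, and I must verify that the combined bonus — now simultaneously carrying the heavy-tail truncation bias and the Laplace-noise inflation of the private counts — stays large enough to keep $\iota_h^k\le 0$ at every level. Making the robust reward bonus and the private-count error terms interact correctly \emph{inside} this inductive optimism argument, while keeping the $\ell_1$-based transition bonus tight enough to avoid picking up a spurious extra factor of $H$ or $S$, is the crux of the proof.
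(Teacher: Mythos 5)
Your proposal is correct and is, in skeleton, the paper's own proof: the same extended value-difference decomposition (your mirror-descent term is the paper's $\mathcal{T}_1$, your $\sum_h\mathbb{E}_{\pi^*}[\iota_h^k]$ is $\mathcal{T}_2$, and $-\sum_{k,h}\mathbb{E}_{\pi^k}[\iota_h^k]$ is $\mathcal{T}_3$), the same sandwich $-2\beta_h^k\le\iota_h^k\le 0$ obtained from the private robust reward bonus plus $\tau H$ times the $\ell_1$ transition bonus (Lemma \ref{POBonus}), and the same choice of $\eta$ giving $\mathcal{T}_1=\tilde{O}(\tau H^2\sqrt{K})$. The one genuine deviation is how you control $\sum_{k,h}\mathbb{E}_{\pi^k}[\beta_h^k]$: you pass to realized counts via Azuma--Hoeffding and then pigeonhole, whereas the paper keeps the conditional expectations and invokes expected-visitation lemmas --- Lemma \ref{lem:nonst} for the $1/N$ and $1/\sqrt{N}$ rates and the new Lemma \ref{lem:heavyCount} for the heavy-tailed exponent $v/(1+v)$, the latter proved by splitting the expectation into realized ratios plus a martingale and applying the Bernstein-type inequality of Lemma \ref{lem:MDS} to the quantities $\mathcal{I}_h^k/(N_h^k\vee 1)^{v/(1+v)}\le 1$. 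Your route reaches the same bound, but your claimed Azuma cost $\tilde{O}(H^{3/2}\sqrt{T})$ is only valid because the per-step Bellman error is clipped at $O(\tau H)$ by the $Q$-truncation: the raw bonus at small counts can be as large as $\tilde{O}\bigl(\tau H(SE_{\epsilon,\delta,3}+E_{\epsilon,\delta,1})\bigr)=\tilde{O}(SH^2/\epsilon)$, and Azuma with that range would inject a spurious $\tilde{O}(SH^2\sqrt{T}/\epsilon)$ term breaking the theorem; so you must apply the martingale step to $\min\{2\beta_h^k,\,c\tau H\}$ (or to normalized ratios, as the paper does). The paper's formulation via Lemma \ref{lem:heavyCount} sidesteps this range issue automatically since the ratios are bounded by $1$.

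One correction to your assessment of the "hard part": no induction over $h$ establishing $\tilde{V}_h^k\ge V_h^{\pi^k}$ is needed. The inequality $\iota_h^k\le 0$ holds pointwise in $(k,h,s,a)$ directly from Lemma \ref{POBonus}, because the same $\tilde{V}_{h+1}^k$ (bounded in $[-\tau H,\tau H]$ by the clipping) appears on both sides of the Bellman error; the lower truncation at $-(H-h+1)\tau$ is handled by $r_h(s,a)+P_h(\cdot|s,a)\tilde{V}_{h+1}^k\ge -(H-h+1)\tau$ together with $\max\{a+b,0\}\le\max\{a,0\}+\max\{b,0\}$, and $\tilde{V}_1^k\ge V_1^{\pi_k}$ then follows from the value-difference identity for free. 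The genuinely new ingredient in the paper is the expected-count bound of Lemma \ref{lem:heavyCount} for the exponent $v/(1+v)$, which your pigeonhole-plus-martingale step replicates in substance.
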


\begin{theorem}[Regret bound of Private-Heavy-UCBPO in LDP]
\label{thm:regPO2}
Fix any $\epsilon \in (0,1]$, $\delta \in (0,1]$ and set $\eta=\sqrt{2\log A/(\tau^2H^2 K)}$. Take $B_n=\left(\frac{u\epsilon\sqrt{n}}{H \log(6SAT/\delta)}\right)^{\frac{1}{1+v}}$ in \eqref{trunctedR}. Then if we use the same LOCAL-PRIVATIZER as in Lemma \ref{ErrorLDP}, with probability at least $1-\delta$, the cumulative regret of Private-Heavy-UCBPO (Algorithm \ref{alg:PO}) is upper bounded by 
{\scriptsize $$ \tilde{O}\left(\sqrt{S^2AH^3T}+\frac{S^2A\sqrt{H^5T}}{\epsilon} +u^{\frac{1}{1+v}}\left(\frac{H^3SA}{\epsilon^2}\right)^{\frac{v}{2(1+v)}}T^{\frac{2+v}{2(1+v)}}\right).$$}
\end{theorem}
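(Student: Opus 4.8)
The plan is to mirror the JDP policy-optimization argument behind Theorem~\ref{thm:regPO}, but substitute the LDP count guarantees of Lemma~\ref{ErrorLDP} (whose errors scale as $\sqrt{K}$ rather than $\log K$) for the CENTRAL-PRIVATIZER guarantees of Lemma~\ref{centralError}, and then re-optimize the truncation threshold $B_n$ for this larger noise. Privacy is already settled, so only the regret needs control. First I would construct the LDP policy-evaluation concentration, i.e. the LDP analogue of the bonus $\beta_h^k = \beta_h^{k,r} + \beta_h^{k,pv}$ of Algorithm~\ref{alg:PO}, and show it certifies optimism of the estimated $\widetilde{Q}_h^k$ relative to $Q_h^{\pi^k}$. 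The transition part $\beta_h^{k,pv}$ combines a Hoeffding deviation of $\widetilde{P}_h^k$ against $P_h$ (of order $H\sqrt{S\log(SAT/\delta)/N}$ after weighting by $\|\widetilde V\|_\infty \le H\tau$) with the private count errors $E_{\epsilon,\delta,1},E_{\epsilon,\delta,3} = \tilde O(\tfrac{H}{\epsilon}\sqrt{K})$ propagated through the denominator $N + E_{\epsilon,\delta,1}$. The reward part $\beta_h^{k,r}$ is the delicate piece: it must dominate (i) the truncation bias $|\mathbb{E}[X\mathbbm{I}(|X|>B_N)]|\le uB_N^{-v}$ from the finite $(1+v)$-th moment, (ii) the fluctuation of the truncated empirical sum $R_h^k$, and (iii) the injected Laplace error $E_{\epsilon,\delta,k,2} = \tilde O(\tfrac{HB_k}{\epsilon}\sqrt{k})$. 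A union bound over all $(s,a,h,k)$ gives these bounds uniformly with probability $1-\delta$.

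Second, I would run the policy-optimization regret decomposition. By the performance-difference lemma, $V_1^*(s_1^k) - V_1^{\pi^k}(s_1^k) = \sum_h \mathbb{E}_{\pi^*}[\langle Q_h^{\pi^k}(s_h,\cdot),\, \pi_h^*(\cdot|s_h) - \pi_h^k(\cdot|s_h)\rangle]$. Replacing $Q_h^{\pi^k}$ by the optimistic $\widetilde Q_h^k$ incurs an error controlled by the cumulative bonus $\sum_{k,h}\beta_h^k$ along the executed and optimal trajectories (a lower-order martingale term is dispatched by Azuma--Hoeffding). The leftover inner product $\sum_k\langle \widetilde Q_h^k,\pi_h^* - \pi_h^k\rangle$ is exactly the regret of the per-state exponential-weights (mirror-descent) update; since $\widetilde Q_h^k$ is clipped to $[-(H-h+1)\tau,(H-h+1)\tau]$, the standard bound is $\tfrac{\log A}{\eta} + \eta\tau^2H^2K$ per state, which the choice $\eta = \sqrt{2\log A/(\tau^2H^2K)}$ renders $\tilde O(\tau H\sqrt{K\log A})$; aggregating over states and steps this contributes at most $\tilde O(SH^2\sqrt{K\log A})$, dominated by the statistical bonus term below.

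Third, I would sum the cumulative bonus. For the transition/count contribution, applying Cauchy--Schwarz over $(s,a)$ (via $\sum_{s,a}\sqrt{N_h^K(s,a)}\le\sqrt{SAK}$) to the statistical part yields the leading $\sqrt{S^2AH^3T}$ term, where one factor of $S$ comes from the transition-concentration dimension and a second from the Cauchy--Schwarz over states; the private part, after the harmonic sum $\sum_k 1/N_h^k = \tilde O(1)$ per pair and the $\tilde O(\tfrac{H}{\epsilon}\sqrt{K})$ count error, yields $\tfrac{S^2A\sqrt{H^5T}}{\epsilon}$, carrying the $\sqrt{K}$ scaling of LDP. For the reward contribution I would balance the bias $uB_N^{-v}$ against the private reward error (of order $\tfrac{HB_N}{\epsilon\sqrt N}$ per visit), which is minimized exactly at the stated $B_n = (u\epsilon\sqrt n/(H\log(6SAT/\delta)))^{1/(1+v)}$; the per-visit reward error then becomes $\tilde O(u^{1/(1+v)}(H/\epsilon)^{v/(1+v)}n^{-v/(2(1+v))})$. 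Summing $\sum_{n\le N}n^{-v/(2(1+v))} = \tilde O(N^{(2+v)/(2(1+v))})$ and applying H\"older over the $SAH$ triples (with $\sum_{s,a}N_h^K(s,a)\le K$ for each $h$, and $1-\tfrac{2+v}{2(1+v)} = \tfrac{v}{2(1+v)}$) gives precisely $u^{1/(1+v)}(H^3SA/\epsilon^2)^{v/(2(1+v))}T^{(2+v)/(2(1+v))}$. Collecting the three contributions yields the claimed bound.

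The hard part will be the reward term and its interaction with LDP. Unlike the bounded-reward PO analysis of \citet{chowdhury2021differentially}, where rewards share the sensitivity of the counts and fold into $S^2AH^3/\epsilon$, here the robust truncated estimator interacts with Laplace noise whose per-episode scale $B_k$ itself grows in $k$ and, crucially, accumulates over \emph{all} episodes rather than only the visits to $(s,a)$ (this is the source of the $\sqrt{k}$ factor in $E_{\epsilon,\delta,k,2}$). Reconciling this global noise accumulation with the visitation-count summation, while keeping the adaptive threshold $B_n$ consistent between the truncation in \eqref{trunctedR} and the noise magnitude, is the main obstacle, and it is exactly what degrades the $T$-exponent from the $T^{1/(1+v)}$ of the JDP bound (Theorem~\ref{thm:regPO}) to $T^{(2+v)/(2(1+v))}$ here. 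A secondary difficulty is that no non-private PO algorithm for heavy-tailed MDPs exists, so the policy-evaluation concentration certifying optimism of $\widetilde Q_h^k$ must be built from scratch for the truncated estimator and then reused with the LDP error terms inserted.
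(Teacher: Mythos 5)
Your proposal is correct and follows essentially the same route as the paper's proof: the extended value-difference decomposition into a mirror-descent term (bounded by $\sqrt{2\tau^2H^4K\log A}$ under the stated $\eta$), a nonpositive Bellman-error term certified by the LDP concentration bonuses, and a cumulative-bonus term, with the identical threshold balancing at $B_n=\bigl(u\epsilon\sqrt{n}/(H\log(6SAT/\delta))\bigr)^{1/(1+v)}$ and the same H\"older/pigeonhole counting yielding $(SAH)^{v/(2(1+v))}T^{(2+v)/(2(1+v))}$ together with the $\sqrt{S^2AH^3T}$ and $S^2A\sqrt{H^5T}/\epsilon$ terms. The only minor deviations --- converting the expected (under $\pi^k$) bonus sums to realized counts via Azuma--Hoeffding where the paper uses the Bernstein-type martingale inequality of Lemma \ref{lem:MDS} (as in the LDP analogue of Lemma \ref{lem:heavyCount}), and a spurious but dominated factor of $S$ in your per-state aggregation of the exponential-weights regret --- do not affect the stated bound.
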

Compared with the regret bounds of Private-Heavy-UCBVI, there is an additional factor of $\sqrt{S}$ in the leading privacy-independent term in the bounds of Private-Heavy-UCBPO. This follows the same pattern as in private bounded MDPs case \cite{chowdhury2021differentially}.  Actually,  there is no previous policy-optimization-based algorithm, even  for non-private heavy-tailed MDPs. Therefore, our proof techniques (such as the results of Lemma \ref{lem:heavyCount} in Appendix) can be considered as byproducts that can be used to  design policy-optimization-based algorithms for non-private heavy-tailed RL problems.

\section{Lower Bounds}
\label{Sec:LowerBou}

\subsection{Regret Lower Bound under JDP}
We first focus on establishing a lower bound on the regret for heavy-tailed MDPs under JDP constraint. In the literature, a common approach for proving lower bounds in RL is via a reduction to MAB. However, in our setting, two challenges will arise when we adopt the above approach. First, the regret of our problem corresponds to the instance-independent
regret in MAB. Unfortunately, there is no existing private minimax (instance-independent) regret lower bound for heavy-tailed MAB in the central DP model. In fact, such a lower bound is listed as an open problem in ~\citet{tao2022optimal}. Second, in the private case, the reduction from RL to MAB needs additional care. This is due to the difference in privacy definition, i.e., JDP for RL and DP for MAB. In other words, even if one has established a regret lower bound for  MAB in the central DP model, it cannot be directly used to establish a lower bound for RL in JDP. 

To address the aforementioned challenges, we take the following steps in this section. First, we establish the first private minimax lower bound for heavy-tailed MAB in central DP, hence resolving the open problem in ~\citet{tao2022optimal}. The key step behind this result is  a \emph{private} version of KL-divergence for the hard instances. Then, to tackle the second challenge, we use the notion called JDP with \emph{public} initial state as a bridge between the classic JDP for RL and DP for MAB, which allows us to build upon our lower bound for MAB under DP to establish the lower bound for RL in JDP. Here we only show our intuition for overcoming  the first challenge and our hard instances for heavy-tailed MDPs. See Appendix \ref{Sec:LowerBou} for the details of overcoming the second challenge. 



We now start with the MAB lower bound. In particular, we consider the agent interacts with the environment sequentially for $K$ rounds. The agent is faced with a set of $A$ independent arms $\{1,\dots,A\}$. In each round $k \in K$, the agent selects an arm $a_k \in [A]$ to pull and obtains a reward that is drawn i.i.d. from a fixed but unknown heavy-tailed distribution associated with the chosen arm.

\begin{definition}[Differential Privacy (DP) for MAB \cite{vietri2020private}]
For any $\epsilon \ge 0$, a mechanism $\mathcal{M}:\mathcal{U}^K \rightarrow \mathcal{A}^{K}$ is $\epsilon$-DP if for all user sequences $U_K,U_K^\prime \in \mathcal{U}^K$ differing only in a single user and for all events $E \subset \mathcal{A}^{K}$,we have
$
\mathbb{P}\left[\mathcal{M}{\left(U_{K}\right) \in E}\right] \leq e^\varepsilon \mathbb{P}\left[\mathcal{M}\left(U_{K}^{\prime}\right) \in E\right].
$
\end{definition}

\begin{theorem}[Instance-independent Lower Bound for DP Heavy-tailed MAB]
\label{thm:LowBounMAB}
There exists a heavy-tailed $A$-armed bandit instance with the $(1+v)$-th bounded moment of each reward distribution is bounded by 1. Moreover, if $K$ is large enough, for any  $\epsilon$-DP algorithm $\mathcal{M}$ with $\epsilon \in (0,1]$, the expected regret must satisfy
$${Reg}_K \ge \Omega\left(\left(A/\epsilon\right)^{\frac{v}{1+v}}K^{\frac{1}{1+v}}\right).$$
\end{theorem}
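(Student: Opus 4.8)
The plan is to prove this minimax lower bound by the standard reduction from regret minimization to multiple-hypothesis testing, but with the non-private divergence control replaced by a privacy-aware (``private KL'') argument that exploits the $\epsilon$-DP constraint of $\mathcal{M}$. Concretely, I would fix a reference instance $\nu^0$ in which every arm emits the two-point heavy-tailed reward $X$ taking value $M=p^{-1/(1+v)}$ with probability $p$ and $0$ otherwise, so that $\mathbb{E}[|X|^{1+v}] = p M^{1+v}=1$ and the mean is $p^{v/(1+v)}$. For each $j\in[A]$ I would define a perturbed instance $\nu^j$ identical to $\nu^0$ except that arm $j$ fires with the slightly larger probability $p(1+\eta)$; after rescaling $M$ by a constant so the heaviest arm still has $(1+v)$-th moment exactly $1$, this raises the mean of arm $j$ by a gap $\Delta = \Theta(\eta\, p^{v/(1+v)})$, while the per-pull total variation between the reward laws stays $\mathrm{TV}(\nu^0_j,\nu^j_j)=\Theta(p\eta)$. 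The free parameters $p$ and $\eta$ are chosen at the end.

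The regret-to-testing step is routine. On $\nu^j$ the unique optimal arm is $j$, so $Reg_K(\nu^j) = \Delta\,(K-\mathbb{E}_{\nu^j}[N_j])$ where $N_j$ is the number of pulls of arm $j$. Since $N_j/K\in[0,1]$, the drift obeys $\mathbb{E}_{\nu^j}[N_j]-\mathbb{E}_{\nu^0}[N_j] \le K\,\mathrm{TV}(\mathbb{P}_{\nu^0},\mathbb{P}_{\nu^j})$ between the transcript laws, and $\sum_j N_j=K$ gives $\frac1A\sum_j \mathbb{E}_{\nu^0}[N_j] = K/A$. Hence it suffices to show the average of $\mathbb{E}_{\nu^j}[N_j]$ stays below $K/2$, from which $\max_j Reg_K(\nu^j) \gtrsim \Delta K$ by an averaging (Assouad-type) argument over the $A$ instances.

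The crux — the replacement for the usual $\mathrm{TV}(\mathbb{P}_{\nu^0},\mathbb{P}_{\nu^j})\le\sqrt{\tfrac12\,\mathbb{E}_{\nu^0}[N_j]\,\mathrm{KL}(\nu^0_j\|\nu^j_j)}$ bound — is a private KL estimate. I would couple the two environments maximally so that the reward streams of arm $j$ differ on an expected number of rounds equal to $\mathbb{E}_{\nu^0}[N_j]\cdot \mathrm{TV}(\nu^0_j,\nu^j_j)$, and then invoke group privacy: since $\mathcal{M}$ is $\epsilon$-DP and the two transcripts are produced from datasets differing only in those rounds, the KL/hockey-stick divergence of the outputs is controlled by $\big(\epsilon\,\mathbb{E}_{\nu^0}[N_j]\,\mathrm{TV}(\nu^0_j,\nu^j_j)\big)^2$, whence $\mathrm{TV}(\mathbb{P}_{\nu^0},\mathbb{P}_{\nu^j}) \lesssim \epsilon\,\mathbb{E}_{\nu^0}[N_j]\,\mathrm{TV}(\nu^0_j,\nu^j_j) = \Theta(\epsilon\,p\eta\,\mathbb{E}_{\nu^0}[N_j])$. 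The essential feature is that privacy charges a cost proportional to $\epsilon$ times the \emph{number} of differing records, irrespective of the reward magnitudes $M$; thus the heavy tail never inflates the privacy term and enters only through the mean-gap-versus-$\mathrm{TV}$ relationship. I expect this coupling-plus-group-privacy step to be the main obstacle, because $m$ (the number of differing samples) is random and adaptively determined, so the group-privacy bound $e^{\epsilon m}$ must be combined with the coupling in expectation (via convexity) rather than for a fixed $m$; one must also verify that at the optimum $\epsilon m=\Theta(1)$, keeping the small-$\epsilon m$ expansion valid.

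It remains to optimize. Plugging the private divergence bound into the averaged testing inequality yields $\frac1A\sum_j\mathbb{E}_{\nu^j}[N_j] \le \frac{K}{A}\big(1 + c\,\epsilon\, p\eta\,K\big)$, so the family is hard (average good-arm pulls below $K/2$) as long as $\eta \lesssim A/(\epsilon p K)$. Taking $\eta$ at this threshold gives $\Delta = \Theta\big(\tfrac{A}{\epsilon K}\,p^{-1/(1+v)}\big)$ and hence $\max_j Reg_K \gtrsim \Delta K \gtrsim \tfrac{A}{\epsilon}\,p^{-1/(1+v)}$; since the exponent of $p$ is negative, I would push $p$ to its smallest admissible value $p=\Theta(A/(\epsilon K))$ (which keeps $\eta=\Theta(1)\le 1/2$ and is valid precisely when $K$ is large enough that $A/(\epsilon K)\le 1/2$, matching the ``$K$ large enough'' hypothesis). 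Substituting gives the claimed $\Omega\big((A/\epsilon)^{v/(1+v)}K^{1/(1+v)}\big)$. The same balancing run with the non-private divergence bound recovers the $\Omega(A^{v/(1+v)}K^{1/(1+v)})$ term, and since $\epsilon\le1$ the private term dominates.
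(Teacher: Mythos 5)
Your proposal is sound and lands on the same hard instances and the same final balancing as the paper, but the scaffolding differs in two respects worth noting. The paper works with a single worst pair of environments: it fixes a base instance $\bar P_1$ (two-point rewards supported on $\{0,1/\gamma\}$, exactly your $M=p^{-1/(1+v)}$ construction up to constants, with $\gamma=(5\Delta)^{1/v}$), picks the least-pulled suboptimal arm $i$ so that $\mathbb{E}_{\bar P_1}[N_i(K)]\le K/(A-1)$, applies the Bretagnolle--Huber inequality to get $Reg_{K,\bar P_1}+Reg_{K,\bar P_i}\ge \frac{K\Delta}{4}\exp(-\mathrm{KL})$, and then invokes, as a black box, a private KL-decomposition lemma for $\epsilon$-DP bandit mechanisms, $\mathrm{KL}(\mathbb{P}^K_{\mathcal{M}\bar P_1}\|\mathbb{P}^K_{\mathcal{M}\bar P_i})\le 6\epsilon\,\mathbb{E}[N_i(K)]\,\mathrm{TV}(\nu_i\|\nu_i')$, finishing with $\Delta=((A-1)/(K\epsilon))^{v/(1+v)}$ --- equivalent to your choice $p=\Theta(A/(\epsilon K))$, $\eta=\Theta(1)$. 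You instead run an Assouad-style average over all $A$ perturbations with a TV drift bound, and you propose to re-derive the private divergence control yourself via maximal coupling plus group privacy. Both routes exploit the identical refinement --- the divergence must scale with $\epsilon\,\mathbb{E}[N_j]\,\mathrm{TV}$ per arm, harvested either through the least-pulled arm ($\mathbb{E}[N_i]\le K/(A-1)$) or through your averaging ($\sum_j\mathbb{E}[N_j]=K$) --- so the approaches are close cousins; your version buys independence from the Bretagnolle--Huber step at the price of proving the divergence lemma from scratch.

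One substantive caution on that crux step. A static coupling of entire per-user reward tables makes the two inputs differ on $\mathrm{Bin}(K,\mathrm{TV})$ users regardless of which arms are pulled, and combining that with group privacy yields only $\mathrm{TV}(\mathbb{P}_{\nu^0},\mathbb{P}_{\nu^j})\lesssim \epsilon\,K\,\mathrm{TV}(\nu^0_j,\nu^j_j)$, i.e., $K$ in place of $\mathbb{E}_{\nu^0}[N_j]$. Running your optimization with that weaker bound loses exactly the $A^{v/(1+v)}$ factor (the threshold degrades from $\eta\lesssim A/(\epsilon pK)$ to $\eta\lesssim 1/(\epsilon pK)$), so the refined form is not a nicety but the heart of the theorem. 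Your phrasing --- coupling the realized reward stream of arm $j$, of adaptive length $N_j$ --- is circular as stated, since which rounds belong to that stream depends on the very transcript whose law you are comparing; you correctly flag this as the main obstacle. The clean resolution is the sequential chain-rule argument (per-round conditional KL bounded by $O(\epsilon)\,\mathrm{TV}(\nu_{a_t},\nu'_{a_t})$, summed in expectation), which is precisely the content of the lemma the paper cites; you should either carry out that per-round decomposition explicitly or cite it as the paper does, rather than rely on the one-shot coupling-plus-group-privacy bound.
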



\begin{remark}
~\citet{tao2022optimal} gives an instance-independent upper bound of $O\left(\left({A\log K}/{\epsilon}\right)^{\frac{v}{1+v}}K^{\frac{1}{1+v}}\right)$ for heavy-tailed MAB in central DP by proposing a private and robust version of the successive elimination algorithm. Our above lower bound almost matches the upper bound up to a factor of $(\log K)^{\frac{v}{1+v}}$. Thus, the above lower bound is already near optimal. 
\end{remark}

Now, inspired by \cite{vietri2020private}, we construct hard instances for MDPs as depicted in Figure \ref{JDPfigure}. Within the class of MDPs,  state space is denoted as $\mathcal{S}:=[n]\cup\{+ ,-\}$ and action space is denoted as $\mathcal{A}:=[m]$. During each episode, the agent initiates from one of the initial states, randomly selected with equal probability from the set. Now, we construct hard instances for MDPs as depicted in Figure \ref{JDPfigure}. Within the class of MDPs,  state space is denoted as $\mathcal{S}:=[n]\cup\{+ ,-\}$ and action space is denoted as $\mathcal{A}:=[m]$. During each episode, the agent initiates from one of the initial states, randomly selected with equal probability from the set. For each initial state, the agent faces $m$ potential actions, and transitions can only lead it to either of the two terminal states, $\{+,-\}$.

\begin{figure}[htb]
    \centering
    \vspace{-1cm}
    \hspace{-1cm}\includegraphics[scale=0.28]{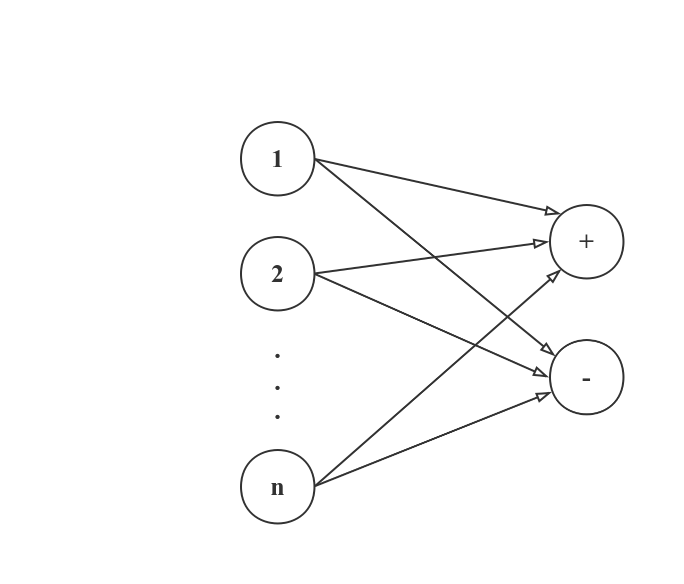}
    \caption{Hard MDP instance for JDP model}
    \label{JDPfigure}
\end{figure}

Such an instance can be regarded as the composition of $n$ parallel MAB instances. The transition probabilities between the initial state $s \in \{1,\dots,n\}$ and the absorbing states $\{+, -\}$ are determined by each instance of the MAB. We assign an index to each MAB instance based on its optimal arm (or action) $I_s \in \{1,\dots,m\}$ within each initial state. Then we transfer the randomness of the Bernoulli reward distribution in the MAB to transition probabilities in the MDP. Therefore, based on the instances for proof of Theorem \ref{thm:LowBounMAB}, we define the transition probabilities in Figure \ref{JDPfigure} such that in the MAB instance where the optimal arm $I_s=1$, we have $P(+|s,1)= \frac{5}{10} \gamma^{1+v},\ P(-|s,1)=1-P(+|s,1)$ and $ P(+|s,a)=\frac{3}{10}\gamma^{1+v},\  P(-|s,a)=1-P(+|s,a) \ \text{for} \  a \neq 1.$
And for the MAB instance with $I_s \neq 1$, we have similar transition probabilities as above except the $I_s$-th arm, i.e.,  
$P(+|s,I_s)=\frac{7}{10} \gamma^{1+v}, \ P(-|s,I_s)=1-P(+|s,I_s),$
where $\gamma>0$ is a parameter to be determined later. Every action which transits to state $+$ provides reward $1/\gamma$ while actions transitioning to state $-$ provide reward $0$.

Based on all the above results and instances, now we have our main theorem. 
\begin{theorem}[JDP Regret Lower Bound for Heavy-tailed MDPs]
\label{JDPlowerBoun}
For any $\epsilon$-JDP algorithm $\mathcal{M}$ there exists a heavy-tailed MDP with $S$ states and $A$ actions over $H(=1)$ time steps per episode such that for any initial state $s\in \mathcal{S}$ the expected regret of $\mathcal{M}$ after $K$ episodes satisfies 
$$Reg(T) \geq \Omega\left(\left(SA/\epsilon\right)^{\frac{v}{1+v}}K^{\frac{1}{1+v}}\right).$$
\end{theorem}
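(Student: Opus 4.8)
The plan is to exploit the product structure of the hard instance in Figure~\ref{JDPfigure}: the MDP is literally $n$ independent copies of the $m$-armed heavy-tailed bandit used to prove Theorem~\ref{thm:LowBounMAB}, glued together through a uniformly random and, crucially, \emph{publicly revealed} initial state $s\in[n]$, with $S=n+2$ and $A=m$. Since $H=1$ and the reward of any action is $1/\gamma$ upon transiting to $+$ and $0$ otherwise, the mean reward of action $a$ at state $s$ equals $P(+\,|\,s,a)/\gamma$, so the per-state action gaps are of order $\gamma^{v}$ while the $(1+v)$-th moment of each reward equals $(1/\gamma)^{1+v}\,P(+\,|\,s,a)=\Theta(1)$; choosing $\gamma$ so this moment is at most $1$ makes the instance admissible. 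The first step is a regret decomposition: writing $Reg(T)=\sum_{s=1}^{n}Reg_s$, where $Reg_s$ is the regret accumulated on the episodes that begin in state $s$, it suffices to lower bound each $Reg_s$ by the bandit bound on the corresponding copy. To avoid the circularity that the hard bandit instance of each copy depends on the induced algorithm, I would run a minimax/averaging argument, placing a product prior over the per-copy configurations (the optimal-arm index $I_s$) and lower bounding the Bayes regret, from which a single worst-case MDP is extracted.

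The second, and main, step is the privacy reduction. I would fix an initial state $s$ and view the subsequence of episodes beginning at $s$ as an interaction with a single $m$-armed DP bandit, the user data being the Bernoulli transition (equivalently reward) responses. The obstacle is that the MDP algorithm only satisfies $\epsilon$-JDP, which protects the actions given to all \emph{other} users but not a user's own action, whereas Theorem~\ref{thm:LowBounMAB} is stated for $\epsilon$-DP, which protects the entire action sequence. This is exactly why the initial state is taken to be public: conditioning on the realized sequence of initial states, changing one user who visits $s$ leaves the initial-state sequence (hence the arm-to-copy assignment) unchanged, and the JDP guarantee then certifies that the bandit algorithm induced on copy $s$ is $\epsilon$-DP in the sense of the DP-for-MAB definition. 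I would formalize this by introducing JDP-with-public-initial-state as an intermediate notion and showing (i) that any $\epsilon$-JDP MDP algorithm is $\epsilon$-JDP with public initial state, and (ii) that restricting such an algorithm to a fixed initial state yields an $\epsilon$-DP bandit algorithm on that copy.

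Once the reduction holds, I would invoke Theorem~\ref{thm:LowBounMAB} on each copy. Because the $K$ initial states are i.i.d.\ uniform over $[n]$, with high probability each state is visited $\Theta(K/n)$ times, so each copy is an $m$-armed, $\epsilon$-DP, heavy-tailed bandit run for $\Theta(K/n)$ rounds, giving $Reg_s=\Omega\big((m/\epsilon)^{v/(1+v)}(K/n)^{1/(1+v)}\big)$. Summing the $n$ copies,
\[
Reg(T)\;\ge\;\sum_{s=1}^{n}Reg_s\;=\;\Omega\!\left(n\,(m/\epsilon)^{\frac{v}{1+v}}(K/n)^{\frac{1}{1+v}}\right)=\Omega\!\left((nm/\epsilon)^{\frac{v}{1+v}}K^{\frac{1}{1+v}}\right),
\]
and substituting $n=\Theta(S)$, $m=A$, and $T=K$ (since $H=1$) yields the claimed bound.

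The delicate points I expect to require the most care are: (a) verifying that the public-initial-state reduction preserves the DP constant $\epsilon$ \emph{exactly}, so that no privacy budget is lost when splitting into $n$ copies—this is what keeps the full $\epsilon$ in each bandit bound and is the crux of converting a JDP guarantee into $n$ separate DP guarantees; and (b) controlling the randomness of the per-state visit counts so that the $\Theta(K/n)$ round count, together with the ``$K$ large enough'' hypothesis of Theorem~\ref{thm:LowBounMAB}, holds simultaneously for all $n$ copies with high probability, and so that the failure event contributes negligibly to the expected regret.
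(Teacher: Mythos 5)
Your proposal is correct and follows essentially the same route as the paper's proof: the same parallel-bandit hard instance, the same bridge via $\epsilon$-JDP with public initial state (the paper's Lemmas from \citet{vietri2020private}, which guarantee the restriction to a fixed initial state is an $\epsilon$-DP MAB mechanism with no loss in $\epsilon$), the same per-state invocation of Theorem \ref{thm:LowBounMAB}, and the same summation over the $S-2$ copies. The only differences are bookkeeping: the paper handles the random visit counts $K_s$ by lower bounding $\mathbb{E}\bigl[K_s^{1/(1+v)}\bigr]$ via Markov's inequality and the fact that $\mathbb{P}[K_s \ge K/(S-2)] \approx 1/2$ for the binomial, where you instead use a simultaneous high-probability bound $K_s = \Theta(K/n)$, and your explicit prior-averaging step to decouple the per-copy hard-instance choice from the induced algorithm is a point the paper leaves implicit.
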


\begin{remark}
For episodic MDPs in JDP model with bounded rewards,  the regret lower bound 
 is $\Omega\left(\sqrt{HSAK}+{SAH\log K}/{\epsilon}\right)$  \cite{vietri2020private},  where the additional term of $\Omega\left({SAH\log K}/{\epsilon}\right)$ is the price of privacy compared with the non-private case. However, in the case of the heavy-tailed reward, 
  the lower bound in the above theorem shows that the price to pay for JDP is a factor of $\left(\frac{1}{\epsilon}\right)^{\frac{v}{1+v}}$ compared with the lower bound in the non-private case \cite{zhuang2021no}. Our above lower bound also matches the error due to private estimation for heavy-tailed distributions in our upper bounds (the third term of the result in Theorem \ref{thm:RegretVI} and the third term of the result in Theorem \ref{thm:regPO}) for JDP model on parameters $\epsilon, S, A, K$. It is also notable that here we only present the lower bound for JDP MDPs with heavy-tailed rewards under the case where there is just one step in each episode. 
  For the multiple-step MDPs with heavy-tailed rewards case,  the  lower bound is more challenging.
  We leave the problem of tight lower bound for  multiple-step MDPs with heavy-tailed rewards in JDP model as an open problem.
\end{remark}
\subsection{ Regret Lower Bound under LDP}
 Unlike the techniques in the  JDP case which make a reduction from MDPs to MAB, here we can directly construct an instance of MDPs to get the lower bound. 

Inspired by \cite{garcelon2021local}, we  construct the following MDP instance for a given number of states $S$ and actions $A$. Such  MDP instance can be represented by a tree whose root is the  initial state $0$ with $A$ actions that deterministically lead to the next state. Moreover, each node in the tree has $A$ children and there are exactly $S-2$ states, excluding terminal states. The leaves of the tree, denoted as $\mathcal{L}={x_1,\dots,x_L}$, represent the set of possible transitions from the intermediate states to two terminal states, labeled as $+$ and $-$. At the terminal states, the agent will receive the reward of $1/\gamma$ and $0$, respectively, where $\gamma>0$ is a parameter to be determined later. The tree without nodes $+$ and $-$ is a perfect $A$-ary tree. We show the instance with $S=15$ and $A=3$ in Figure \ref{LDPfigure} as an example. 

\begin{figure}[!htb]
    \centering
    \vspace{-0.1in}
    \includegraphics[scale=0.2]{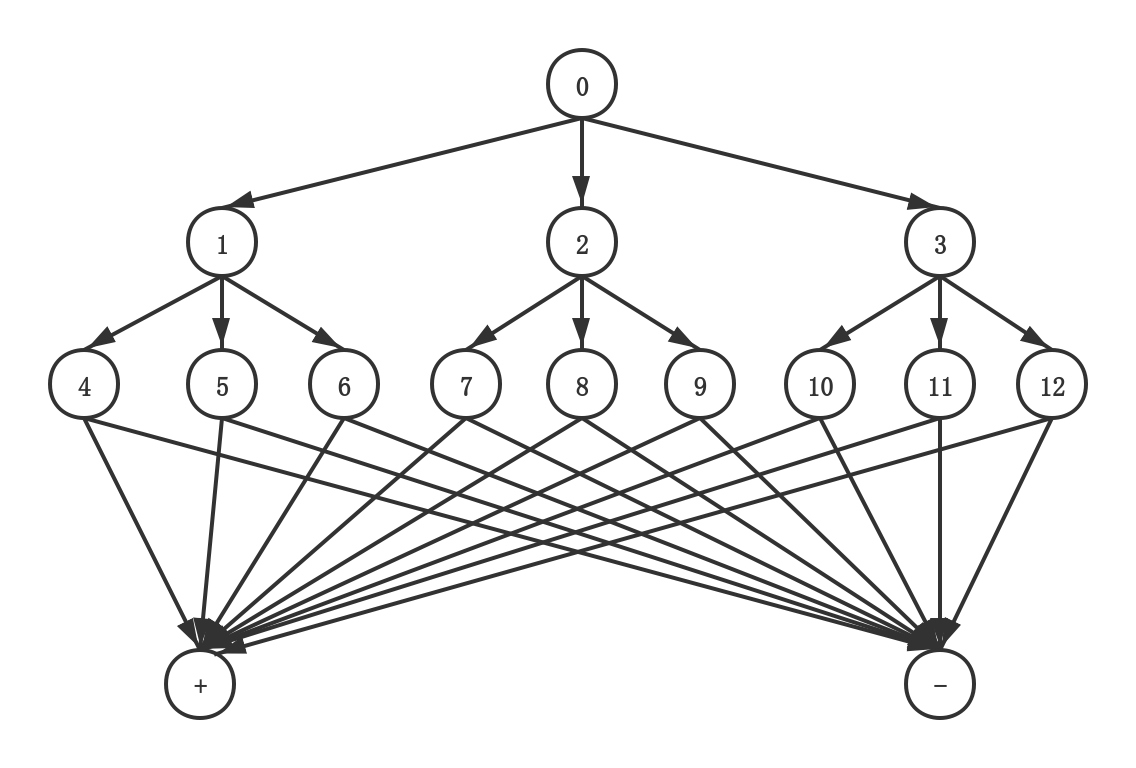}
    \caption{Hard MDP instance for LDP model}
    \label{LDPfigure}
    \vspace{-0.1in}
\end{figure}

Assume $d>0$ represents the depth of the tree where the depth of the tree with $S-2$ nodes is $d-1$ and nodes $+,-$ are located at depth $d$. Without loss of generality, we assume that all leaves, $x_1,\dots,x_L$, are positioned at depth $d-1$, implying that the number of leaves satisfies $L=A^{d-1}\ge (S-2)/2$. 
Furthermore, we make the assumption that $H=d+1$. That means once the agent arrives at $+$ or $-$, it arrives at the end of the episode. Next, we provide the distributions for these leaves. 

We assume there exists a unique action $a^*$ and leaf $x_{i^*}$ such that: 
$
   P(+|x_{i^*},a^*)=\gamma^{1+v} \ \text{and} \ P(-|x_{i^*},a^*)=1-\gamma^{1+v}. 
$
where $\gamma^{1+v}\in (0,\frac{3}{4}]$.
 Each of the other leaves has  a transition  probability
 $
     P(+|x_{i},a)=\frac{1}{2}\gamma^{1+v} \ \text{and} \ P(-|x_{i},a)=1-\frac{1}{2}\gamma^{1+v}.
 $
 We denote above instance by $\mathbb{P}_{(x_{i^*},a^*)}$.
 In order to get the regret lower bound, we also consider another instance $\mathbb{P}_0$ where for all leaf states and any action, the transition probabilities are 
$
 P(+|x_{i},a)=\frac{1}{2}\gamma^{1+v} \ \text{and} \ P(-|x_{i},a)=1-\frac{1}{2}\gamma^{1+v}.
 $ Based on the above instances, we provide a lower bound on the regret for our problem.

\begin{theorem}[LDP Regret Lower Bound]
\label{LDPlowerBoun}
For any $\epsilon$-LDP algorithm $\mathcal{M}$ where $\epsilon \in (0,1]$, there exists a heavy-tailed MDP instance with $S(\ge 3)$ states, $A(\ge2)$ actions  and one-step heavy-tailed reward per episode such that the expected regret of $\mathcal{M}$ after $K$ episodes is
$$Reg(T) \ge \Omega\left(\left(SA/\epsilon^2\right)^{\frac{v}{1+v}}K^{\frac{1}{1+v}}\right).$$
\end{theorem}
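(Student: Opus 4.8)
The plan is to run a ``needle-in-a-haystack'' reduction across the $\Theta(SA)$ leaf-action pairs of the tree, with the per-episode information controlled by an $\epsilon$-LDP contraction. First I would record the two facts that make the construction a valid heavy-tailed instance: the reward equals $1/\gamma$ at $+$ and $0$ at $-$, so its $(1+v)$-th moment is $(1/\gamma)^{1+v}P(+)\le(1/\gamma)^{1+v}\gamma^{1+v}=1$ (this is exactly why $\gamma^{1+v}\le 3/4$ is imposed, keeping all probabilities valid), while the per-episode mean reward collected at a leaf is $P(+)/\gamma$, so reaching the planted pair $(x_{i^*},a^*)$ yields $\gamma^v$ and every other leaf-action yields $\tfrac12\gamma^v$. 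Since the agent navigates the tree deterministically and terminates at $+$ or $-$, each episode visits exactly one leaf-action pair, and in instance $\mathbb{P}_{(x_{i^*},a^*)}$ the optimal value is $\gamma^v$; hence, letting $N_{(x_{i^*},a^*)}$ count the episodes ending at the planted pair,
$$Reg_{(x_{i^*},a^*)}\ \ge\ \tfrac12\gamma^v\big(K-\mathbb{E}_{(x_{i^*},a^*)}[N_{(x_{i^*},a^*)}]\big).$$

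Next I would average this over all $M=\Theta(SA)$ admissible choices of the planted pair and argue that no $\epsilon$-LDP algorithm can make $\mathbb{E}_{(x_{i^*},a^*)}[N_{(x_{i^*},a^*)}]$ large for most of them, by comparing each alternative with the null instance $\mathbb{P}_0$. The central step is an information bound: writing the two privatized view-distributions as $\mu_0$ and $\mu_{(x_{i^*},a^*)}$, I would use $\mathbb{E}_{(x_{i^*},a^*)}[N]-\mathbb{E}_0[N]\le K\,\mathrm{TV}(\mu_0,\mu_{(x_{i^*},a^*)})\le K\sqrt{\tfrac12 KL(\mu_0\|\mu_{(x_{i^*},a^*)})}$ (Pinsker), decompose the KL over episodes via the chain rule for the interactive protocol, and invoke the $\epsilon$-LDP KL-contraction, which bounds each per-episode term by $C(e^\epsilon-1)^2$ times the raw KL divergence between the two leaf-transition laws. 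Because the instances differ only at the planted pair, this raw divergence is nonzero only on episodes that actually reach $(x_{i^*},a^*)$ and is of order $\gamma^{1+v}$, so summing gives $KL(\mu_0\|\mu_{(x_{i^*},a^*)})\le C(e^\epsilon-1)^2\gamma^{1+v}\,\mathbb{E}_0[N_{(x_{i^*},a^*)}]$. This is where the $\epsilon^2$ (rather than the $\epsilon$ of the JDP bound) enters.

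With this in hand I would combine the pieces: using $\sum_{(i,a)}\mathbb{E}_0[N_{(i,a)}]=K$ (exactly one pair per episode) together with Cauchy-Schwarz, $\tfrac1M\sum\sqrt{\mathbb{E}_0[N_{(i,a)}]}\le\sqrt{K/M}$, so
$$\tfrac1M\sum_{(x_{i^*},a^*)}\mathbb{E}_{(x_{i^*},a^*)}[N_{(x_{i^*},a^*)}]\ \le\ \tfrac KM + C'(e^\epsilon-1)\,\gamma^{\frac{1+v}{2}}\,K\sqrt{K/M}.$$
I would then tune $\gamma$ by setting $\gamma^{1+v}=\Theta\!\big(M/((e^\epsilon-1)^2K)\big)=\Theta\!\big(SA/(\epsilon^2K)\big)$ — admissible, i.e.\ $\le 3/4$, once $K$ is large enough — so that the second term is at most $K/4$ and the averaged count is at most $K/2$. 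The averaged regret is then at least $\tfrac14\gamma^v K=\Omega\big((SA/\epsilon^2)^{v/(1+v)}K^{1/(1+v)}\big)$, and since the maximum over the family dominates the average, the bound holds for some instance.

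I expect the main obstacle to be the information bound in the second paragraph: making the chain-rule decomposition of the privatized views rigorous in the adaptive, interactive LDP setting (the algorithm's routing through the tree and its future policies depend on past privatized reports), correctly localizing the raw divergence to the episodes that reach the planted leaf-action, and applying the $\epsilon$-LDP contraction with the correct $(e^\epsilon-1)^2$ factor — this is precisely the step that separates the LDP rate (with $\epsilon^2$) from the central/JDP rate (with $\epsilon$). A secondary point is the reduction itself: verifying that deterministic navigation makes the tree behave as $\Theta(SA)$ parallel ``arms'' so that both $M=\Theta(SA)$ and the global budget $\sum\mathbb{E}_0[N]=K$ are legitimate.
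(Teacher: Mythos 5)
Your proposal is correct and follows essentially the same route as the paper's proof: the identical tree instances with a planted leaf--action pair, the regret decomposition $\tfrac12\gamma^v(K-\mathbb{E}[N])$, Pinsker plus the episode-wise chain rule and the $(e^\epsilon-1)^2$ LDP KL-contraction of Duchi et al., localization of the raw divergence (of order $\gamma^{1+v}$) to visits of the planted pair, averaging over the $\Theta(SA)$ pairs via $\sum_{(i,a)}\mathbb{E}_0[N_{(i,a)}]=K$, and tuning $\gamma^{1+v}=\Theta\left(SA/(\epsilon^2K)\right)$ — your TV/Cauchy--Schwarz steps are just cosmetic variants of the paper's kl-lemma/Jensen steps. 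The only nitpick is that the condition $\gamma^{1+v}\le 3/4$ is imposed in the paper to bound the Bernoulli KL by $\gamma^{1+v}$ (via $\frac{1}{4(1-\gamma^{1+v})}\le 1$), not merely to keep probabilities valid, and your explicit note that admissibility requires $K$ large enough is a point the paper leaves implicit.
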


\begin{remark}
Compared with the lower bound of $\Omega\left(\frac{H\sqrt{SAK}}{\min\{e^\epsilon-1,1\}}\right)$ for the bounded case in  ~\citet{garcelon2021local}, we can see the price of privacy in the heavy-tailed case is a factor of $\left(\frac{1}{\epsilon^2}\right)^{\frac{v}{1+v}}$. When $v=1$ and $H=1$, our lower bound can recover the lower bound of the bounded case. Compared with the optimal instance-independent lower bound of $\Omega\left(\left({A}/{\epsilon^2}\right)^{\frac{v}{1+v}}K^{\frac{1}{1+v}}\right)$ for heavy-tailed MAB in LDP model given by ~\citet{tao2022optimal}, our above lower bound with $S=1$ can also recover the result.
\end{remark}



 \section{Conclusion and Future Work}
 In this work, we provided the first study on finite horizon Markov decision processes (MDPs) with heavy-tailed rewards in both joint (JDP) and local differential privacy (LDP) models. We mainly focused on the case where the reward distributions have only finite $(1+v)$-th raw moment for $v \in (0,1]$. 
We first proposed a private and robust version of both UCB-based value-iteration and policy-optimization algorithms. 
To guarantee privacy, we adopted the adaptive Tree-based mechanism for JDP and the  Laplacian mechanism for LDP. 
Based on the algorithm, we established regret bounds for both JDP and LDP cases. Finally, we established lower bounds for  finite-horizon MDPs with heavy-tailed rewards in  both JDP and LDP models. 
Through these results, we also found some differences between our problem and the problem of private MDPs with bounded rewards and some differences between our problem and the problem in the non-private case.  All of our ideas, techniques, and frameworks  can also potentially be applied to other related private reinforcement learning problems.

There are still some open problems left. First of all, in the whole paper, we assume the parameters $u$ and $v$ are known in advance, but how to deal with the case where $u$ and $v$ are unknown which is a more practical situation in the real world. Secondly, the order of $T$ is $\frac{2+v}{2(1+v)}$ in our regret upper bound under the LDP model, which is larger than the order of $\frac{1}{1+v}$ in our lower bound for the problem under LDP. Thus, can we further close the gap between these two bounds by designing other algorithms? Finally, in this paper, while we mainly focus on the finite-horizon problem, we proposed some private mean estimators and some new hard instances to prove the lower bounds. Can we extend these techniques and ideas to other related problems such as private MDPs with finite diameters with heavy-tailed rewards or some model-free reinforcement learning problems?

\section*{Acknowledgments}
Yulian Wu and Di Wang are supported in part by  BAS/1/1689-01-01, URF/1/4663-01-01, FCC/1/1976-49-01 of King Abdullah University of Science and Technology (KAUST).  Di Wang is also supported by the funding of the SDAIA-KAUST Center of Excellence in Data Science and Artificial Intelligence (SDAIA-KAUST AI). Xingyu Zhou is supported in part by NSF CNS-2153220.

  \bibliography{sample-base}
  \bibliographystyle{icml2022}


\newpage
\appendix
\onecolumn
\section*{Appendix}
\section{Related work and challenges}
Besides the work we mentioned above, there are other numerous previous studies on either private RL/bandits with bounded/sub-Gaussian rewards \cite{sajed2019optimal,liao2021locally, chen2021differential,lei2020privacy,zheng2020locally,zhou2022differentially,ren2020multi,zhou2021local,li2022differentially,li2023private} or non-private RL/bandits with heavy-tailed rewards  \cite{zhuang2021no,bubeck2013bandits,lee2020optimal,yu2018pure,lattimore2017scale,agrawal2021regret,vakili2013deterministic,agrawal2020optimal,li2023variance}. In the following, we only discuss the work that is most related to ours.

For the studies of non-private RL with heavy-tailed rewards, \citet{bubeck2013bandits}  first considers the finite-armed bandit problem in which the reward distributions have only finite $(1+v)$-th moments for some $v \in (0,1]$. It develops a robust UCB algorithm by leveraging several mean estimators for heavy-tailed distributions, such as the truncated mean estimators and the median of mean estimator. Leveraging techniques from these robust mean estimators, \citet{zhuang2021no} considers the heavy-tailed rewards in the problem of undiscounted reinforcement learning and proposes the method of Heavy-UCRL2 and Heavy-Q-learning for  model-based and model-free settings respectively. It also generalizes the algorithms to deep reinforcement learning and presents Heavy-DQN as an example. Motivated by these, we use the idea of a truncated mean estimator as the backbone in our frameworks to deal with heavy-tailed rewards in our problem. 
However, as we mentioned earlier there are several additional challenges by injecting additional noises. Moreover, in this paper, we also propose policy-optimization-based algorithms which have not been studied before for heavy-tailed rewards even in the non-private case. 

For private RL/bandits with heavy-tailed rewards, to the best of our knowledge, \citet{tao2022optimal} is the only one which investigates  MAB with heavy-tailed rewards in both central and local DP models. It proposes robust versions of successive elimination (SE) algorithms for the problem in central DP and local DP models and establishes (near) optimal rates. However, for the problem we studied, it is unsuitable for the central DP model since several recent works \citet{shariff2018differentially,dubey2021no} show that the standard DP model is irreconcilable with sub-linear regret for contextual bandits. Thus, we consider a relaxation of central DP, i.e., the joint differential privacy. Besides, we cannot use an arm elimination algorithm for our problem because of the existence of states. Since our private heavy-tailed  MDPs with $H=1$ are just the private heavy-tailed contextual bandit problem and that with $H=1,S=1$ is a private heavy-tailed MAB problem, our problem could be considered as a more general case. There are also some studies on private MDPs recently. However, all of them only consider the bounded reward case and cannot be extended to the heavy-tailed one. For example, \citet{chowdhury2021differentially} studies DP episodic MDPs with bounded rewards and proposes policy optimization and value iteration frameworks, and presents the regret upper bounds for these frameworks. Motivated by these frameworks, we propose Private-Heavy-UCBVI and Private-Heavy-UCBPO for our problem for both JDP and LDP. However, here we cannot directly use their Tree-based mechanism for JDP and the allocation methods for the privacy budget since the heavy-tailed rewards are now unbounded. Besides, \citet{garcelon2021local} establishes the lower bound of regret minimization for  MDPs with bounded rewards in LDP model by constructing some MDP examples satisfying bounded rewards. However, we cannot directly use the MDP instance to get the regret lower bound since our rewards are heavy-tailed with bounded $(1+v)$-th moment. See Table \ref{tab:my_label} for a detailed comparison. 

Robust and differentially private estimation has drawn much attention in recent years.   \citet{barber2014privacy} provided the first study on private mean estimation for distributions with the bounded moment, which is extended by \citet{kamath2020private,liu2021robust,brunel2020propose} recently. However, all of them need to assume the underlying distribution has the second-order moment, while in this paper we only need to assume the reward distributions have the $(1+v)$-th  moment for some $v\in (0, 1]$. Moreover, all of these works only focus on the central DP model and offline setting and it is unknown whether they could be extended to the stream setting. Thus,  our problem is more general. Besides the mean estimation problem, recently \citet{wang2020differentially,kamath2022improved,jin2022efficient,hu2022high} study differentially private stochastic convex optimization with heavy-tailed data. However, all of them need to assume the distribution of  the gradient has the second-order moment and cannot be used in the stream setting. \citet{wang2022differentially} studies private $\ell_1$-regression where the covariate $x$ has bounded $(1+v)$-th moment. However, its method cannot be generalized to other problems. \citet{tao2022private} considers the differentially private stochastic convex optimization with heavy-tailed data where the distribution of each coordinate of the gradient has bounded $(1+v)$-th moment. However, its method cannot be used in the online stream setting. 

\textbf{Challenges.} Compared with the previous related work, there are additional challenges for private MDPs with heavy-tailed rewards, which are mainly from the following three aspects. \textbf{First}, in the non-private case,  \citet{zhuang2021no} proposes methods that truncate each reward via a certain threshold. However, due to the additional privacy constraint in our problem, we need to be more careful in choosing the threshold, which is related to three kinds of error: an error from truncating rewards, an error due to using  a finite number of truncated rewards to estimate  heavy-tailed reward distributions and an error due to the noise for ensuring privacy.  
In detail, we need to first get a general upper bound and then based on the trade-off among these three terms of errors we can determine the optimal threshold. However, getting such a trade-off in general is still more difficult than in the non-private case, as here we will have more biased estimators due to the noise we added. For example, we need to add noise to the count of each state-action pair and this will make empirical transition probabilities be biased. \textbf{Secondly}, in  private MDPs with bounded rewards case,  \citet{chowdhury2021differentially} assumes that all rewards are bounded by $1$ and  it 
determines and allocates the noise added
for each step in an episode by using the total $\ell_1$ distance of all data streams. However, we cannot adopt the same strategy as the heavy-tailed rewards now are unbounded. 
The problem is still challenging even if we just adopt similar reward-truncating based methods in some previous related work \cite{zhuang2021no,tao2022optimal}, as the thresholds of truncated rewards are increasing, which makes these truncated rewards not uniformly bounded. 
\textbf{Thirdly}, in the stateless case, i.e.,  private heavy-tailed MAB, \citet{tao2022optimal}  achieves the optimal regret rates for both central DP and local DP models via successive elimination algorithms. However, we cannot use similar methods in the RL setting since the states and transition probabilities can also affect policy.

\section{Notations and Technical Lemmas}

\begin{table}[htb]
\linespread{2}
\centering
\caption{List of Notations}
\begin{tabular}{|c|l|}

\hline
\textbf{Notations} &  \textbf{Descriptions}   \\
\hline
 $\mathcal{S}$ & state space with cardinality $S$  \\
 $\mathcal{A}$ & action space with cardinality $A$ \\
 $\Pi$ & set of all non-stationary policies\\ 
 $K$ & number of episodes\\
 $H$ &   episode length   \\
 $P_h$ &  transition kernel at step $h$\\
 $r_h$ & mean of the reward distribution at step $h$\\
 $\pi_h$ & policy at step $h$ \\
 $s_h^k$ & state at episode $k$ and step $h$ \\
 $a_h^k$ & the action taken by agent at episode $k$ and step $h$\\
 $r_h^k$ & random reward at episode $k$ and step $h$\\
 $V_h^\pi (s)$ & value function of a state $s$ under policy $\pi$ at step $h$\\
 $V_h^*(s)$ & optimal value function\\
 $Q_h^\pi (s,a)$ & Q-function of taking action $a$ in state $s$ under policy $\pi$ at step $h$\\
 $\pi^*$ & optimal policy\\
 $\mathcal{R}_h(s,a)$ & reward distribution for state-action pair $(s,a)$ at step $h$\\
 $T$ & $T=KH$ is the total number of steps\\
 $N_h^k(s,a)$ & count of visiting state-action pair $(s,a)$ at step $h$ before episode $k$\\
 $N_h^k(s,a,s^\prime)$ & count of going to state $s^\prime$ from $s$ upon playing action $a$ at step $h$ before episode $k$\\
 $R_h^k(s,a)$ & sum of truncated rewards obtained by taking action $a$ on state $s$ at step $h$ before episode $k$\\
 $\epsilon$ & privacy budget\\
 $\tilde{N}_h^k(s,a)$ & the privatized version of $N_h^k(s,a)$\\
 $\tilde{N}_h^k(s,a,s^\prime)$ & the privatized version of  $N_h^k(s,a,s^\prime)$\\
 $\tilde{R}_h^k(s,a)$ & the privatized version of  $R_h^k(s,a)$\\
 \multirow{2}{*}{$\tilde{r}_h^k(s,a)$} & the private empirical mean estimation of truncated rewards for state-action pair $(s,a)$ \\
 & at step $h$ before episode $k$\\
 $\tilde{P}_h^k(s^\prime|s,a)$ & the private empirical transition probability\\
 $[n]$ & the set of $\{1,2,\dots,n\}$ \\
 $a\vee b$ & the maximal value between $a$ and $b$ \\
 \hline
\end{tabular}
\label{tab:notation}
\end{table}

\begin{lemma}[Parallel Composition]\label{parallel}
    Suppose there are $k$ number of $\epsilon$-differentially private mechanisms $\{\mathcal{M}_i\}_{i=1}^k$ and $k$ disjoint datasets denoted by $\{D_i\}_{i=1}^k$. Then the algorithm, which applies each $\mathcal{M}_i$ on the corresponding $D_i$, preserves $\epsilon$-DP in total.
\end{lemma}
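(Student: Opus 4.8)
The plan is to exploit the defining feature of the neighboring relation: two adjacent global databases differ in exactly one individual's record, so under a fixed partition into disjoint pieces, that lone change lands in exactly one piece while all the others remain literally identical. First I would fix two neighboring global databases $D$ and $D'$ differing in a single user, and write their induced partitions as $(D_1,\dots,D_k)$ and $(D_1',\dots,D_k')$. Because the pieces are disjoint, the single altered record falls in precisely one of them; hence there is an index $j$ with $D_i=D_i'$ for every $i\neq j$, and with $D_j,D_j'$ themselves neighboring datasets (they differ in one record). This localization is the crux of why disjointness avoids any $k$-fold composition penalty.

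Next I would use the independence of the mechanisms $\{\mathcal{M}_i\}_{i=1}^k$ to factor the joint law of the composed release $\mathcal{M}(D)=(\mathcal{M}_1(D_1),\dots,\mathcal{M}_k(D_k))$. On a product event $S=S_1\times\cdots\times S_k$ we have $\mathbb{P}[\mathcal{M}(D)\in S]=\prod_{i=1}^k\mathbb{P}[\mathcal{M}_i(D_i)\in S_i]$. For each $i\neq j$ the factor is unchanged because $D_i=D_i'$, so $\mathbb{P}[\mathcal{M}_i(D_i)\in S_i]=\mathbb{P}[\mathcal{M}_i(D_i')\in S_i]$; for $i=j$ the $\epsilon$-DP guarantee of $\mathcal{M}_j$ on the neighboring pair $(D_j,D_j')$ gives $\mathbb{P}[\mathcal{M}_j(D_j)\in S_j]\le e^\epsilon\,\mathbb{P}[\mathcal{M}_j(D_j')\in S_j]$. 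Multiplying the $k$ factors yields $\mathbb{P}[\mathcal{M}(D)\in S]\le e^\epsilon\,\mathbb{P}[\mathcal{M}(D')\in S]$ on all product rectangles.

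Finally I would extend from rectangles to arbitrary measurable output events. Writing $\mu_i,\mu_i'$ for the laws of $\mathcal{M}_i(D_i),\mathcal{M}_i(D_i')$, the coordinatewise $\epsilon$-DP bound says $\mu_j(A)\le e^\epsilon\mu_j'(A)$ for all measurable $A$, i.e.\ $e^\epsilon\mu_j'-\mu_j$ is a nonnegative measure, which by approximation of indicators by simple functions implies $\int f\,d\mu_j\le e^\epsilon\int f\,d\mu_j'$ for every nonnegative measurable $f$. For a general event $S$ I would integrate $\mathbbm{1}_S$ against the product measure $\mu_1\otimes\cdots\otimes\mu_k$, apply this bound to the $j$-th coordinate (treating the indicator as a nonnegative function of $x_j$ for each fixed value of the others), and then integrate the remaining coordinates against the unchanged $\mu_i=\mu_i'$; by Fubini this recovers $\mathbb{P}[\mathcal{M}(D)\in S]\le e^\epsilon\,\mathbb{P}[\mathcal{M}(D')\in S]$. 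The only mildly technical step is this passage from product rectangles to general measurable sets; the conceptual heart is trivial once disjointness is observed, and the resulting guarantee is $\epsilon$-DP rather than $k\epsilon$-DP, as claimed.
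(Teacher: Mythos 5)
Your proof is correct. The paper never proves this lemma — it is stated in the appendix as a standard technical fact (McSherry's parallel composition theorem) and invoked without argument — so there is no in-paper proof to compare against; your argument (use disjointness to localize the single changed record to one block $D_j$, factor the joint law of the independently run mechanisms, bound the one changed factor by $e^{\epsilon}$ via the DP guarantee of $\mathcal{M}_j$, and pass from product rectangles to general measurable events through Fubini) is precisely the canonical proof of that fact, and the measure-theoretic extension step is handled properly. One caveat worth making explicit: your localization step tacitly assumes the assignment of records to the blocks $D_i$ is data-independent (e.g., by user index), so that a neighboring change stays inside a single block and yields a neighboring pair $(D_j, D_j')$; if instead the partition were a function of the record values, a single replacement could move a record between two blocks, and the same argument would only give $2\epsilon$ under the replacement notion of neighbors (or $\epsilon$ under add/remove neighbors). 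Under the reading intended by the lemma's statement and by the paper's use of it, your proof is complete.
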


\begin{lemma}[Composition Theorem \cite{dwork2014algorithmic}]
\label{compositionThm}
Let $\mathcal{M}_1,\mathcal{M}_2,\dots,\mathcal{M}_h$ be a sequence of randomized algorithms, where $\mathcal{M}_1: \mathcal{X}^n \rightarrow \mathcal{Y}_1$, $\mathcal{M}_2: \mathcal{Y}_1 \times \mathcal{X}^n \rightarrow \mathcal{Y}_2,\dots$, $\mathcal{M}_h: \mathcal{Y}_1 \times \mathcal{Y}_2 \times \mathcal{Y}_{h-1} \times \mathcal{X}^n \rightarrow \mathcal{Y}_h$. Suppose for every $i \in [h]$ and $y_1 \in \mathcal{Y}_1,y_2 \in \mathcal{Y}_2,\dots,y_h \in \mathcal{Y}_h$, we have $\mathcal{M}_i(y_1,\dots,y_{i-1},\cdot) : \mathcal{X}^n \rightarrow \mathcal{Y}_i$ is $\epsilon_i$-DP. Then the algorithm $\mathcal{M}: \mathcal{X}^n \rightarrow \mathcal{Y}_1 \times  \mathcal{Y}_2 \times  \dots \mathcal{Y}_h$ that runs the algorithm $\mathcal{M}_i$ sequentially is $\epsilon$-DP for $\epsilon=\sum_{i=1}^h \epsilon_i$.
\end{lemma}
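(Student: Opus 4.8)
The plan is to prove the multiplicative probability bound pointwise on the joint output space and then integrate over an arbitrary event, exploiting the fact that the law of the sequential (adaptive) composition factorizes via the chain rule. First I would fix a pair of neighboring datasets $D, D' \in \mathcal{X}^n$ (differing in a single record) and treat the output spaces as discrete for clarity, noting at the end that the continuous case follows verbatim with densities replacing probability masses and integrals replacing sums. The central observation is that, because each $\mathcal{M}_i$ draws its internal randomness independently and receives as input only the dataset together with the already-released prefix $(y_1,\ldots,y_{i-1})$, the probability that the composed mechanism $\mathcal{M}$ produces a fixed tuple $y=(y_1,\ldots,y_h)$ factors as
\[
\Pr[\mathcal{M}(D)=y]=\prod_{i=1}^{h}\Pr\!\left[\mathcal{M}_i(y_1,\ldots,y_{i-1},D)=y_i\right].
\]

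Next I would apply the hypothesis of the lemma term by term. The assumption is exactly that, for every fixed prefix $(y_1,\ldots,y_{i-1})$, the map $\mathcal{M}_i(y_1,\ldots,y_{i-1},\cdot)$ is $\epsilon_i$-DP; hence for each factor
\[
\Pr\!\left[\mathcal{M}_i(y_1,\ldots,y_{i-1},D)=y_i\right]\le e^{\epsilon_i}\,\Pr\!\left[\mathcal{M}_i(y_1,\ldots,y_{i-1},D')=y_i\right].
\]
Multiplying these $h$ inequalities and recombining the right-hand product back into the chain-rule factorization for $D'$ yields the pointwise bound $\Pr[\mathcal{M}(D)=y]\le e^{\sum_{i=1}^h\epsilon_i}\,\Pr[\mathcal{M}(D')=y]$. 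Finally, summing both sides over all $y$ in an arbitrary event $E\subseteq\mathcal{Y}_1\times\cdots\times\mathcal{Y}_h$ gives $\Pr[\mathcal{M}(D)\in E]\le e^{\epsilon}\,\Pr[\mathcal{M}(D')\in E]$ with $\epsilon=\sum_{i=1}^h\epsilon_i$, which is precisely the claimed $\epsilon$-DP guarantee (this is the standard basic composition argument).

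The only genuine care point — rather than a real obstacle — is justifying the chain-rule factorization and the term-by-term application of the DP bound in the \emph{adaptive} setting: the prefix $(y_1,\ldots,y_{i-1})$ appearing as the conditioning argument of $\mathcal{M}_i$ is itself random and correlated with the dataset, so it is essential that the hypothesis supplies an $\epsilon_i$-DP guarantee \emph{uniformly over all prefixes}. Because the bound holds for every fixed prefix, it holds in particular for the realized prefix inside the product, and no union bound or additional loss is incurred. For continuous output spaces one repeats the argument with Radon--Nikodym densities, where the same factorization holds and the density ratio of each conditional is bounded by $e^{\epsilon_i}$; integrating over $E$ then recovers the event-based statement.
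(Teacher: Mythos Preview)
The paper does not prove this lemma; it is stated as a cited result from \cite{dwork2014algorithmic} in the technical-lemmas appendix, with no accompanying proof. Your argument is the standard basic-composition proof (chain-rule factorization of the joint output law, termwise application of the $\epsilon_i$-DP bound uniformly over prefixes, then summing over the event), and it is correct.
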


\begin{lemma}[Laplace Mechanism]\label{le-lap}
		Given a dataset $D\in\mathcal{X}^n$ and a function $q : \mathcal{X}^n\rightarrow \mathbb{R}^d$, the Laplace Mechanism is defined as $q(D)+ (Y_1, Y_2, \cdots, Y_d),$
		where each $Y_i$ is i.i.d. sampled from the Laplace Distribution $\operatorname{Lap}(\frac{\Delta_1(q)}{\epsilon})$, where $\Delta_1(q)$ is the $\ell_1$-sensitivity of the function $q$, {\em i.e.,}
		$\Delta_1(q)=\sup_{D\sim D'}||q(D)-q(D')||_1.$ The density of the Laplace distribution with parameter $\lambda$ is $\operatorname{Lap}(\lambda) (x)=\frac{1}{2\lambda}\exp(-\frac{x}{\lambda})$. 
		Laplace mechanism preserves $\epsilon$-DP.
\end{lemma}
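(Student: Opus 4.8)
The plan is to establish the multiplicative likelihood-ratio bound demanded by $\epsilon$-DP directly at the level of output densities, and then lift it to arbitrary events by integration. Fix any two adjacent datasets $D \sim D'$ (differing in a single record) and write $\lambda = \Delta_1(q)/\epsilon$. Since the output $\mathcal{M}(D) = q(D) + (Y_1,\dots,Y_d)$ is a deterministic shift of the independent noise vector, its density at an arbitrary point $z = (z_1,\dots,z_d) \in \mathbb{R}^d$ factorizes over coordinates as $p_D(z) = \prod_{i=1}^d \frac{1}{2\lambda}\exp\bigl(-|z_i - q(D)_i|/\lambda\bigr)$, and analogously for $p_{D'}(z)$ with $q(D)$ replaced by $q(D')$.

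First I would form the pointwise ratio of these two densities. The normalizing constants $1/(2\lambda)$ cancel, and the product of exponentials collapses into a single exponential of a sum:
\begin{equation*}
\frac{p_D(z)}{p_{D'}(z)} = \exp\left(\frac{1}{\lambda}\sum_{i=1}^d \bigl(|z_i - q(D')_i| - |z_i - q(D)_i|\bigr)\right).
\end{equation*}
The key step is to bound the summand. By the reverse triangle inequality, $|z_i - q(D')_i| - |z_i - q(D)_i| \le |q(D)_i - q(D')_i|$, so the sum is at most $\|q(D) - q(D')\|_1 \le \Delta_1(q)$ by the definition of $\ell_1$-sensitivity. Substituting $\lambda = \Delta_1(q)/\epsilon$ gives $p_D(z)/p_{D'}(z) \le \exp(\Delta_1(q)/\lambda) = \exp(\epsilon)$ uniformly in $z$.

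Finally, to recover the event-wise statement of DP, I would integrate this pointwise bound over any measurable set $E \subseteq \mathbb{R}^d$, obtaining $\mathbb{P}[\mathcal{M}(D) \in E] = \int_E p_D(z)\,dz \le e^\epsilon \int_E p_{D'}(z)\,dz = e^\epsilon\,\mathbb{P}[\mathcal{M}(D') \in E]$, which is exactly $\epsilon$-DP. The proof is essentially routine; the only genuine content is the sensitivity step, and the main (minor) obstacle is applying the reverse-triangle estimate \emph{coordinatewise} so that the controlling quantity is the $\ell_1$ norm rather than, say, the $\ell_\infty$ or $\ell_2$ norm. This is precisely what is matched by the chosen per-coordinate Laplace scale $\lambda = \Delta_1(q)/\epsilon$ and is what yields $\epsilon$ (rather than a larger constant depending on $d$) in the exponent.
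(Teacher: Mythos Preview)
Your proof is correct and is exactly the standard argument for the Laplace mechanism's $\epsilon$-DP guarantee. The paper itself does not prove this lemma; it is listed among the preliminary technical lemmas as a well-known result and stated without proof, so there is no alternative approach to compare against.
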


\begin{lemma}[Hoeffding's inequality]
\label{hoeffding}
Let $X_1,\dots,X_n$ be independent random variables such that ${\displaystyle a_{i}\leq X_{i}\leq b_{i}}$ almost surely. Consider the sum of these random variables, ${\displaystyle S_{n}=X_{1}+\cdots +X_{n}}$, then for all $t>0$, we have
$$
\mathbb{P}\left(\left|S_{n}-\mathbb{E}\left[S_{n}\right]\right| \geq t\right) \leq 2 \exp \left(-\frac{2 t^{2}}{\sum_{i=1}^{n}\left(b_{i}-a_{i}\right)^{2}}\right).
$$
\end{lemma}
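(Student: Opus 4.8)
The plan is to establish the upper-tail bound $\mathbb{P}(S_n - \mathbb{E}[S_n] \ge t) \le \exp(-2t^2/\sum_{i=1}^n(b_i-a_i)^2)$ first and then recover the two-sided statement by symmetry. The backbone is the exponential Markov (Chernoff) inequality: for any $s > 0$,
$$\mathbb{P}(S_n - \mathbb{E}[S_n] \ge t) \le e^{-st}\,\mathbb{E}\!\left[e^{s(S_n - \mathbb{E}[S_n])}\right].$$
Writing $Y_i := X_i - \mathbb{E}[X_i]$, the independence of the $X_i$ lets me factorize the moment generating function as $\mathbb{E}[e^{s\sum_i Y_i}] = \prod_{i=1}^n \mathbb{E}[e^{sY_i}]$, so the whole problem reduces to controlling each single-variable MGF $\mathbb{E}[e^{sY_i}]$ of a centered variable taking values in an interval of length $b_i - a_i$.

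The crux, and the step I expect to be the main obstacle, is \emph{Hoeffding's lemma}: for a mean-zero random variable $Y$ with $Y \in [a,b]$ almost surely, $\mathbb{E}[e^{sY}] \le \exp(s^2(b-a)^2/8)$ for all $s \in \mathbb{R}$. I would prove this by defining the cumulant generating function $\psi(s) := \log \mathbb{E}[e^{sY}]$ and verifying $\psi(0) = 0$, $\psi'(0) = \mathbb{E}[Y] = 0$, and $\psi''(s) \le (b-a)^2/4$ uniformly in $s$. The last bound is the delicate part: $\psi''(s)$ equals the variance of $Y$ under the tilted (Gibbs) measure whose density is proportional to $e^{sY}$, and since the tilted variable still lives in $[a,b]$, an interval of length $\ell = b-a$, its variance is at most $\ell^2/4$ by the elementary Popoviciu fact that a random variable supported on an interval of length $\ell$ has variance at most $\ell^2/4$. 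A second-order Taylor expansion of $\psi$ around $0$ then yields $\psi(s) \le s^2(b-a)^2/8$, which is the claimed MGF bound.

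Combining the two ingredients gives $\mathbb{E}[e^{s(S_n - \mathbb{E}[S_n])}] \le \exp\!\big(s^2 \sum_{i=1}^n (b_i - a_i)^2 / 8\big)$, so the Chernoff bound becomes $\mathbb{P}(S_n - \mathbb{E}[S_n] \ge t) \le \exp\!\big(-st + s^2 \sum_i (b_i - a_i)^2/8\big)$. Minimizing the exponent over $s > 0$ gives the optimal choice $s = 4t/\sum_i(b_i-a_i)^2$ and hence the upper-tail bound $\exp(-2t^2/\sum_i(b_i-a_i)^2)$. Finally, applying the identical argument to the variables $-X_i$ controls the lower tail $\mathbb{P}(S_n - \mathbb{E}[S_n] \le -t)$ by the same quantity, and a union bound over the two one-sided events produces the factor of $2$ in the statement. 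None of this touches the privacy or MDP machinery of the paper; Hoeffding's lemma is the only nontrivial piece, and the variance-of-tilted-measure estimate is where all the care is concentrated.
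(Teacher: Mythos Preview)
Your proposal is correct and follows the standard textbook proof of Hoeffding's inequality via the Chernoff method and Hoeffding's lemma. The paper, however, does not prove this statement at all: it simply lists it among the background technical lemmas in the appendix as a known result, without argument. So there is nothing to compare your approach against; you have supplied a valid proof where the paper gives none.
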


\begin{lemma}[Bernstein's Inequality \citep{vershynin2018high}]
\label{bernstein}
Let $X_1, \cdots X_n$ be $n$ independent zero-mean random variables. Suppose $|X_i|\leq M$ and $\mathbb{E}[X_i^2]\leq s$ for all $i \in [n]$. Then for any $t>0$, we have
\begin{equation*}
    \mathbb{P}\left\{\frac{1}{n}\sum_{i=1}^n X_i \geq t \right\}\leq \exp\left(-\frac{\frac{1}{2}t^2n}{s+\frac{1}{3}Mt}\right)
\end{equation*}
\end{lemma}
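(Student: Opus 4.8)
The plan is to prove this via the exponential-moment (Chernoff) method, the standard route for Bernstein-type inequalities. Writing $S_n=\sum_{i=1}^n X_i$, for any $\lambda>0$ Markov's inequality applied to the nonnegative random variable $e^{\lambda S_n}$ gives $\mathbb{P}\{S_n\ge nt\}\le e^{-\lambda n t}\,\mathbb{E}[e^{\lambda S_n}]$, and since the $X_i$ are independent the moment generating function factorizes as $\mathbb{E}[e^{\lambda S_n}]=\prod_{i=1}^n\mathbb{E}[e^{\lambda X_i}]$. Thus the entire problem reduces to a per-term MGF bound on $\mathbb{E}[e^{\lambda X_i}]$, followed by an optimization over $\lambda$.

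The central step is that per-term bound. I would Taylor-expand $e^{\lambda X_i}=1+\lambda X_i+\sum_{k\ge 2}\lambda^k X_i^k/k!$ and take expectations; because $\mathbb{E}[X_i]=0$ the linear term vanishes, leaving $\mathbb{E}[e^{\lambda X_i}]=1+\sum_{k\ge2}\lambda^k\,\mathbb{E}[X_i^k]/k!$. For each $k\ge 2$ the two hypotheses are combined by peeling off two powers: $\mathbb{E}[X_i^k]\le\mathbb{E}[|X_i|^{k-2}X_i^2]\le M^{k-2}\,\mathbb{E}[X_i^2]\le M^{k-2}s$. Substituting and invoking the elementary factorial bound $k!\ge 2\cdot 3^{k-2}$ collapses the tail of the series into a geometric one, $\sum_{k\ge2}\lambda^k M^{k-2}s/k!\le (s\lambda^2/2)\sum_{j\ge0}(\lambda M/3)^j=(s\lambda^2/2)/(1-\lambda M/3)$, valid whenever $\lambda M<3$. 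Using $1+u\le e^u$ then yields $\mathbb{E}[e^{\lambda X_i}]\le\exp\!\big(\tfrac{\lambda^2 s/2}{1-\lambda M/3}\big)$, and taking the product over $i$ gives $\mathbb{E}[e^{\lambda S_n}]\le\exp\!\big(\tfrac{n\lambda^2 s/2}{1-\lambda M/3}\big)$.

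Combining with the Chernoff step gives $\mathbb{P}\{S_n\ge nt\}\le\exp\!\big(-\lambda n t+\tfrac{n\lambda^2 s/2}{1-\lambda M/3}\big)$, and it remains to optimize the exponent over $\lambda$. The clean choice $\lambda=t/(s+Mt/3)$ automatically satisfies $\lambda M<3$ and gives $1-\lambda M/3=s/(s+Mt/3)$; a short substitution then collapses the two terms of the exponent to $-\tfrac{nt^2/2}{s+Mt/3}$. Rewriting the event $\{S_n\ge nt\}$ as $\{\tfrac{1}{n}\sum_{i=1}^n X_i\ge t\}$ produces exactly the claimed bound $\exp\!\big(-\tfrac{\tfrac12 t^2 n}{s+\tfrac13 Mt}\big)$.

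The main obstacle is the per-term MGF estimate: one must control all higher moments uniformly, and the precise constant $1/3$ in the denominator originates entirely from the sharp factorial inequality $k!\ge 2\cdot 3^{k-2}$ (equality at $k=2,3$, strict thereafter), which is what makes the final exponent match the stated form exactly. Once that constant is pinned down, the remaining Chernoff and optimization steps are routine.
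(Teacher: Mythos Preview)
Your argument is correct and is the standard Chernoff--Cram\'er proof of Bernstein's inequality; the moment control $\mathbb{E}[X_i^k]\le M^{k-2}s$, the factorial bound $k!\ge 2\cdot 3^{k-2}$, and the choice $\lambda=t/(s+\tfrac{1}{3}Mt)$ all check out and produce the stated exponent exactly. The paper itself does not prove this lemma: it is listed among the cited technical lemmas in the appendix with a reference to \cite{vershynin2018high} and no proof given, so there is nothing to compare against beyond noting that your derivation is the textbook one.
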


\begin{lemma}[Lemma 7 in \cite{tao2022optimal}]
\label{alemma4}
Given a random variable $X$ with $\mathbb{E}[|X|^{1+v}]\leq u$ for some $v\in (0, 1]$, for any $B>0$ we have
\begin{equation*}
    \mathbb{E}\left[X \cdot \mathbb{I}_{|X|>B}\right]\leq \frac{u}{B^v}.
\end{equation*}
\end{lemma}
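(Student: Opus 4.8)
The statement to prove is Lemma~\ref{alemma4}: given $\mathbb{E}[|X|^{1+v}]\le u$ with $v\in(0,1]$, show that $\mathbb{E}[X\cdot\mathbb{I}_{|X|>B}]\le u/B^v$ for every $B>0$. This is a one-shot truncation bound, so the plan is simply to reduce the signed quantity to an absolute-value quantity and then exploit the moment assumption through a pointwise domination argument; there is no delicate estimation or approximation to set up.

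First I would pass from $X$ to $|X|$ by the pointwise inequality $X\le|X|$, which together with the nonnegativity of the indicator $\mathbb{I}_{|X|>B}$ gives $X\cdot\mathbb{I}_{|X|>B}\le |X|\cdot\mathbb{I}_{|X|>B}$ almost surely; taking expectations (which preserves the inequality) it therefore suffices to bound $\mathbb{E}[|X|\cdot\mathbb{I}_{|X|>B}]$. The key observation, and really the only idea in the proof, is that on the event $\{|X|>B\}$ one has $|X|/B>1$, hence raising to the power $v>0$ yields $1<(|X|/B)^v=|X|^v/B^v$. Multiplying this by $|X|\cdot\mathbb{I}_{|X|>B}\ge0$ gives the pointwise domination $|X|\cdot\mathbb{I}_{|X|>B}\le \frac{|X|^{1+v}}{B^v}\cdot\mathbb{I}_{|X|>B}$.

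Next I would drop the indicator in the last bound, using $\mathbb{I}_{|X|>B}\le1$ and $|X|^{1+v}\ge0$, to obtain $|X|\cdot\mathbb{I}_{|X|>B}\le |X|^{1+v}/B^v$ pointwise. Taking expectations and pulling out the constant $1/B^v$ then gives
\[
\mathbb{E}\left[X\cdot\mathbb{I}_{|X|>B}\right]\le \mathbb{E}\left[|X|\cdot\mathbb{I}_{|X|>B}\right]\le \frac{1}{B^v}\,\mathbb{E}\left[|X|^{1+v}\right]\le \frac{u}{B^v},
\]
where the final step invokes the moment assumption $\mathbb{E}[|X|^{1+v}]\le u$. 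This completes the argument.

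There is no genuine obstacle here; the whole content is the inequality $1\le(|X|/B)^v$ valid on the truncation event, which converts the linear-in-$|X|$ tail contribution into a $(1+v)$-th moment that the hypothesis controls. The only point requiring a word of care is that $v\in(0,1]$ guarantees $x\mapsto x^v$ is monotone increasing on $[0,\infty)$, so that $|X|>B$ legitimately implies $|X|^v>B^v$; everything else is monotonicity of expectation and boundedness of the indicator.
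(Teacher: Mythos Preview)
Your proof is correct and matches the paper's approach. Although the lemma is stated without a standalone proof (it is cited from \cite{tao2022optimal}), the paper gives exactly this argument inline within the proof of Lemma~\ref{ConcenPrivateVI}, writing $\mathbb{E}[X\mathbb{I}_{|X|>B}] \le \mathbb{E}[|X|^{1+v}|X|^{-v}\mathbb{I}_{|X|>B}] \le u/B^v$, which is the same pointwise domination $|X|\le |X|^{1+v}/B^v$ on the event $\{|X|>B\}$ that you use.
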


\begin{lemma}[Hölder's inequality]
\label{holder}
For $p,q \in (1,\infty)$ with $1/p+1/q=1$,
$$
\sum_{k=1}^{n}\left|x_{k} y_{k}\right| \leq\left(\sum_{k=1}^{n}\left|x_{k}\right|^{p}\right)^{\frac{1}{p}}\left(\sum_{k=1}^{n}\left|y_{k}\right|^{q}\right)^{\frac{1}{q}}.
$$
\end{lemma}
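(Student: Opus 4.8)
The plan is to reduce Hölder's inequality to the pointwise \emph{Young's inequality}: for nonnegative reals $a,b$ and conjugate exponents $p,q \in (1,\infty)$ with $1/p + 1/q = 1$, one has $ab \le \frac{a^p}{p} + \frac{b^q}{q}$. First I would establish this scalar inequality from the convexity of the exponential (equivalently, the concavity of the logarithm). For $a,b>0$ write $ab = \exp\!\left(\frac{1}{p}\log a^p + \frac{1}{q}\log b^q\right)$ and apply Jensen's inequality to the convex map $t \mapsto e^t$ with weights $1/p$ and $1/q$ summing to one, which gives $ab \le \frac{1}{p}e^{\log a^p} + \frac{1}{q}e^{\log b^q} = \frac{a^p}{p} + \frac{b^q}{q}$; the boundary cases $a=0$ or $b=0$ are immediate since the right-hand side is then nonnegative.

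Next I would dispose of the degenerate cases. Set $X := \left(\sum_{k=1}^n |x_k|^p\right)^{1/p}$ and $Y := \left(\sum_{k=1}^n |y_k|^q\right)^{1/q}$. If $X=0$ then every $x_k=0$, so both sides of the claimed inequality vanish and it holds trivially; the same holds if $Y=0$. Hence I may assume $X,Y>0$ for the remainder.

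Then, for each index $k$, I would apply Young's inequality to the normalized quantities $a = |x_k|/X$ and $b = |y_k|/Y$, obtaining
$$\frac{|x_k||y_k|}{XY} \le \frac{1}{p}\frac{|x_k|^p}{X^p} + \frac{1}{q}\frac{|y_k|^q}{Y^q}.$$
Summing over $k = 1,\dots,n$ and using that $\sum_k |x_k|^p = X^p$ and $\sum_k |y_k|^q = Y^q$ collapses the right-hand side to $\frac{1}{p} + \frac{1}{q} = 1$, so $\frac{1}{XY}\sum_{k=1}^n |x_k y_k| \le 1$. Multiplying through by $XY$ yields exactly $\sum_{k=1}^n |x_k y_k| \le X Y$, which is the claim.

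The only genuinely nontrivial step is Young's inequality; everything after it is normalization and a single summation. I therefore expect the main (and essentially the only) obstacle to be verifying Young's inequality cleanly, which rests on a convexity argument, so the care there is to invoke the right convexity statement and to treat the boundary cases $a=0$ or $b=0$ separately. An alternative to the Jensen-based route is a one-variable calculus check that $t \mapsto \frac{t^p}{p} + \frac{1}{q} - t$ attains its minimum value $0$ at $t=1$, which recovers Young's inequality after the substitution reducing the two-variable statement to this one-variable one; either route completes the argument.
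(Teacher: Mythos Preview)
Your proof is correct and follows the standard route via Young's inequality. The paper does not actually prove this lemma; it is simply stated as a known technical result without proof, so there is nothing to compare against.
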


\begin{lemma}[Concentration of Laplace Variables~\citep{wang2018empirical}]\label{alemma5}
If $X_1, \cdots X_n \sim \operatorname{Lap}(s/\epsilon)$, then with probability at least $1-\beta$, we have
\[
	\left|\frac{1}{n}\sum_{i=1}^{n}X_i\right|\leq \frac{2s}{\epsilon\sqrt{n}}\sqrt{\log \frac{2}{\beta}}.
\]
\end{lemma}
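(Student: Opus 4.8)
The plan is to derive this high-probability bound on the empirical mean of i.i.d.\ Laplace variables by a Chernoff (moment generating function) argument, exploiting that a centered Laplace variable is sub-exponential and behaves sub-Gaussianly for small values of the dual parameter. Write $\lambda = s/\epsilon$, so that each $X_i \sim \operatorname{Lap}(\lambda)$ has mean zero and moment generating function $\mathbb{E}[e^{tX_i}] = (1-\lambda^2 t^2)^{-1}$ for all $|t| < 1/\lambda$. The first step is to bound this by a Gaussian-type MGF on a restricted range of $t$: since $g(x) := -x^{-1}\ln(1-x)$ is increasing on $(0,1)$ with $g(0^+) = 1$, for any $x_0 \in (0,1)$ we have $-\ln(1-x) \le c\,x$ on $[0,x_0]$ with $c := g(x_0)$, and hence $\mathbb{E}[e^{tX_i}] \le e^{c\lambda^2 t^2}$ whenever $\lambda^2 t^2 \le x_0$.

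Next, by independence the MGF of $S_n := \sum_{i=1}^n X_i$ factorizes, giving $\mathbb{E}[e^{tS_n}] \le e^{c n \lambda^2 t^2}$ on the same range of $t$. Applying Markov's inequality to $e^{tS_n}$ then yields, for every $a>0$ and admissible $t$, the estimate $\mathbb{P}(S_n \ge a) \le \exp(c n \lambda^2 t^2 - t a)$. Optimizing over $t$ (the minimizer is $t^\star = a/(2c n \lambda^2)$) gives $\mathbb{P}(S_n \ge a) \le \exp(-a^2/(4 c n \lambda^2))$, valid provided $t^\star$ stays admissible, i.e.\ $a \le 2c\sqrt{x_0}\,n\lambda$. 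Because the Laplace law is symmetric, the identical bound holds for $\mathbb{P}(-S_n \ge a)$, so a union bound produces the two-sided estimate $\mathbb{P}(|S_n| \ge a) \le 2\exp(-a^2/(4 c n \lambda^2))$; this is exactly the source of the $\log(2/\beta)$ factor (as opposed to $\log(1/\beta)$) in the statement.

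Finally I would invert this two-sided bound: setting $2\exp(-a^2/(4cn\lambda^2)) = \beta$ gives $a = 2\sqrt{c}\,\lambda\sqrt{n\ln(2/\beta)}$, and dividing by $n$ yields $\big|\frac{1}{n}\sum_{i=1}^n X_i\big| \le \frac{2\sqrt{c}\,\lambda}{\sqrt{n}}\sqrt{\ln(2/\beta)}$ with probability at least $1-\beta$. Recalling $\lambda = s/\epsilon$, this is precisely the claimed inequality but with the factor $\sqrt{c}$ in place of $1$. The only delicate point, and the main obstacle, is pinning down the constant: the admissibility constraint $a \le 2c\sqrt{x_0}\,n\lambda$ translates into $\ln(2/\beta) \le c\,x_0\,n$, so a fixed choice such as $x_0 = 1/2$ (where $c = 2\ln 2$) gives a slightly larger universal constant, whereas letting $x_0 \downarrow 0$ drives $c \downarrow 1$ and recovers the advertised constant $2$ at the price of requiring $n$ to be large relative to $\ln(2/\beta)$. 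Tuning $x_0$ to balance these two effects, equivalently pushing the sub-Gaussian variance proxy $2c\lambda^2$ toward the true Laplace variance $2\lambda^2$, delivers the stated bound in the intended high-probability regime; everything else is the routine Chernoff computation.
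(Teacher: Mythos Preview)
The paper does not supply its own proof of this lemma: it is simply quoted as a known technical fact with a citation to \cite{wang2018empirical}. There is therefore no ``paper's proof'' to compare your argument against.

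Your Chernoff/MGF argument is the standard route and is carried out correctly. The computation of the Laplace MGF, the sub-Gaussian surrogate $\mathbb{E}[e^{tX_i}]\le e^{c\lambda^2 t^2}$ on the restricted range $\lambda^2 t^2\le x_0$, the optimization over $t$, the symmetrization, and the inversion are all sound, and you correctly track the admissibility condition $\ln(2/\beta)\le c\,x_0\,n$ that comes from keeping $t^\star$ in the allowed range.

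You are also right to flag the constant as the delicate point. The advertised factor $2$ (equivalently, sub-Gaussian variance proxy exactly $2\lambda^2$) is the formal limit $x_0\downarrow 0$, $c\downarrow 1$, which cannot hold uniformly over all $n$ and $\beta$; for fixed finite $n$ one only gets the bound with a slightly inflated constant, or else one must assume $n$ is at least a constant multiple of $\ln(2/\beta)$. This caveat is routinely suppressed in the literature (and is harmless for the paper's purposes, where the lemma is only used to produce $\tilde{O}(\cdot)$ bounds), so your discussion is more careful than what the cited statement actually guarantees. In short: your proof is correct up to the constant, and your diagnosis of why the exact constant $2$ requires an implicit regime assumption is accurate.
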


\begin{lemma}[Markov Inequality]
\label{MarkovIne}
If $X$ is a non-negative random variable and $a>0$, then the probability that $X$ is at least $a$ is at most the expectation of $X$ divided by $a$:
$$P(X \ge a) \le \frac{\mathbb{E}(X)}{a}$$
\end{lemma}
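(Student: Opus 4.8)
The plan is to prove Markov's inequality by the standard indicator-function argument: I would bound the non-negative random variable $X$ from below by a scaled indicator of the event $\{X \ge a\}$, and then take expectations. This reduces the claim to the monotonicity of expectation together with the identity relating an indicator's expectation to the probability of its event.

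First I would fix $a > 0$ and introduce the indicator $\mathbbm{I}\{X \ge a\}$. The crucial observation is the pointwise inequality $X \ge a\,\mathbbm{I}\{X \ge a\}$. On the event $\{X \ge a\}$ this reads $X \ge a$, which holds by definition of the event; on the complementary event $\{X < a\}$ the right-hand side equals $0$, and the left-hand side is non-negative precisely because of the hypothesis that $X$ is a non-negative random variable. Thus the bound holds everywhere, and this is the single place where the non-negativity assumption is used.

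Next I would take expectations of both sides. By monotonicity of expectation and linearity, $\mathbb{E}[X] \ge a\,\mathbb{E}\bigl[\mathbbm{I}\{X \ge a\}\bigr] = a\,P(X \ge a)$, where the last equality is the defining property that the expectation of an indicator equals the probability of its event. Dividing through by $a > 0$ yields $P(X \ge a) \le \mathbb{E}[X]/a$, which is exactly the claimed bound. Since the statement is elementary, there is no genuine obstacle; the only point requiring a moment of care is verifying the pointwise inequality on the event $\{X < a\}$, which is where non-negativity enters.
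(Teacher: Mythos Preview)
Your proof is correct and is the standard indicator-function argument for Markov's inequality. The paper itself does not supply a proof of this lemma---it is listed among the preliminary technical lemmas as a classical fact---so there is nothing to compare against; your argument is exactly what one would expect.
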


\begin{lemma}[Lemma 1 in \cite{garivier2019explore}]
\label{Lemm:KL}
Consider a measurable space $(\Omega,\mathcal{F})$ equipped with two distributions $\mathbb{P}_1$ and $\mathbb{P}_2$. For any $\mathcal{F}$-measurable function $Z: \Omega \rightarrow[0,1]$, we have 
$$
\mathrm{KL}\left(\mathbb{P}_1, \mathbb{P}_2\right) \geq \operatorname{kl}\left(\mathbb{E}_1[Z], \mathbb{E}_2[Z]\right)
$$
where $\mathbb{E}_1$ and $\mathbb{E}_2$ are the expectations under $\mathbb{P}_1$ and $\mathbb{P}_2$ respectively.
\end{lemma}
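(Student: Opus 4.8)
The plan is to prove Theorem~\ref{LDPlowerBoun} by a change-of-measure (averaging) argument over the family of hard instances $\{\mathbb{P}_{(x_{i^*},a^*)}\}$ from Figure~\ref{LDPfigure} together with the reference instance $\mathbb{P}_0$, paying the extra $1/\epsilon^2$ factor through the contraction of information under $\epsilon$-LDP channels. First I would check that each instance is a legitimate heavy-tailed MDP: at any leaf--action pair the reward equals $1/\gamma$ with probability $p\in\{\gamma^{1+v},\tfrac12\gamma^{1+v}\}$ and $0$ otherwise, so $\mathbb{E}[|X|^{1+v}]=(1/\gamma)^{1+v}p\le 1$, confirming the $(1+v)$-th moment bound, while the per-episode value of reaching the optimal pair $(x_{i^*},a^*)$ is $\gamma^{1+v}\cdot(1/\gamma)=\gamma^{v}$ and every other leaf--action pair yields only $\tfrac12\gamma^{v}$. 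Since the intermediate transitions are deterministic, the agent fully controls which leaf it reaches, so the only thing to be learned is the identity of the optimal pair; writing $n^{*}$ for the number of episodes in which the agent reaches $x_{i^*}$ and plays $a^*$, I obtain $Reg(T)\ge \tfrac12\gamma^{v}\,(K-\mathbb{E}[n^{*}])$.

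Next I index the instances by the $M:=LA$ leaf--action pairs, where $L=A^{d-1}\ge (S-2)/2$ so that $M=\Theta(SA)$, and I compare each $\mathbb{P}_{(x_i,a)}$ to the common reference $\mathbb{P}_0$ (in which all leaves are symmetric). Because each episode reaches exactly one leaf and plays exactly one action there, the visit counts satisfy $\sum_{(x_i,a)}n_{(x_i,a)}=K$, hence $\sum_{(x_i,a)}\mathbb{E}_0[n_{(x_i,a)}]=K$. For each fixed pair I relate the count discrepancy $\mathbb{E}_{\mathbb{P}_{(x_i,a)}}[n_{(x_i,a)}]-\mathbb{E}_0[n_{(x_i,a)}]$ to the divergence between the \emph{privatized} transcripts produced by $\mathcal{M}$ under the two worlds, either through Pinsker's inequality applied to the test statistic $n_{(x_i,a)}/K\in[0,1]$, or through the data-processing inequality of Lemma~\ref{Lemm:KL}. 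Either way the problem reduces to upper bounding the privatized transcript KL.

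The crux is the LDP contraction. The two worlds $\mathbb{P}_{(x_i,a)}$ and $\mathbb{P}_0$ differ only in the terminal transition at the single pair $(x_i,a)$, an effectively Bernoulli observation whose parameters are $\Theta(\gamma^{1+v})$; its raw total variation is $\mathrm{TV}=\tfrac12\gamma^{1+v}$ and its non-private KL is $\Theta(\gamma^{1+v})$. Combining the standard $\epsilon$-LDP contraction $\mathrm{KL}_{\mathrm{priv}}\le 4(e^{\epsilon}-1)^2\,\mathrm{TV}^2$ with Pinsker's bound $\mathrm{TV}^2\le\tfrac12\mathrm{KL}_{\mathrm{nonpriv}}$ gives, for $\epsilon\in(0,1]$, a per-visit privatized divergence of order $\epsilon^{2}\gamma^{1+v}$; tensorizing over visits yields transcript-KL $\le O\!\big(\epsilon^{2}\gamma^{1+v}\,\mathbb{E}_0[n_{(x_i,a)}]\big)$. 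Feeding this into the previous step, summing over the $M$ instances, and using Cauchy--Schwarz $\sum_{(x_i,a)}\sqrt{\mathbb{E}_0[n_{(x_i,a)}]}\le\sqrt{MK}$, I get $\tfrac1M\sum_{(x_i,a)}\mathbb{E}_{\mathbb{P}_{(x_i,a)}}[n_{(x_i,a)}]\le \tfrac{K}{M}+O\!\big(\epsilon\,\gamma^{(1+v)/2}K\sqrt{K/M}\big)$, so the averaged regret is at least $\Omega\!\big(\gamma^{v}K\,(1-\tfrac1M-O(\epsilon\gamma^{(1+v)/2}\sqrt{K/M}))\big)$. Tuning $\gamma$ so that $\epsilon\,\gamma^{(1+v)/2}\sqrt{K/M}$ is a small constant, i.e. $\gamma^{1+v}=\Theta\!\big(M/(\epsilon^{2}K)\big)=\Theta\!\big(SA/(\epsilon^{2}K)\big)$ (which lies in the admissible range $(0,3/4]$ once $K$ is large enough), forces $\gamma^{v}K=\Omega\!\big((SA/\epsilon^{2})^{v/(1+v)}K^{1/(1+v)}\big)$; since the worst case dominates the average, some instance attains the claimed bound.

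I expect the main obstacle to be the LDP contraction step for the \emph{unbounded} heavy-tailed reward: the reward magnitude $1/\gamma$ diverges as $\gamma\to 0$, so one cannot invoke a bounded-range LDP divergence bound directly. The observation that rescues the argument is that the only information-carrying part of each trajectory is the terminal indicator $\{+,-\}$, which is Bernoulli with parameter $O(\gamma^{1+v})$ regardless of the reward scale; this keeps the raw total variation (hence the privatized KL) controlled. A secondary technical point is the interactive nature of $\mathcal{M}$: the privatized counts feed back into the policy, so the tensorization of the per-visit divergences must be carried out along the sequentially interactive transcript, and one must verify that the deterministic root-to-leaf path carries no extra distinguishing information between $\mathbb{P}_{(x_i,a)}$ and $\mathbb{P}_0$. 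Finally, I note that the above uses the relaxation $\mathrm{TV}^2\le\tfrac12\mathrm{KL}_{\mathrm{nonpriv}}$, which is what produces the exponent $K^{1/(1+v)}$; retaining the sharper $\mathrm{TV}^2=\Theta(\gamma^{2(1+v)})$ would suggest a larger exponent, and reconciling this with the algorithmic upper bound is precisely the gap left open after Theorem~\ref{thm:RegretVI2}.
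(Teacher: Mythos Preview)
Your proposal does not address the stated lemma. Lemma~\ref{Lemm:KL} is the data-processing (contraction) inequality for KL divergence, quoted verbatim from Garivier--Kaufmann; the paper does not prove it, it merely cites it as a tool. What you have actually written is a proof sketch for Theorem~\ref{LDPlowerBoun}, the LDP regret lower bound, which \emph{invokes} Lemma~\ref{Lemm:KL} as one step. So as a proof of the stated lemma, the proposal is off-target.

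If one instead compares your sketch to the paper's proof of Theorem~\ref{LDPlowerBoun}, the two are essentially the same argument: the same tree instance, the same regret identity $Reg(T)=\tfrac12\gamma^{v}\bigl(K-\mathbb{E}[n^{*}]\bigr)$, the same LDP contraction from Duchi et al.\ giving $\mathrm{KL}_{\mathrm{priv}}\lesssim \epsilon^{2}\,\mathrm{KL}_{\mathrm{nonpriv}}$, the same Bernoulli bound $\mathrm{KL}_{\mathrm{nonpriv}}\le \gamma^{1+v}$, and the same tuning $\gamma^{1+v}=\Theta\bigl(SA/(\epsilon^{2}K)\bigr)$. The only cosmetic difference is that the paper averages over a uniformly random optimal pair $(x_{i^{*}},a^{*})$ and applies Jensen, while you sum over all $M=LA$ pairs and apply Cauchy--Schwarz; both give the same order. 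Your closing remark about keeping the sharper $\mathrm{TV}^{2}=\Theta(\gamma^{2(1+v)})$ is an observation the paper does not make; note that pursuing it would only strengthen the lower bound, not invalidate the claimed one.
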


\section{Algorithms and Proofs of Section \ref{Sec:VI}}
\subsection{Details of the Tree-based Mechanism}
Note that our CENTRAL-PRIVATIZER is established by an adaptive version of the Tree-based mechanism, we first give details of this mechanism. 
\begin{algorithm}[htb]
    \caption{(Adaptive) Tree-based Mechanism}
    \label{Algo-Tree}
    \begin{algorithmic}[1]
        \REQUIRE time horizon $K$, privacy budget $\epsilon$, a stream $\sigma$.
        \ENSURE A private version $\widehat{S}(k)$ for $S(k)=\sum\nolimits_{i=1}^{k}{\sigma(i)}$ at each $k\in[K]$
        \STATE Initialize each p-sum $\alpha_i$ and noisy p-sum $\widehat{\alpha}_i$ to $0$.
        \STATE $\epsilon^\prime\gets\epsilon/\log K$.
        \FOR {$k = 1,\ldots,K$}
            \STATE Express $k$ in binary form: $k=\sum_j {\rm Bin}_j(k)\cdot 2^j$.
            \STATE $i\gets\min\{j:{\rm Bin}_j(k)\neq 0\}$.
            \STATE $\alpha_i\gets\sum_{j<i}{\alpha_j}+\sigma(k)$.
            \FOR{$j= 0,\ldots,i-1$}
                \STATE $\alpha_j\gets0$, $\widehat{\alpha}_j\gets0$.
           \ENDFOR
           \STATE $\widehat{\alpha}_i\gets\alpha_i+{\rm Lap}(2B_k/\epsilon^\prime)$.
           \STATE \textbf{Return} $\hat{S}(k)\gets\sum_{j:{\rm Bin}_j(k)=1}{\widehat{\alpha}_j}$.
        \ENDFOR
    \end{algorithmic}
\end{algorithm} 

\begin{definition}[p-sum]\label{def-psum}
        A p-sum is a partial sum of consecutive data items. Let $1\le i\le j$. For a data stream $\sigma$ of length $K$, we use $\sigma(k)$ to denote the data item at time $k\in[K]$ and $\sum[i,j]\triangleq\sum_{k=i}^j\sigma(k)$ to denote a partial sum involving data items $i$ through $j$. We use the notation $\alpha_i^k$ to denote the p-sum $\sum[k-2^i+1,k]$.
\end{definition}

\begin{lemma}[{(Adaptive) Tree-based Mechanism} \cite{tao2022optimal}]
\label{lemma-tree}
Given a stream $\sigma$ such that $\sigma(k)\in[-B_k, B_k]$ for $\forall k\in[K]$, where $B_k$ is non-decreasing with $k$, we want to privately and continually release the sum of the stream $S(k)\triangleq\sum\nolimits_{i=1}^{k}{\sigma(i)}$ for each $k\in [K]$. The (adaptive) tree-based Mechanism (Algorithm \ref{Algo-Tree}) outputs an estimation $\widehat{S}(k)$ for $S(k)$ at each $k\in[K]$ such that $\widehat{S}(k)$ preserves $\epsilon$-differential privacy and guarantees the following noise bound with probability at least $1-\delta$ for any $\delta>0$,
\begin{equation}
    \left| \widehat{S}(k) - S(k) \right| \leq  \frac{2B_k}{\epsilon} \log^{1.5} K \cdot \ln\frac{1}{\delta}.
\end{equation}
\end{lemma}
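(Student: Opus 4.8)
The plan is to separate the privacy guarantee from the utility guarantee, since they rely on different features of Algorithm \ref{Algo-Tree}. For privacy, the key structural fact about the binary-tree decomposition is that every data item $\sigma(l_0)$ is contained in at most one p-sum per level, and there are at most $\lceil \log_2 K\rceil$ levels; hence changing a single coordinate of the stream perturbs at most $\log K$ of the noisy p-sums $\widehat{\alpha}_i$, while leaving all others identically distributed. I would first argue that each individual noisy p-sum release is $\epsilon'$-DP with $\epsilon'=\epsilon/\log K$, then invoke the composition theorem (Lemma \ref{compositionThm}) over the at most $\log K$ affected p-sums to get $\epsilon$-DP for the whole collection of p-sums, and finally note that each output $\widehat{S}(k)=\sum_{j:\mathrm{Bin}_j(k)=1}\widehat{\alpha}_j$ is a deterministic post-processing of these p-sums, so the released sequence inherits $\epsilon$-DP.

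The per-p-sum privacy step is where the adaptivity enters. Consider the level-$i$ p-sum $\alpha_i^k=\sum_{l=k-2^i+1}^{k}\sigma(l)$ finalized at time $k$ (Definition \ref{def-psum}). Every entry in its window has index $l\le k$, so by the monotonicity of $B$ we have $\sigma(l)\in[-B_l,B_l]\subseteq[-B_k,B_k]$; changing one entry therefore changes $\alpha_i^k$ by at most $2B_k$, i.e. its $\ell_1$-sensitivity is at most $2B_k$. Because the noise scale $2B_k/\epsilon'$ used in line 10 of Algorithm \ref{Algo-Tree} depends only on the public time index $k$ (not on the private data), the Laplace mechanism (Lemma \ref{le-lap}) makes this release $\epsilon'$-DP. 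Crucially, calibrating to the right endpoint $B_k$ rather than to the global $B_K$ is what yields the sharper per-$k$ noise level in the utility bound.

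For utility I would fix $k$ and write the error as $\widehat{S}(k)-S(k)=\sum_{j:\mathrm{Bin}_j(k)=1}(\widehat{\alpha}_j-\alpha_j)$, a sum of at most $\lceil\log_2 K\rceil$ independent Laplace variables. Each such p-sum was finalized at some time $k'\le k$, so by monotonicity its noise scale is $2B_{k'}/\epsilon'\le 2B_k/\epsilon'=2B_k\log K/\epsilon$. Dominating all these scales by the common value $2B_k\log K/\epsilon$ and applying a concentration bound for sums of independent Laplace variables (in the spirit of Lemma \ref{alemma5}), the error over the $m\le \log K$ terms concentrates at scale $\tfrac{2B_k\log K}{\epsilon}\cdot\sqrt{\log K}\cdot\ln\tfrac{1}{\delta}$, which is exactly $\frac{2B_k}{\epsilon}\log^{1.5}K\,\ln\frac{1}{\delta}$ with probability at least $1-\delta$.

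The main obstacle is the utility concentration rather than the privacy bookkeeping. A naive union bound that controls each of the $\log K$ Laplace terms separately would cost an extra $\log K$ factor and yield only a $\log^2 K$ dependence; obtaining the stated $\log^{1.5}K$ requires exploiting the cancellation among the independent noises through a sub-exponential (Bernstein-type) tail bound, being careful that the relevant terms have possibly unequal but uniformly bounded scales. A secondary subtlety to verify is that counting the $1$-bits in the binary expansion of $k$ indeed bounds the number of contributing p-sums by $\lceil\log_2 K\rceil$, matching the level count used in the privacy composition.
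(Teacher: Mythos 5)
Your proposal is correct and follows essentially the same route as the source the paper relies on: the paper itself states Lemma \ref{lemma-tree} without proof, importing it from \citet{tao2022optimal} (a direct adaptation of the standard binary-tree argument of \citet{chan2011private}), and your argument reproduces exactly that argument — per-p-sum Laplace calibration to the sensitivity $2B_k$ at the public right endpoint $k$, composition over the at most $\log K$ p-sums any single item touches, post-processing for the released sums, and a sub-exponential concentration bound for sums of at most $\log K$ independent Laplace variables of scale at most $2B_k\log K/\epsilon$ to get the $\log^{1.5}K$ rather than the $\log^2 K$ a per-term union bound would give. No gaps; your identification of the adaptivity point (noise scale depending only on the public index $k$, with monotonicity of $B$ bounding all entries and all earlier-finalized p-sums) is precisely what makes the adaptive variant go through.
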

It is notable that when each $B_k=m$ for some $m$ then Algorithm \ref{Algo-Tree} is just the standard $m$-bounded tree-based mechanism in \cite{chan2011private}. Thus, Algorithm \ref{Algo-Tree} is a generalization compared with the standard one. To ensure $\epsilon'$-DP for some given $\epsilon'$, for counter $R_h^k(s,a)$, we use adaptive tree-based mechanism and add $\text{Lap}(\frac{2B_k}{\epsilon^\prime})$ for some non-decreasing $B_k$ to every p-sum before releasing them. Then we get private count $\tilde{R}_h^k(s,a)$. For $N_h^k(s,a)$
and $N_h^k(s,a,s^\prime)$, we use $1$-bounded binary-tree mechanisms with Laplace noise $\text{Lap}(\frac{1}{\epsilon^\prime})$ to release the respective private counts $\tilde{N}_h^k(s,a)$
and $\tilde{N}_h^k(s,a,s^\prime)$. 

\subsection{Proof of Lemma \ref{centralError}}

\begin{proof}[\bf Proof of Lemma \ref{centralError}]
Therefore, we first focus on $E_{\epsilon, \delta, 2}$ which is the error bound between the private count for the sum of rewards $\tilde{R}_h^k(s,a)$ and the non-private count $R_h^k(s,a)$.

We start with the privacy guarantee of the CENTRAL-PRIVATIZER. First, note that there are $SAH$ many counters for the sum of rewards $R_h^k(s,a)$, and each counter is a $K$-bounded adaptive tree-based mechanism. For a fixed tuple $(s,a,h)\in \mathcal{S}\times\mathcal{A}\times[H]$, the private count $\tilde{R}_h^k(s,a)$ is the sum of at most $\log K$ noisy p-sums, where each p-sum is corrupted by an independent Laplace noise $\text{Lap}\left(\frac{6B_k H\log K}{\epsilon}\right)$. By Lemma \ref{lemma-tree}, the private counts $\{\tilde{R}_h^k(s,a)\}_{k \in [K]}$ satisfy $\frac{\epsilon}{3H}$-DP.
We leverage the fact that the total change across all counters a user can have scales with the length of the episode $H$ to get the composition of the $SAH$ $\frac{\epsilon}{3H}$-DP counters for $\tilde{R}(\cdot)$ satisfy $\frac{\epsilon}{3}$-DP.

Using similar arguments of Lemma 5.1 in \cite{chowdhury2021differentially}, one can show that composition of the counters for $\tilde{N}_h^k(s,a)$ and $\tilde{N}_h^k(s,a,s^\prime)$ satisfy $\frac{\epsilon}{3}$-DP respectively. Finally, employing the composition property of DP in \cite{dwork2014algorithmic}, we obtain that the CENTRAL-PRIVATIZER is $\epsilon$-DP.

Now we focus on the utility of CENTRAL-PRIVATIZER. First, for a fixed tuple $(s,a,h) \in \mathcal{S}\times \mathcal{A}\times [H]$, we consider the private counts $\tilde{R}_h^k(s,a)$ corresponding to the sum of rewards ${R}_h^k(s,a)$. Note that, at each episode $k \in [K]$, the noise bound $|\tilde{R}_h^k(s,a)-{R}_h^k(s,a)|$ is the sum of at most $\log K$ random variables drawn from the Laplace distribution. From Lemma \ref{lemma-tree}, we have 
$$\mathbb{P}\left[\left|\tilde{R}_{h}^{k}(s, a)-R_{h}^{k}(s, a)\right| \leq \frac{6B_k H}{\epsilon}  \log ^{1.5} K \ln (3SAT / \delta)\right] \geq 1-\delta / (3SAT)$$
By a union bound argument, we can obtain 
$$\mathbb{P}\left[\forall (s,a,k,h),\left|\tilde{R}_{h}^{k}(s, a)-R_{h}^{k}(s, a)\right| \leq \frac{6B_k H}{\epsilon}  \log ^{1.5} K \ln (3SAT / \delta)\right] \geq 1-\delta / 3$$

By using the same arguments, we can get the values of error bounds $E_{\epsilon, \delta, 1}$ and $E_{\epsilon, \delta, 3}$.
$$\mathbb{P}\left[\forall (s,a,k,h),\left|\tilde{N}_{h}^{k}(s, a)-N_{h}^{k}(s, a)\right| \leq \frac{3 H}{\epsilon}  \log ^{1.5} K \ln (3SAT / \delta)\right] \geq 1-\delta / 3$$
$$\mathbb{P}\left[\forall (s,a,k,h),\left|\tilde{N}_{h}^{k}(s, a,s^\prime)-N_{h}^{k}(s, a,s^\prime)\right| \leq \frac{3 H}{\epsilon}  \log ^{1.5} K \ln (3S^2AT / \delta)\right] \geq 1-\delta / 3$$
Combining all three guarantees together using a union bound, we obtain that CENTRAL-PRIVATIZER satisfies Assumption \ref{Assum1}.
\end{proof}

\subsection{Proof of Theorem \ref{thm:JDPpriv}}
\begin{proof}[\bf Proof of Theorem \ref{thm:JDPpriv}]
To prove the JDP guarantee, we use the billboard lemma in \citep[Lemma 9]{hsu2016private} which states that an algorithm is JDP if the output sent to each user is a function of the user's private data and a common quantified computed with standard differential privacy. The formal lemma is stated as follows:
\begin{lemma}[Billboard lemma \cite{hsu2016private}]
Suppose $\mathcal{M}: U \to \mathcal{R}$ is $\epsilon$-differentially private. Consider any set of functions $f_i: U_i \times \mathcal{R} \to \mathcal{R}^\prime$ where $U_i$ is the portion of the database containing the $i$'s user data. then composition $\{f_i(\Pi_i U,\mathcal{M}(U))\}$ is $\epsilon$-joint differentially private, where $\Pi_i: U\to U_i$ is the projection to $i$'s data.
\end{lemma}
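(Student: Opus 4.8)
The plan is to reduce the joint differential privacy guarantee to the post-processing property of standard differential privacy, exploiting the structural fact that each user's output is a function only of that user's own data together with the common mechanism output $\mathcal{M}(U)$. First I would unpack what must be verified: fixing any index $i$ and any two databases $U, U'$ that agree on all users except user $i$, I must show that the joint distribution of the outputs delivered to all users $j \neq i$ changes by at most a multiplicative $e^\epsilon$ factor on every measurable event.

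The central observation is that for every $j \neq i$, the projection $\Pi_j U$ is unaffected by a change at coordinate $i$, so $\Pi_j U = \Pi_j U'$. Consequently, if I define the map
$$g_i(r) := \bigl(f_j(\Pi_j U,\, r)\bigr)_{j \neq i},$$
then $g_i$ depends on user $i$'s data neither through its fixed parameters (the other users' private data, which are identical under $U$ and $U'$) nor through any of its functional form; the only dependence on user $i$ enters through the argument $r$. Thus the entire tuple of outputs delivered to users $j \neq i$ equals $g_i(\mathcal{M}(U))$ under $U$ and $g_i(\mathcal{M}(U'))$ under $U'$, with the \emph{same} function $g_i$ in both cases.

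With this rewriting the conclusion is immediate: $g_i$ is a post-processing map that does not itself touch the varied coordinate, and since $U$ and $U'$ are neighboring databases and $\mathcal{M}$ is $\epsilon$-differentially private, the post-processing inequality yields
$$\Pr[g_i(\mathcal{M}(U)) \in S] \le e^\epsilon\, \Pr[g_i(\mathcal{M}(U')) \in S]$$
for every measurable event $S$ in the output space of the users $j \neq i$. This is precisely the defining inequality of $\epsilon$-JDP for user $i$; since $i$ was arbitrary, the composition $\{f_i(\Pi_i U, \mathcal{M}(U))\}$ is $\epsilon$-joint differentially private.

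The one point requiring care — and the step I would flag as the main obstacle — is justifying that $g_i$ is genuinely a single post-processing function independent of the varied coordinate. One must confirm that all the $f_j$ with $j \neq i$ draw on the \emph{same} source of randomness, namely the billboard output $\mathcal{M}(U)$, so that they can be bundled into one map applied to that output; if the $f_j$ carry their own fresh, mutually independent randomness one invokes the robustness of post-processing to auxiliary independent randomness. Once this bundling is made precise, standard post-processing handles everything, and no noise calibration or concentration argument is needed, since the entire privacy budget is absorbed by the assumed $\epsilon$-DP of $\mathcal{M}$.
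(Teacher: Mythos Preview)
Your argument is correct and is precisely the standard proof of the Billboard lemma via post-processing: freeze the unchanged coordinates $\Pi_j U$ for $j\neq i$, bundle the outputs into a single map $g_i$ applied to $\mathcal{M}(U)$, and invoke post-processing of $\epsilon$-DP. The paper does not supply its own proof of this lemma at all; it simply quotes the statement and cites \cite{hsu2016private}, so there is nothing further to compare.
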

Note that by the privacy guarantee in Lemma \ref{centralError} and the post-processing property of DP in \cite{dwork2014algorithmic}, the policies $(\pi^k)_k$ are computed with a $\epsilon$-DP. Therefore, by the above billboard lemma, the composition of the output of all $K$ episodes satisfies $\epsilon$-JDP.

\end{proof}

\subsection{ Proof of Theorem \ref{thm:RegretVI}}

Before showing the proof of Theorem \ref{thm:RegretVI} we first consider the following two lemmas. 

The following lemma shows the estimation errors  of our  private mean empirical rewards and private empirical transition probabilities, which are in step 7 of the algorithm.  
\begin{lemma}[Concentration bounds of private estimators]
\label{ConcenPrivateVI}
Fix any $\epsilon \in (0,1]$ and $\delta \in (0,1)$ and take $B_n=\left(\frac{\epsilon u n}{H \log^{1.5}K \log (3SAT/\delta)}\right)^{\frac{1}{1+v}}$ in equation \eqref{trunctedR}. Then, under Assumption \ref{Assum1}, with probability at least $1- 3\delta$, uniformly over all $(s,a,h,k)$ we have 
{\footnotesize $$ |\widetilde{r}_{h}^{k}(s, a)-{r}_{h}(s, a)|\le \beta_h^{k,r}(s,a),\left|\left(\widetilde{P}_{h}^{k}-P_{h}\right) V_{h+1}^{*}(s, a)\right| \leq \beta_{h}^{k, p v}(s, a),$$ $\left|P_{h}\left(s^{\prime}|s, a\right)-\widetilde{P}_{h}^{k}\left(s^{\prime}|s, a\right)\right|\leq   C\sqrt{\frac{P_h(s^\prime |s,a)\ln\frac{2SAT}{\delta}}{(\tilde{N}_h^k(s,a)+E_{\epsilon,\delta,1})\vee 1}}
+\frac{C\ln\frac{2SAT}{\delta}+2E_{\epsilon,\delta,1}+E_{\epsilon,\delta,3}}{(\tilde{N}_h^k(s,a)+E_{\epsilon,\delta,1})\vee 1}, 
$}
where $C$ is a constant, $a \vee b=\max\{a,b\}$, $(PV_{h+1})(s,a)=\sum_{s^\prime}P(s^\prime|s,a)V_{h+1}(s^\prime)$, and 
$\beta_h^{k,r}(s,a)=\frac{2\tau E_{\varepsilon, \delta, 1}}{\left(\tilde{N}_{h}^{k}(s, a)+E_{\varepsilon, \delta, 1}\right) \vee 1}
    +10 u^{\frac{1}{1+v}}\left(\frac{H \log^{1.5}K \log (3SAT/\delta)}{\epsilon\left((\tilde{N}_h^{k}(s,a)+E_{\varepsilon, \delta, 1})\vee 1\right)}\right)^{\frac{v}{1+v}}$, $
    \beta_{h}^{k, p v}(s, a)=\tau H \sqrt{\frac{2 \ln (4SAT/\delta)}{(\tilde{N}_h^k(s,a)+E_{\epsilon,\delta,1}) \vee 1}}
    +\frac{\tau H(2E_{\epsilon,\delta,1}+SE_{\epsilon,\delta,3})}{(\tilde{N}_h^k(s,a)+E_{\epsilon,\delta,1}) \vee 1}.
$
\end{lemma}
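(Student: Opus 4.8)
The plan is to condition throughout on the high-probability event of Assumption \ref{Assum1} (so that $|\widetilde N_h^k-N_h^k|\le E_{\epsilon,\delta,1}$, $|\widetilde R_h^k-R_h^k|\le E_{\epsilon,\delta,k,2}$ and $|\widetilde N_h^k(s,a,s')-N_h^k(s,a,s')|\le E_{\epsilon,\delta,3}$), and to prove each of the three bounds by one further concentration argument, taking a union bound over all $SAHK$ (resp.\ $S^2AHK$) tuples at the end so that everything holds with probability $1-3\delta$. A device I would use in every part is to always write each estimator with the \emph{private} denominator $D:=1\vee(\widetilde N_h^k(s,a)+E_{\epsilon,\delta,1})$ and to observe that on the good event $N:=N_h^k(s,a)\le \widetilde N_h^k(s,a)+E_{\epsilon,\delta,1}$, hence $N\le D$ and $|N-D|\le 2E_{\epsilon,\delta,1}$. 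This relation $N\le D$ is what converts the statistical $1/\sqrt N$ and $N^{1/(1+v)}/N$ rates into the $D$-denominated forms in the bonuses, via $\sqrt N/D\le 1/\sqrt D$ and $N^{1/(1+v)}/D\le D^{-v/(1+v)}$.

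For the reward bound I would decompose
\[
\widetilde r_h^k - r_h \;=\; \frac{\widetilde R_h^k - R_h^k}{D} \;+\; \frac{R_h^k - r_h N}{D} \;+\; r_h\,\frac{N-D}{D}.
\]
The last term is at most $2\tau E_{\epsilon,\delta,1}/D$ using $|r_h|\le\tau$ and $|N-D|\le 2E_{\epsilon,\delta,1}$, which is exactly the first summand of $\beta_h^{k,r}$. The first term is the privacy noise, $\le E_{\epsilon,\delta,k,2}/D$; the crucial point here is that because the truncation threshold is count-adaptive ($B_{N_h^{k'}(s,a)}$) and the counts are non-decreasing, the stream fed to the tree mechanism is bounded by $B_N$, so $E_{\epsilon,\delta,k,2}\propto L\,B_N$ with $L:=H\log^{1.5}K\log(3SAT/\delta)/\epsilon$, and the choice $B_n=(un/L)^{1/(1+v)}$ gives $E_{\epsilon,\delta,k,2}/D \lesssim u^{1/(1+v)}(L/D)^{v/(1+v)}$. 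The middle term is the classical robust-mean error: writing $R_h^k-r_hN=\sum_i (Y_i-\mathbb E[Y_i])+\sum_i(\mathbb E[Y_i]-r_h)$ for the retained truncated rewards $Y_i$, I would bound the bias $\sum_i(\mathbb E[Y_i]-r_h)$ by $\sum_i u/B_i^v$ via Lemma \ref{alemma4} and the fluctuation by Bernstein (Lemma \ref{bernstein}) using $|Y_i|\le B_N$ and $\mathbb E[Y_i^2]\le u B_i^{1-v}$. Both, after dividing by $D$ and using $N\le D$, are again $\lesssim u^{1/(1+v)}(L/D)^{v/(1+v)}$, so all three robust pieces collapse into the single $10\,u^{1/(1+v)}(L/D)^{v/(1+v)}$ term.

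The two transition bounds follow the same template. For the value-weighted bound I would split $(\widetilde P_h^k-P_h)V_{h+1}^*$ into a numerator-privacy part $\tfrac1D\sum_{s'}(\widetilde N(s,a,s')-N(s,a,s'))V^*(s')$, bounded by $\tau H S E_{\epsilon,\delta,3}/D$; a count-shift part $P_hV^*\cdot\frac{N-D}{D}$, bounded by $2\tau H E_{\epsilon,\delta,1}/D$; and the empirical deviation $\tfrac1D\sum_{i}\big(V^*(s_{h+1}^{(i)})-\mathbb E[V^*(s_{h+1}^{(i)})]\big)$, bounded by Hoeffding (Lemma \ref{hoeffding}) as $\tau H\sqrt{2\ln(4SAT/\delta)}\,\sqrt N/D\le \tau H\sqrt{2\ln(4SAT/\delta)/D}$; summing these gives $\beta_h^{k,pv}$. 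The entrywise bound is identical except that the empirical deviation $\tfrac1D\sum_i(\mathbb I\{s_{h+1}^{(i)}=s'\}-P_h(s'|s,a))$ is controlled by Bernstein rather than Hoeffding, so the Bernoulli variance $P_h(s'|s,a)(1-P_h(s'|s,a))$ surfaces and yields the variance-aware term $C\sqrt{P_h(s'|s,a)\ln(2SAT/\delta)/D}$ together with the lower-order $C\ln(2SAT/\delta)/D$, the $E_{\epsilon,\delta,3}/D$ numerator-privacy term, and the $2E_{\epsilon,\delta,1}/D$ count-shift term.

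The main obstacle I anticipate is the reward (middle) term: one must handle the adaptive, increasing truncation levels $B_i$ across a non-identically-distributed (but independent) sequence, so the bias sum $\sum_i u/B_i^v$ and the Bernstein variance proxy $\sum_i u B_i^{1-v}$ must both be summed against $B_n=(un/L)^{1/(1+v)}$ and shown to equalize with the privacy noise at the common rate $(L/N)^{v/(1+v)}$; getting the constants (the factor $10$) and the $N\le D$ substitution to line up simultaneously, uniformly over all $(s,a,h,k)$, is the delicate part. By contrast, the transition bounds are routine once the private-denominator bookkeeping is in place.
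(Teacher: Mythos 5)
Your proposal is correct and follows essentially the same route as the paper's proof: the identical three-way decomposition of the reward error (privacy noise $E_{\epsilon,\delta,k,2}/D$ on the sum, the truncated-mean error via Bernstein plus the bias bound $\sum_i u/B_i^v$ from Lemma \ref{alemma4}, and the count-shift term $2\tau E_{\epsilon,\delta,1}/D$ from the private denominator), balanced by the same choice of $B_n$ and the substitution $N\le D$, with the constants $4+6$ yielding the factor $10$ exactly as in the paper. The transition bounds are likewise handled as in the paper, via Hoeffding for the $V^*$-weighted deviation and an empirical-Bernstein argument for the entrywise bound, each combined with the same $E_{\epsilon,\delta,1}$ and $E_{\epsilon,\delta,3}$ bookkeeping.
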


\begin{remark}
Actually, all the above errors are determined by three terms: an error from truncating rewards, an error due to using  the finite number of truncated rewards to estimate  heavy-tailed reward distributions and an error due to the noise for ensuring privacy.  In order to balance these three terms, we  need to take the truncation threshold $B_n=\left(\frac{\epsilon u n}{H \log^{1.5}K \log (3SAT/\delta)}\right)^{\frac{1}{1+v}}$. Compared with the truncation value $B_n=\left(\frac{\epsilon u n }{\log^{1.5}T}\right)^{\frac{1}{1+v}}$ in \cite{tao2022optimal} where the authors focus on DP-MAB with heavy-tailed rewards, the difference comes from that the privacy budget in our problem is $\frac{\epsilon}{H}$ for each step $h$ based on Composition Theorem in Lemma \ref{compositionThm}, while $H=1$ in MAB case. Compared with the truncation threshold $B_n=\left(\frac{un}{\log(2SAT/\delta)}\right)^{\frac{1}{1+v}}$ in non-private RL with heavy-tailed rewards \cite{zhuang2021no}, we can see the efficient number of samples in our problem becomes to $n\epsilon$ due to privacy. From our truncation threshold value $B_n=\left(\frac{\epsilon u n}{H \log^{1.5}K \log (3SAT/\delta)}\right)^{\frac{1}{1+v}}$, we also can check the trade-off between utility and privacy: larger $\epsilon$ value provides weaker privacy guarantee but we will truncate the data with a larger range, so more data information will be used, then the bias will be smaller so that we get better utility .
\end{remark}

\begin{proof}[\bf Proof of Lemma \ref{ConcenPrivateVI}]
We first define the non-private mean empirical rewards: $\bar{r}_{h}^{k}(s, a):=\frac{R_{h}^{k}(s, a)}{N_{h}^{k}(s, a) \vee 1}$. We then get the non-private estimation error. For a fixed tuple $(s,a,h) \in \mathcal{S}\times \mathcal{A}\times[H]$, at every episode $k \in [K]$, from Bernstein's inequality in Lemma \ref{bernstein} for bounded random variables, with probability at least $1-\frac{\delta}{2SAT}$, noting that 
\begin{equation}
  \mathbb{E}(X^2 \cdot \mathbb{I}_{|X|\le B})= \mathbb{E}(X^{1+v}X^{1-v} \cdot \mathbb{I}_{|X|\le B}) \le B^{1-v}\mathbb{E}(X^{1+v} \cdot \mathbb{I}_{|X|\le B}) \le  uB^{1-v}  
\end{equation}
and 
\begin{equation}
    \mathbb{E}[X\mathbb{I}_{|X|>B}] \le \mathbb{E}[|X|^{1+v}|X|^{-v}\mathbb{I}_{|X|>B}] \le \frac{u}{B^v}
\end{equation}
if $\mathbb{E}|X|^{1+v} \leq u$,
\begin{equation}
\label{eq1}
\begin{aligned}
&\left|\bar{r}_{h}^{k}(s, a)-r_{h}(s, a)\right|\\
=& \left|\frac{R_{h}^{k}(s, a)}{N_{h}^{k}(s, a)\vee 1 }-r_{h}(s, a)\right|\\
=&|\frac{1}{N_{h}^{k}(s, a)\vee 1}\sum_{i=1}^{N_h^{k}(s,a)}[r_{h,i}(s,a)\mathbbm{I}\{|r_{h,i}(s,a)| \le B_{i}\}-\mathbb{E}(X_h(s,a)\mathbbm{I}\{|X_h(s,a)| \le  B_{i}\})\\
&+\mathbb{E}(X_h(s,a)\mathbbm{I}\{|X_h(s,a)| \le B_{i}\})-\mathbb{E}(X_h(s,a))]|\\
\le & \left|\frac{1}{N_{h}^{k}(s, a)\vee 1}\sum_{i=1}^{N_h^{k}(s,a)}[r_{h,i}(s,a)\mathbbm{I}\{|r_{h,i}(s,a)| \le B_{i}\}-\mathbb{E}(X_h(s,a)\mathbbm{I}\{|X_h(s,a)| \le  B_{i}\})\right|\\
&+\left|\frac{1}{N_{h}^{k}(s, a)\vee 1}\sum_{i=1}^{N_h^{k}(s,a)}\mathbb{E}(X_h(s,a)\mathbbm{I}\{|X_h(s,a)| \ge  B_{i}\})\right|\\
\le &\sqrt{\frac{2uB_{N_h^{k}(s,a)}^{1-v}\log(\frac{2SAT}{\delta})}{N_h^{k}(s,a)\vee 1}}+\frac{B_{N_h^{k}(s,a)}\log\frac{2SAT}{\delta}}{3(N_h^{k}(s,a)\vee 1)}+  \frac{1}{N_h^{k}(s,a)\vee 1}\sum_{i=1}^{N_h^{k}(s,a)}\frac{u}{B_{i}^v}
\end{aligned}
\end{equation}
Let $L_{r,k}=\sqrt{\frac{2uB_{N_h^{k}(s,a)}^{1-v}\log(\frac{2SAT}{\delta})}{N_h^{k}(s,a)\vee 1}}+\frac{B_{N_h^{k}(s,a)}\log\frac{2SAT}{\delta}}{3(N_h^{k}(s,a)\vee 1)}+  \frac{1}{N_h^{k}(s,a)\vee 1}\sum_{i=1}^{N_h^{k}(s,a)}\frac{u}{B_{i}^v}$, then we denote 
\begin{equation}
F^{c}=\left\{\exists s, a, h, k:\left|\bar{r}_{h}^{k}(s, a)-r_{h}(s, a)\right| \geq L_{r,k} \right\}
\end{equation}

Then, by using union bound over all $s,a,h,k$, we have 
$\mathbb{P}(F^c) \le  \frac{\delta}{2}$. So, 
\begin{equation}
\begin{aligned}
\mathbb{P}(F)&=\mathbb{P}\left(\left\{\forall s, a, h, k:\left|\bar{r}_{h}^{k}(s, a)-r_{h}(s, a)\right| \leq L_{r,k}\right\}\right)\\
&= 1-\mathbb{P}(F^c)\\
&=1-\frac{\delta}{2}.
\end{aligned}
\end{equation}
We first study the concentration of the private reward estimate. Note that under the event in Assumption \ref{Assum1},
\begin{equation}
    \left|\frac{\widetilde{R}_{h}^{k}(s, a)}{\left(\widetilde{N}_{h}^{k}(s, a)+E_{\varepsilon, \delta, 1}\right) \vee 1}-\frac{R_{h}^{k}(s, a)}{\left(\widetilde{N}_{h}^{k}(s, a)+E_{\varepsilon, \delta, 1}\right) \vee 1}\right| \leq \frac{E_{\varepsilon, \delta,k, 2}}{\left(\widetilde{N}_{h}^{k}(s, a)+E_{\varepsilon, \delta, 1}\right) \vee 1},
\end{equation}

since $\widetilde{N}_{h}^{k}(s, a)+E_{\varepsilon, \delta, 1} \ge {N}_{h}^{k}(s, a) \ge 0$ and $|\widetilde{R}_{h}^{k}(s, a)-{R}_{h}^{k}(s, a)| \le E_{\varepsilon, \delta,k, 2}$. 
\begin{equation}
\begin{aligned}
&\left|\frac{R_{h}^{k}(s, a)}{\left(\tilde{N}_{h}^{k}(s, a)+E_{\varepsilon, \delta, 1}\right) \vee 1}-r_{h}(s, a)\right| \\
\leq &\left|r_{h}(s, a)\left(\frac{N_{h}^{k}(s, a) \vee 1}{\left(\widetilde{N}_{h}^{k}(s, a)+E_{\varepsilon, \delta, 1}\right) \vee 1}-1\right)\right|+\left|\frac{N_{h}^{k}(s, a) \vee 1}{\left(\widetilde{N}_{h}^{k}(s, a)+E_{\varepsilon, \delta, 1}\right) \vee 1}\left(\frac{R_{h}^{k}(s, a)}{N_{h}^{k}(s, a) \vee 1}-r_{h}(s, a)\right)\right| \\
 \leq &|r_{h}(s, a)|\left|1-\frac{N_{h}^{k}(s, a) \vee 1}{\left(\tilde{N}_{h}^{k}(s, a)+E_{\varepsilon, \delta, 1}\right) \vee 1}\right|+\frac{N_{h}^{k}(s, a) \vee 1}{\left(\tilde{N}_{h}^{k}(s, a)+E_{\varepsilon, \delta, 1}\right) \vee 1} {L_{r,k}}\\
\leq &\frac{2\tau E_{\varepsilon, \delta, 1}}{\left(\tilde{N}_{h}^{k}(s, a)+E_{\varepsilon, \delta, 1}\right) \vee 1}+\frac{L_{r,k} (N_{h}^{k}(s, a) \vee 1)}{\left(\widetilde{N}_{h}^{k}(s, a)+E_{\varepsilon, \delta, 1}\right) \vee 1}
\end{aligned}
\end{equation}
We put two pieces together, then get
\begin{equation}
\label{eq:conc}
    |\widetilde{r}_{h}^{k}(s, a)-{r}_{h}(s, a)|\le \frac{2\tau E_{\varepsilon, \delta, 1}}{\left(\tilde{N}_{h}^{k}(s, a)+E_{\varepsilon, \delta, 1}\right) \vee 1}+\frac{L_{r,k} (N_{h}^{k}(s, a) \vee 1)}{\left(\widetilde{N}_{h}^{k}(s, a)+E_{\varepsilon, \delta, 1}\right) \vee 1}+\frac{E_{\varepsilon, \delta,k, 2}}{\left(\widetilde{N}_{h}^{k}(s, a)+E_{\varepsilon, \delta, 1}\right) \vee 1}
\end{equation}
where $E_{\epsilon, \delta,k, 2}= \frac{6HB_{N_{h}^{k}(s, a)} }{\epsilon}  \log ^{1.5} K \ln (3SAT / \delta)$.
In order to get the trade-off between mean estimation error and private count error, we set the truncation threshold $B_n=\left(\frac{\epsilon u n}{H \log^{1.5}K \log (3SAT/\delta)}\right)^{\frac{1}{1+v}}$. Then we can bound each term in $L_{r,k}$ for $\epsilon \in (0,1] $ and $K \ge 2$:
$$\begin{aligned}
\sqrt{\frac{2uB_{N_h^{k}(s,a)}^{1-v}\log(\frac{2SAT}{\delta})}{N_h^{k}(s,a)\vee 1}} &\le \frac{\sqrt{2}u^{\frac{1}{1+v}}\epsilon^{\frac{1-v}{2(1+v)}}(\log(3SAT/\delta))^{\frac{v}{1+v}}}{(N_h^{k}(s,a)\vee 1)^{\frac{v}{1+v}}{(H\log^{1.5}K)}^{\frac{1-v}{2(1+v)}}} \\
&\le \sqrt{2}u^{\frac{1}{1+v}}\left(\frac{H\log^{1.5}K \log(3SAT/\delta)}{\epsilon(N_h^{k}(s,a)\vee 1)}\right)^{\frac{v}{1+v}}
\end{aligned},$$

$$\frac{B_{N_h^{k}(s,a)}\log\frac{2SAT}{\delta}}{3(N_h^{k}(s,a)\vee 1)} \le \frac{u^{\frac{1}{1+v}}(\log (3SAT/\delta))^{\frac{v}{1+v}}}{3(N_h^{k}(s,a)\vee 1)^{\frac{v}{1+v}}}\left(\frac{\epsilon}{H \log^{1.5}K}\right)^{\frac{1}{1+v}} \le \frac{u^{\frac{1}{1+v}}}{3}\left(\frac{H\log^{1.5}K \log(3SAT/\delta)}{\epsilon(N_h^{k}(s,a)\vee 1)}\right)^{\frac{v}{1+v}},$$

where the last inequality is based on the fact that 
$x^{\frac{1}{1+v}}\le x^{-\frac{v}{1+v}}$for $x \in (0,1]$,

$$\begin{aligned}
\frac{1}{N_h^{k}(s,a)\vee 1}\sum_{i=1}^{N_h^{k}(s,a)}\frac{u}{B_{i}^v} &\le \frac{u}{N_h^{k}(s,a)\vee 1}\sum_{i=1}^{N_h^{k}(s,a)}\left(\frac{H \log^{1.5}K \log (3SAT/\delta)}{\epsilon u i}\right)^{\frac{v}{1+v}}\\
& \le \frac{u^{\frac{1}{1+v}}}{N_h^{k}(s,a)\vee 1}\left(\frac{H \log^{1.5}K \log (3SAT/\delta)}{\epsilon}\right)^{\frac{v}{1+v}}\sum_{i=1}^{N_h^{k}(s,a)}i^{-\frac{v}{1+v}}\\
& \le \frac{u^{\frac{1}{1+v}}}{N_h^{k}(s,a)\vee 1}\left(\frac{H \log^{1.5}K \log (3SAT/\delta)}{\epsilon}\right)^{\frac{v}{1+v}}\cdot (1+v)\cdot (N_h^{k}(s,a))^{\frac{1}{1+v}}\\
& \le 2u^{\frac{1}{1+v}}\left(\frac{H \log^{1.5}K \log (3SAT/\delta)}{\epsilon(N_h^{k}(s,a)\vee 1)}\right)^{\frac{v}{1+v}}
\end{aligned}.$$

Thus, $L_{r,k} \le 4u^{\frac{1}{1+v}}\left(\frac{H \log^{1.5}K \log (3SAT/\delta)}{\epsilon(N_h^{k}(s,a)\vee 1)}\right)^{\frac{v}{1+v}}.$

And $$E_{\epsilon, \delta,k, 2}= \frac{6HB_{N_{h}^{k}(s, a)} }{\epsilon}  \log ^{1.5} K \ln (3SAT / \delta)=6(uN_{h}^{k}(s, a))^{\frac{1}{1+v}}\left(\frac{H \log^{1.5}K \log (3SAT/\delta)}{\epsilon}\right)^{\frac{v}{1+v}}.$$
Since $\widetilde{N}_{h}^{k}(s, a)+E_{\varepsilon, \delta, 1} \ge {N}_{h}^{k}(s, a) \ge 0$ , we obtain 
\begin{equation}
    |\widetilde{r}_{h}^{k}(s, a)-{r}_{h}(s, a)|\le \frac{2\tau E_{\varepsilon, \delta, 1}}{\left(\tilde{N}_{h}^{k}(s, a)+E_{\varepsilon, \delta, 1}\right) \vee 1}+10 u^{\frac{1}{1+v}}\left(\frac{H \log^{1.5}K \log (3SAT/\delta)}{\epsilon\left((\tilde{N}_h^{k}(s,a)+E_{\varepsilon, \delta, 1})\vee 1\right)}\right)^{\frac{v}{1+v}}.
\end{equation}

To show the second result, we first note that $V^*$ is fixed and $V_h^*(s) \le \tau H$ for all $h$ and $s$. Based on Hoeffding's inequality in Lemma \ref{hoeffding}, we can use similar proof as Lemma 4.1 in \cite{chowdhury2021differentially} to get with probability at least $1-\delta/2$
$$\left|\left(\widetilde{P}_{h}^{k}-P_{h}\right) V_{h+1}^{*}(s, a)\right| \leq \tau H \sqrt{\frac{2 \ln (4SAT/\delta)}{(\tilde{N}_h^k(s,a)+E_{\epsilon,\delta,1}) \vee 1}}+\frac{\tau H(2E_{\epsilon,\delta,1}+SE_{\epsilon,\delta,3})}{(\tilde{N}_h^k(s,a)+E_{\epsilon,\delta,1}) \vee 1}$$
and with probability at least $1-\delta$,
$$\left|P_{h}\left(s^{\prime} \mid s, a\right)-\widetilde{P}_{h}^{k}\left(s^{\prime} \mid s, a\right)\right| \leq C\sqrt{\frac{P_h(s^\prime |s,a)\ln(2SAT/\delta)}{(\tilde{N}_h^k(s,a)+E_{\epsilon,\delta,1})\vee 1}}+\frac{C\ln(2SAT/\delta)+2E_{\epsilon,\delta,1}+E_{\epsilon,\delta,3}}{(\tilde{N}_h^k(s,a)+E_{\epsilon,\delta,1})\vee 1},$$
which is based on the application of empirical Bernstein inequality.
\end{proof}

The next lemma claims that the value function maintained in Algorithm \ref{UCBVI} is optimistic.
\begin{lemma}
\label{lem:opt}
Fix some $\delta \in (0,1]$, with probability at least $1-3\delta$, $\widetilde{V}_h^k(s) \ge  V_h^*(s)$ for all $(k,h,s)$.
\end{lemma}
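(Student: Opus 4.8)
The plan is to condition on the high-probability event of Lemma~\ref{ConcenPrivateVI} and then run a backward induction over $h$ for each fixed episode $k$. Since that event holds with probability at least $1-3\delta$ uniformly over all $(s,a,h,k)$, establishing optimism on this event for every $(k,h,s)$ simultaneously yields the claim with exactly the stated probability.

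Fix an episode $k$. I would induct on $h$ from $H+1$ down to $1$, proving the stronger statement $\widetilde{Q}_h^k(s,a)\ge Q_h^*(s,a)$ for all $(s,a)$, from which $\widetilde{V}_h^k(s)=\max_a\widetilde{Q}_h^k(s,a)\ge\max_a Q_h^*(s,a)=V_h^*(s)$ follows immediately. The base case $h=H+1$ is trivial since $\widetilde{V}_{H+1}^k(s)=0=V_{H+1}^*(s)$. For the inductive step, first observe that $|Q_h^*(s,a)|\le (H-h+1)\tau$, because $|r_{h'}(s,a)|\le\tau$ at every step and there are at most $H-h+1$ remaining steps; hence $Q_h^*(s,a)$ lies inside the clipping interval $[-(H-h+1)\tau,(H-h+1)\tau]$. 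Since the clipping map $x\mapsto\max\{-(H-h+1)\tau,\min\{(H-h+1)\tau,x\}\}$ is nondecreasing and fixes $Q_h^*(s,a)$, it suffices to show that the unclipped quantity
$$
\widetilde{r}_h^k(s,a)+\sum_{s'}\widetilde{V}_{h+1}^k(s')\widetilde{P}_h^k(s'|s,a)+\beta_h^k(s,a)
$$
is at least $Q_h^*(s,a)=r_h(s,a)+\sum_{s'}V_{h+1}^*(s')P_h(s'|s,a)$.

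The key step is to split the difference of the transition terms as
$$
\sum_{s'}\big(\widetilde{V}_{h+1}^k(s')-V_{h+1}^*(s')\big)\widetilde{P}_h^k(s'|s,a)+\big(\widetilde{P}_h^k-P_h\big)V_{h+1}^*(s,a).
$$
The first summand is nonnegative by the induction hypothesis together with $\widetilde{P}_h^k(s'|s,a)\ge 0$; this is precisely why I carry the fixed function $V_{h+1}^*$ through the induction, since the concentration bound of Lemma~\ref{ConcenPrivateVI} controls only $(\widetilde{P}_h^k-P_h)V_{h+1}^*$ and not the random $\widetilde{V}_{h+1}^k$. On the event of Lemma~\ref{ConcenPrivateVI} we have $\widetilde{r}_h^k(s,a)-r_h(s,a)\ge -\beta_h^{k,r}(s,a)$ and $(\widetilde{P}_h^k-P_h)V_{h+1}^*(s,a)\ge -\beta_h^{k,pv}(s,a)$. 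Combining these with the definition $\beta_h^k(s,a)=\beta_h^{k,r}(s,a)+\beta_h^{k,pv}(s,a)$ shows that the unclipped quantity minus $Q_h^*(s,a)$ is at least $-\beta_h^{k,r}(s,a)-\beta_h^{k,pv}(s,a)+\beta_h^k(s,a)=0$, completing the induction.

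The main obstacle is organizing the induction so that the concentration bound is applied only to the deterministic target $V_{h+1}^*$: a naive bound on $(\widetilde{P}_h^k-P_h)\widetilde{V}_{h+1}^k$ would fail because $\widetilde{V}_{h+1}^k$ depends on the same private counts used to form $\widetilde{P}_h^k$, so the two are statistically coupled. Handling the clipping via monotonicity of the clip map (rather than a case analysis on whether the unclipped value falls inside the interval) and verifying that the bonus $\beta_h^k$ exactly absorbs the reward and transition estimation errors are the remaining details, both of which are routine once the decomposition above is in place.
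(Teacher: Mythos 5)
Your proof is correct and follows essentially the same route as the paper's: backward induction on $h$ for each fixed $k$, conditioned on the event of Lemma~\ref{ConcenPrivateVI}, with the transition error split exactly as the paper does so that concentration is applied only to the fixed function $V_{h+1}^*$ while the induction hypothesis (together with $\widetilde{P}_h^k(\cdot|s,a)\ge 0$) handles $\widetilde{P}_h^k\bigl(\widetilde{V}_{h+1}^k-V_{h+1}^*\bigr)\ge 0$. Your handling of the truncation via monotonicity of the clip map is a cosmetic repackaging of the paper's observation that the unclipped quantity is at least $Q_h^*(s,a)\ge -(H-h+1)\tau$, so the lower clip never binds and only the upper clip needs the trivial case $Q_h^*(s,a)\le (H-h+1)\tau$.
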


\begin{proof}[\bf Proof of Lemma \ref{lem:opt}]
   For a fixed $k$, consider $h=H+1,H,\ldots, 1$. In the base case $h=H+1$, it trivially holds since $\widetilde{V}_{H+1}^k(s) = 0 = V_{H+1}^*(s)$. Assume that  $\widetilde{V}_{h+1}^k(s) \ge V_{h+1}^*(s)$ for all $s$. Then, by the update rule, we have 
   \begin{align*}
       \widetilde{{Q}}_h^k(s,a) = \max\{-(H-h+1)\tau, \min\{(H-h+1)\tau,\widetilde{r}_h^k(s,a) + (\widetilde{P}_h^k\widetilde{V}_{h+1}^k)(s,a) + \beta_h^k(s,a)\} \}
   \end{align*}
   
   First, we would like to show that the truncation at $-(H-h+1)\tau$ does not affect the analysis. To see this, first, observe that under Lemma~\ref{ConcenPrivateVI}
   \begin{equation}
   \label{eq:relation}
       \begin{aligned}
       \widetilde{r}_h^k(s,a) + (\widetilde{P}_h^k\widetilde{V}_{h+1}^k)(s,a)+ \beta_h^k(s,a) &\overset{(a)}{\ge} r_h(s,a) + (\widetilde{P}_h^k\widetilde{V}_{h+1}^k)(s,a) + \beta_{h}^{k,pv}(s,a)\\
       &\overset{(b)}{\ge} r_h(s,a) + (\widetilde{P}_h^k{V}_{h+1}^*)(s,a) + \beta_{h}^{k,pv}(s,a)\\
       &\overset{(c)}{\ge} r_h(s,a) + (P_hV_{h+1}^*)(s,a) = Q_h^*(s,a) \\
       &\ge -(H-h+1)\tau 
   \end{aligned}
   \end{equation}
   
   where (a) holds by the first result in Lemma~\ref{ConcenPrivateVI}; (b) holds by induction; (c) holds by the second result in Lemma~\ref{ConcenPrivateVI}. This directly implies that 
   \begin{align*}
       \widetilde{{Q}}_h^k(s,a) = \min\{(H-h+1)\tau ,\widetilde{r}_h^k(s,a) + (\widetilde{P}_h^k\widetilde{V}_{h+1}^k)(s,a) + \beta_h^k(s,a)\}
   \end{align*}
    Hence, if the maximum is attained at $(H-h+1)\tau$, then $\widetilde{{Q}}_h^k(s,a) \ge Q_h^*(s,a)$ trivially holds since $Q_h^*(s,a) \in [-(H-h+1)\tau ,(H-h+1)\tau ]$. Otherwise, by Eq.~\eqref{eq:relation}, we also have $\widetilde{{Q}}_h^k(s,a) \ge Q_h^*(s,a)$. Therefore, we have $\widetilde{Q}_h^k(s,a) \ge Q_h^*(s,a)$, and hence $\widetilde{V}_h^k(s) \ge V_h^*(s)$.
\end{proof}

\begin{proof}[\bf Proof of Theorem \ref{thm:RegretVI}]
By the optimistic result in Lemma~\ref{lem:opt}, we have 
\begin{align}
\label{eq:vi_regret_start}
    Reg(T) = \sum_{k=1}^K (V_1^{*}(s_1)-V_1^{\pi_k}(s_1)) \le \sum_{k=1}^K(\widetilde{V}_1^{k}(s_1)-{V}_1^{\pi_k}(s_1))
\end{align}
Now, we turn to upper bound $\widetilde{V}_h^k(s_h^k)-V_h^{\pi_k}(s_h^k)$ by a recursive form. 

First, observe that 
\begin{align*}
    (\widetilde{V}_h^k-V_h^{\pi_k} )(s_h^k)  = (\widetilde{Q}_h^k-Q_h^{\pi_k} )(s_h^k,a_h^k),
\end{align*}
which holds since the action executed by $\pi_k$ at step $h$, and the action used to update $\widetilde{V}_h^k$ is the same. Now, to bound the $Q$-value difference, we have 
\begin{equation}
\begin{aligned}
\label{eq:q_decom}
    &(\widetilde{Q}_h^k-Q_h^{\pi_k} )(s_h^k,a_h^k)\\
    \overset{(a)}{\le}& 2\beta_h^{k,r}(s_h^k,a_h^k) + (\widetilde{P}_h^k \widetilde{V}_{h+1}^k-P_h V_{h+1}^{\pi_k} )(s_h^k,a_h^k) + \beta_h^{k,pv}(s,a)\\
    = &\left[(\widetilde{P}_h^k -P_h) \widetilde{V}_{h+1}^k\right](s_h^k,a_h^k) + \left[ P_h(\widetilde{V}_{h+1}^k-V_{h+1}^{\pi_k} )\right](s_h^k,a_h^k) + 2\beta_h^{k,r}(s_h^k,a_h^k) + \beta_h^{k,pv}(s,a)\\
    = & \left[( \widetilde{P}_h^k-P_h ) {V}_{h+1}^*\right](s_h^k,a_h^k) + \left[(P_h-\widetilde{P}_h^k ) ( {V}_{h+1}^*-\widetilde{V}_{h+1}^k)\right](s_h^k,a_h^k) 
    + \left[ P_h(\widetilde{V}_{h+1}^k -V_{h+1}^{\pi_k})\right](s_h^k,a_h^k)\\
    &+ 2\beta_h^{k,r}(s_h^k,a_h^k) + \beta_h^{k,pv}(s,a)\\
    \overset{(b)}{\le}&\left[(\widetilde{P}_h^k -P_h) (\widetilde{V}_{h+1}^k- {V}_{h+1}^*)\right](s_h^k,a_h^k) 
    + \left[ P_h(\widetilde{V}_{h+1}^k -V_{h+1}^{\pi_k})\right](s_h^k,a_h^k)+ 2\beta_h^{k,r}(s_h^k,a_h^k) + 2\beta_h^{k,pv}(s,a)
\end{aligned}
\end{equation}
where (a) we have used the reward concentration result in Lemma~\ref{ConcenPrivateVI}; (b) holds by the transition concentration result in  Lemma~\ref{ConcenPrivateVI}.
Thus, so far we have arrived at 
\begin{align}
     (\widetilde{V}_h^k -V_h^{\pi_k})(s_h^k)  \le & \left[(\widetilde{P}_h^k-P_h ) (\widetilde{V}_{h+1}^k-{V}_{h+1}^* )\right](s_h^k,a_h^k) +\left[ P_h( \widetilde{V}_{h+1}^k-V_{h+1}^{\pi_k} )\right](s_h^k,a_h^k)\nonumber \\
    &+ 2\beta_h^{k,r}(s_h^k,a_h^k) + 2\beta_h^{k,pv}(s_h^k,a_h^k)\label{eq:vi_recur_start}.
\end{align}

We will first carefully analyze the first term. In particular, let $G:= (\widetilde{V}_{h+1}^k-{V}_{h+1}^* )$, we have 
\begin{align}
&\left[(\widetilde{P}_h^k-P_h ) (\widetilde{V}_{h+1}^k- {V}_{h+1}^*)\right](s_h^k,a_h^k)\nonumber\\
    =&\sum_{s'}\left( \widetilde{P}_h^k(s'|s_h^k,a_h^k) -P_h(s'|s_h^k,a_h^k)\right)G(s')\nonumber\\
    \overset{(a)}{\le} & c \sum_{s'} \left( \sqrt{\frac{\ln(2SAT/\delta)P_h(s'|s_h^k,a_h^k)}{{(\widetilde{N}_h^k(s,a) +  E_{\epsilon,\delta,1} )\vee 1}}} +\frac{\ln(2SAT/\delta)}{(\widetilde{N}_h^k(s,a) +  E_{\epsilon,\delta,1})\vee 1}+ \frac{2 E_{\epsilon,\delta,1} + E_{\epsilon,\delta,3}}{(\widetilde{N}_h^k(s,a) +  E_{\epsilon,\delta,1})\vee 1}\right) G(s')\nonumber\\
    \overset{(b)}{\le} & \sum_{s'} \left(\frac{P_h(s'|s_h^k,a_h^k)}{H} G(s')  \right)+ c \sum_{s'} \left(\frac{H \ln(2SAT/\delta)} {(\widetilde{N}_h^k(s,a) +  E_{\epsilon,\delta,1})\vee 1}\right)G(s')\nonumber \\
    &+ c\sum_{s'} \left(\frac{\ln(2SAT/\delta)}{(\widetilde{N}_h^k(s,a) +  E_{\epsilon,\delta,1})\vee 1}\right)G(s') + c\sum_{s'}\left(\frac{2 E_{\epsilon,\delta,1} + E_{\epsilon,\delta,3}}{(\widetilde{N}_h^k(s,a) +  E_{\epsilon,\delta,1})\vee 1}\right)G(s')\nonumber\\
    \overset{(c)}{\le} & \sum_{s'} \left(\frac{P_h(s'|s_h^k,a_h^k)}{H}  G(s')\right) +  \sum_{s'} \left(\frac{c'H^2\tau \ln(2SAT/\delta)} {(\widetilde{N}_h^k(s,a) +  E_{\epsilon,\delta,1})\vee 1}\right)+c\sum_{s'}\left(\frac{4\tau H E_{\epsilon,\delta,1} + 2\tau HE_{\epsilon,\delta,3}}{(\widetilde{N}_h^k(s,a) +  E_{\epsilon,\delta,1})\vee 1}\right)\label{eq:vi_recur_temp},
\end{align}
where (a) holds by the third result in Lemma~\ref{ConcenPrivateVI} and $c$ is some absolute constant; (b) holds by $\sqrt{xy} \le x + y$ for positive numbers $x,y$; (c) holds since $G(s') \le 2H\tau$ by the boundedness of $V$-value. Now, plugging the definition for $G(s')$ into~\eqref{eq:vi_recur_temp}, yields
\begin{align}
&\left[(\widetilde{P}_h^k-P_h ) (\widetilde{V}_{h+1}^k-{V}_{h+1}^* )\right](s_h^k,a_h^k)\nonumber\\
    \le &\frac{1}{H} \left[P_h(\widetilde{V}_{h+1}^k-V_{h+1}^* )\right](s_h^k,a_h^k) + \frac{c'\tau SH^2 \ln(2SAT/\delta)} {(\widetilde{N}_h^k(s,a) +  E_{\epsilon,\delta,1})\vee 1} + \frac{4c\tau SH E_{\epsilon,\delta,1} + 2c\tau SHE_{\epsilon,\delta,3}}{(\widetilde{N}_h^k(s,a) +  E_{\epsilon,\delta,1})\vee 1}\nonumber\\
    \overset{(a)}{\le}&\frac{1}{H} \left[P_h(\widetilde{V}_{h+1}^k-V_{h+1}^* )\right](s_h^k,a_h^k) + \xi_h^k + \zeta_h^k\nonumber\\
    \overset{(b)}{\le}& \frac{1}{H}\left[P_h(\widetilde{V}_{h+1}^k-V_{h+1}^{\pi_k} )\right](s_h^k,a_h^k)+\xi_h^k + \zeta_h^k\label{eq:vi_recur_end},
\end{align}
where (a) holds by definitions $\xi_h^k=:\frac{c'\tau SH^2 \ln(2SAT/\delta)} {(\widetilde{N}_h^k(s,a) +  E_{\epsilon,\delta,1})\vee 1}$ and $\zeta_h^k:=\frac{4c\tau SH E_{\epsilon,\delta,1} + 2c\tau SHE_{\epsilon,\delta,3}}{(\widetilde{N}_h^k(s,a) +  E_{\epsilon,\delta,1})\vee 1}$; (b) holds since $V_{h+1}^{\pi_k} \le V_{h+1}^*$. Plugging~\eqref{eq:vi_recur_end} into~\eqref{eq:vi_recur_start}, yields the following recursive formula. 
\begin{align*}
      (\widetilde{V}_h^k-V_h^{\pi_k} )(s_h^k) &\overset{(a)}{\le} \left(1+\frac{1}{H}\right)\left[P_h(\widetilde{V}_{h+1}^k- V_{h+1}^{\pi_k})\right](s_h^k,a_h^k)+\xi_h^k + \zeta_h^k + 2\beta_h^k\\
     &\overset{(b)}{=} \left(1+\frac{1}{H}\right)\left[(\widetilde{V}_{h+1}^k- V_{h+1}^{\pi_k})(s_{h+1}^k) + \chi_{h}^k\right] + \xi_h^k + \zeta_h^k + 2\beta_h^k
\end{align*}
where in (a), we let $\beta_h^k:= \beta_h^{k,r}(s_h^k,a_h^k) + \beta_h^{k,pv}(s_h^k,a_h^k)$ for notation simplicity; (b) holds by definition $\chi_h^k:=\left[P_h(\widetilde{V}_{h+1}^k- V_{h+1}^{\pi_k})\right](s_h^k,a_h^k) - (\widetilde{V}_{h+1}^k- V_{h+1}^{\pi_k})(s_{h+1}^k)$. Based on this, we have the following bound on $(\widetilde{V}_1^k - V_1^{\pi_k})(s_1^k)$,
\begin{align}
     &(\widetilde{V}_1^k - V_1^{\pi_k})(s_1^k)\nonumber\\
     \le & \left(1+\frac{1}{H}\right)(\chi_{1}^k + \xi_1^k + \zeta_1^k + 2\beta_1^k) + \left(1+\frac{1}{H}\right)^2(\chi_{2}^k + \xi_2^k + \zeta_2^k + 2\beta_2^k) + \ldots\nonumber \\
     &+ \left(1+\frac{1}{H}\right)^H(\chi_{H}^k + \xi_H^k + \zeta_H^k + 2\beta_H^k)\nonumber\\
     \le & 3\sum_{h=1}^H (\chi_{h}^k + \xi_h^k + \zeta_h^k + 2\beta_h^k)\label{eq:vi_regret_end}.
\end{align}

Therefore, plugging~\eqref{eq:vi_regret_end} into~\eqref{eq:vi_regret_start}, we have the regret decomposition as follows.
\begin{align*}
    Reg(T) \le 3\sum_{k=1}^K\sum_{h=1}^H (\chi_{h}^k + \xi_h^k + \zeta_h^k + 2\beta_h^k)
\end{align*}
We are only left to bound each of them. To start with, we focus on the bonus term. We first focus on  $\beta_{h}^{k,r}(s,a)$ as shown in Lemma~\ref{ConcenPrivateVI}. By definition, we have 
\begin{align*}
    \sum_{k=1}^K\sum_{h=1}^H \beta_h^{k,r}(s,a)=\underbrace{\sum_{k=1}^K\sum_{h=1}^H \frac{2\tau E_{\varepsilon, \delta, 1}}{\left(\tilde{N}_{h}^{k}(s_h^k, a_h^k)+E_{\varepsilon, \delta, 1}\right) \vee 1}}_{\mathcal{O}_1}+\underbrace{10 u^{\frac{1}{1+v}}\sum_{k=1}^K\sum_{h=1}^H\left(\frac{H \log^{1.5}K \log (3SAT/\delta)}{\epsilon\left((\tilde{N}_h^{k}(s_h^k,a_h^k)+E_{\varepsilon, \delta, 1})\vee 1\right)}\right)^{\frac{v}{1+v}}}_{\mathcal{O}_2},
\end{align*}
The first term can be upper bounded as follows $(T=KH)$ under assumption \ref{Assum1}
\begin{align*}
    \mathcal{O}_1 &\le  2\tau E_{\epsilon,\delta,1}\sum_{k=1}^K\sum_{h=1}^H\frac{1}{N_h^k(s_h^k,a_h^k)
    \vee 1}\\
    &=  2\tau E_{\epsilon,\delta,1} \sum_{h,s,a}\sum_{i=1}^{N_h^K(s,a)} \frac{1}{i}\\
    &\le c'E_{\epsilon,\delta,1} \sum_{h,s,a}\ln(N_h^K(s,a))\\
    &= \widetilde{O}\left( HSAE_{\epsilon,\delta,1}\right).
\end{align*}
where $\widetilde{O}(\cdot)$ hides $polylog(S,A,T,1/\delta)$ factors.

The second term can be upper bounded as follows under Assumption~\ref{ConcenPrivateVI}.
\begin{align*}
    \mathcal{O}_2 &\le c u^{\frac{1}{1+v}}\left(\frac{H \log^{1.5}K \log (3SAT/\delta)}{\epsilon}\right)^{\frac{v}{1+v}}\sum_{k=1}^K\sum_{h=1}^H \left(\frac{1}{{N}_h^{k}(s_h^k,a_h^k)\vee 1}\right)^{\frac{v}{1+v}}\\
    & = c u^{\frac{1}{1+v}}\left(\frac{H \log^{1.5}K \log (3SAT/\delta)}{\epsilon}\right)^{\frac{v}{1+v}}\sum_{h,s,a}\sum_{i=1}^{N_h^K(s,a)} \frac{1}{i^{\frac{v}{1+v}}}\\
    &\le c^\prime u^{\frac{1}{1+v}}\left(\frac{H \log^{1.5}K \log (3SAT/\delta)}{\epsilon}\right)^{\frac{v}{1+v}}\sum_{h,s,a} (N_h^K(s,a))^{\frac{1}{1+v}}\\
    & \overset{(a)}{\le} c^\prime u^{\frac{1}{1+v}}\left(\frac{H \log^{1.5}K \log (3SAT/\delta)}{\epsilon}\right)^{\frac{v}{1+v}}\left(\sum_{h,s,a} 1 \right)^{\frac{v}{1+v}}\left(\sum_{h,s,a} N_h^K(s,a) \right)^{\frac{1}{1+v}}\\
    & \le \widetilde{O}\left(u^{\frac{1}{1+v}}\left(\frac{H^2SA}{\epsilon}\right)^{\frac{v}{1+v}}T^{\frac{1}{1+v}}\right)
\end{align*}
where (a) is based on Hölder's inequality with $x_k=1, y_k=(N_h^K(s,a))^{\frac{1}{1+v}}, q=1+v$ in Lemma \ref{holder}.

Putting them together, we have the upper bound for the summation over $\beta_h^{k,r}(s,a)$,
$$\sum_{k=1}^K\sum_{h=1}^H \beta_h^{k,r}(s,a)=\widetilde{O}\left( HSAE_{\epsilon,\delta,1}+u^{\frac{1}{1+v}}\left(\frac{H^2SA}{\epsilon}\right)^{\frac{v}{1+v}}T^{\frac{1}{1+v}}\right)$$

Now, we study the upper bound of $\beta_h^{k,pv}(s_h^k,a_h^k)$. By definition, we have 
\begin{align*}
    &\sum_{k=1}^K\sum_{h=1}^H \beta_h^{k,pv}(s_h^k,a_h^k) \\
    =&\underbrace{\tau H\sum_{k=1}^K\sum_{h=1}^H  \sqrt{\frac{2\ln(4SAT/\delta)} {{(\widetilde{N}_h^k(s_h^k,a_h^k) +  E_{\epsilon,\delta,1} )\vee 1}}}   }_{\mathcal{T}_1}+  \underbrace{\tau H\sum_{k=1}^K\sum_{h=1}^H\frac{ (S E_{\epsilon,\delta,3} + 2 E_{\epsilon,\delta,1})}{(\widetilde{N}_h^k(s_h^k,a_h^k) +  E_{\epsilon,\delta,1})\vee 1}}_{\mathcal{T}_2}.
\end{align*}
The first term can be upper bounded as follows ($T:= KH$) under Assumption~\ref{Assum1}.
\begin{align*}
    \mathcal{T}_1 &\le \tau H{\sqrt{2\ln(4SAT/\delta)}}\sum_{k=1}^K\sum_{h=1}^H \sqrt{\frac{1}{{{N}_h^k(s,a) \vee 1}}}\\
    &=\tau {\sqrt{2\ln(4SAT/\delta)}} \sum_{h,s,a}\sum_{i=1}^{N_h^K(s,a)} \frac{1}{\sqrt{i}}\\
    &\le c' H{\sqrt{2\ln(4SAT/\delta)}}\sum_{h,s,a} \sqrt{N_h^K(s,a)}\\
    &\le c' H{\sqrt{2\ln(4SAT/\delta)}}\sqrt{ \left(\sum_{h,s,a} 1\right) \left( \sum_{h,s,a} N_h^K(s,a)\right) }\\
    &=\widetilde{O}\left(\sqrt{H^3SAT}\right).
\end{align*}
The second term can be upper bounded as follows under Assumption~\ref{Assum1}. 
\begin{align*}
    \mathcal{T}_2 &\le c H(SE_{\epsilon,\delta,3} + E_{\epsilon,\delta,1})\sum_{k=1}^K\sum_{h=1}^H\frac{1}{N_h^k(s_h^k,a_h^k)
    \vee 1}\\
    &= c H(SE_{\epsilon,\delta,3} + E_{\epsilon,\delta,1}) \sum_{h,s,a}\sum_{i=1}^{N_h^K(s,a)} \frac{1}{i}\\
    &\le c'H(SE_{\epsilon,\delta,3} + E_{\epsilon,\delta,1}) \sum_{h,s,a}\ln(N_h^K(s,a))\\
    &= \widetilde{O}\left(H^2S^2AE_{\epsilon,\delta,3} + H^2SAE_{\epsilon,\delta,1}\right).
\end{align*}
Putting them together, we have the following bound on the summation over $\beta_h^k$.
\begin{align*}
    \sum_{k=1}^K\sum_{h=1}^H \beta_h^k = \widetilde{O}\left(\sqrt{H^3SAT} + H^2S^2AE_{\epsilon,\delta,3} +  H^2SAE_{\epsilon,\delta,1}+u^{\frac{1}{1+v}}\left(\frac{H^2SA}{\epsilon}\right)^{\frac{v}{1+v}}T^{\frac{1}{1+v}}\right).
\end{align*}

By following the same analysis as in $\mathcal{T}_2$, we can bound the summation over $\xi_h^k=:\frac{c'\tau SH^2 \ln(2SAT/\delta)} {(\widetilde{N}_h^k(s,a) +  E_{\epsilon,\delta,1})\vee 1}$ and $\zeta_h^k:=\frac{4c\tau SH E_{\epsilon,\delta,1} + 2c\tau SHE_{\epsilon,\delta,3}}{(\widetilde{N}_h^k(s,a) +  E_{\epsilon,\delta,1})\vee 1}$ as follows.
\begin{align*}
    &\sum_{k=1}^K\sum_{h=1}^H\xi_h^k = \widetilde{O}\left( H^3S^2A\right)\\
    &\sum_{k=1}^K\sum_{h=1}^H\zeta_h^k = \widetilde{O}\left(H^2S^2A(E_{\epsilon,\delta,3} + E_{\epsilon,\delta,1})\right).
\end{align*}

Finally, we are going to bound the summation over $\chi_h^k:=\left[P_h(\widetilde{V}_{h+1}^k- V_{h+1}^{\pi_k})\right](s_h^k,a_h^k) - ( \widetilde{V}_{h+1}^k-V_{h+1}^{\pi_k})(s_{h+1}^k)$, which turns out to be a martingale difference sequence. In particular, we define a filtration $\mathcal{F}_h^k$ that includes all the randomness up to the $k$-th episode and the $h$-th step. Then, we have $\mathcal{F}_1^1 \subset \mathcal{F}_2^1 \ldots \subset \mathcal{F}_H^1\subset \mathcal{F}_1^2\subset \mathcal{F}_2^2 \ldots $. Also, we have $(\widetilde{V}_{h+1}^k - V_{h+1}^{\pi_k}) \in \mathcal{F}_1^k \subset \mathcal{F}_h^k$ since they are decided by data collected up to episode $k-1$. A bit abuse of notation, we define $Y_{h+1}^k :=\chi_h^k$. Then, we have 
\begin{align}
    \mathbb{E}\left[Y_{h+1}^k |\mathcal{F}_h^k\right] = 0.
\end{align}
This holds since the expectation only captures randomness over $s_{h+1}^k$. Thus, $Y_{h+1}^k$ is a martingale difference sequence. Moreover, we have $|Y_{h+1}^k| \le 4H\tau$ a.s. By Azuma-Hoeffding inequality, we have with probability at least $1-\delta$
\begin{align*}
    \sum_{k=1}^K\sum_{h=1}^H \chi_h^k = \sum_{k=1}^K\sum_{h=1}^H Y_{h+1}^k = c'\sqrt{H^2T\ln(2/\delta)} = \widetilde{O}\left(\sqrt{H^2T} \right)
\end{align*}

Putting everything together, and applying union bound on all high-probability events,  we have shown that with probability at least $1-\delta$,
\begin{align*}
\label{eq:rt_vi_proof}
    Reg(T) = \tilde{O}\left(\sqrt{SAH^3T} + S^2AH^3 + S^2AH^2E_{\epsilon,\delta,1} + S^2AH^2E_{\epsilon,\delta,3}+u^{\frac{1}{1+v}}\left(\frac{H^2SA}{\epsilon}\right)^{\frac{v}{1+v}}T^{\frac{1}{1+v}}\right).
\end{align*}

Based on the value of $E_{\epsilon,\delta,1},E_{\epsilon,\delta,3}$ in Lemma \ref{centralError}, we obtain
$$Reg(T) = \tilde{O}\left(\sqrt{SAH^3T} + S^2AH^3/\epsilon + u^{\frac{1}{1+v}}\left(\frac{SAH^2}{\epsilon}\right)^{\frac{v}{1+v}}T^{\frac{1}{1+v}}\right).$$

\end{proof}

\subsection{ Proof of Theorem \ref{thm:RegretVI2}} 

Similar to the JDP case we first provide some concentration bounds. 
\begin{lemma}[Concentration bounds of locally private estimators]
\label{ConcenPrivateVI_LDP}
Fix any $\epsilon \in (0,1]$ and $\delta \in (0,1)$ and take $B_n=\left(\frac{u\epsilon\sqrt{n}}{H \log(6SAT/\delta)}\right)^{\frac{1}{1+v}}$ in equation \eqref{trunctedR}. Then, under Assumption \ref{Assum1}, with probability at least $1- 3\delta$, uniformly over all $(s,a,h,k)$,
{\footnotesize	$|\widetilde{r}_{h}^{k}(s, a)-{r}_{h}(s, a)|\le \beta_h^{k,r}(s,a),\quad
 \left|\left(\widetilde{P}_{h}^{k}-P_{h}\right) V_{h+1}^{*}(s, a)\right| \leq \beta_{h}^{k, p v}(s, a), \quad
\left|P_{h}\left(s^{\prime} \mid s, a\right)-\widetilde{P}_{h}^{k}\left(s^{\prime} \mid s, a\right)\right| \leq C\sqrt{\frac{P_h(s^\prime |s,a)\ln(2SAT/\delta)}{(\tilde{N}_h^k(s,a)+E_{\epsilon,\delta,1})\vee 1}}+\frac{C\ln(2SAT/\delta)+2E_{\epsilon,\delta,1}+E_{\epsilon,\delta,3}}{(\tilde{N}_h^k(s,a)+E_{\epsilon,\delta,1})\vee 1}
$}
where $C$ is a positive constant,$(PV_{h+1})(s,a)=\sum_{s^\prime}P(s^\prime|s,a)V_{h+1}(s^\prime)$, {\footnotesize$\beta_h^{k,r}(s,a)=\frac{2\tau E_{\varepsilon, \delta, 1}}{\left(\tilde{N}_{h}^{k}(s, a)+E_{\varepsilon, \delta, 1}\right) \vee 1}+16 u^{\frac{1}{1+v}}\left(\frac{H  \log (6SAT/\delta)}{\epsilon\sqrt{(\tilde{N}_h^{k}(s,a)+E_{\varepsilon, \delta, 1})\vee 1}}\right)^{\frac{v}{1+v}},$} and $\beta_{h}^{k, p v}(s, a)$ is defined in Lemma \ref{ConcenPrivateVI}.
\end{lemma}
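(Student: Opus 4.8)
The plan is to mirror the proof of Lemma~\ref{ConcenPrivateVI} for the JDP case, substituting the LDP utility guarantees of Lemma~\ref{ErrorLDP} for those of Lemma~\ref{centralError} and re-optimizing the truncation threshold accordingly. First I would introduce the non-private truncated mean $\bar r_h^k(s,a) := R_h^k(s,a)/(N_h^k(s,a)\vee 1)$ and control $|\bar r_h^k(s,a) - r_h(s,a)|$ by Bernstein's inequality exactly as in~\eqref{eq1}, using $\mathbb{E}[X^2\mathbb{I}_{|X|\le B}]\le uB^{1-v}$ and $\mathbb{E}[X\mathbb{I}_{|X|>B}]\le u/B^v$. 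This yields the same three-term bound $L_{r,k}$, consisting of a variance term, a Bernstein range term, and a truncation-bias term $\frac{1}{N\vee 1}\sum_{i=1}^{N}u/B_i^v$. This step is identical to the JDP argument and is insensitive to the privacy model.

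The difference enters through the decomposition~\eqref{eq:conc}, where the private estimator error is bounded by $L_{r,k}(N\vee1)/((\tilde N_h^k+E_{\epsilon,\delta,1})\vee 1)$ plus the count-induced term $E_{\epsilon,\delta,k,2}/((\tilde N_h^k+E_{\epsilon,\delta,1})\vee 1)$. Here I would substitute the LDP value $E_{\epsilon,\delta,k,2} = \frac{12HB_k}{\epsilon}\sqrt{k\log(6SAT/\delta)}$ from Lemma~\ref{ErrorLDP}, which scales like $B_k\sqrt k$ rather than the $B_k\log^{1.5}K$ of the central case. To balance this count error against the truncation bias, the correct choice is $B_n = (u\epsilon\sqrt n/(H\log(6SAT/\delta)))^{1/(1+v)}$, so that the effective sample size becomes $\epsilon\sqrt n$ rather than $\epsilon n$. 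With this choice I would verify term by term that each of the three pieces of $L_{r,k}$ and the count-error term all collapse, up to constants and using $\log(6SAT/\delta)\ge 1$, to the single form $u^{1/(1+v)}\bigl(\tfrac{H\log(6SAT/\delta)}{\epsilon\sqrt{(\tilde N_h^k+E_{\epsilon,\delta,1})\vee 1}}\bigr)^{v/(1+v)}$; summing the tail term uses $\sum_{i\le N} i^{-v/(2(1+v))} = O(N^{1-v/(2(1+v))})$. Combined with the $2\tau E_{\epsilon,\delta,1}/((\tilde N_h^k+E_{\epsilon,\delta,1})\vee1)$ bias from replacing $N_h^k$ by $\tilde N_h^k+E_{\epsilon,\delta,1}$, this yields the claimed $\beta_h^{k,r}(s,a)$ with constant $16$.

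For the two transition-related bounds I would note that $V^*$ is fixed with $\|V_{h+1}^*\|_\infty \le \tau H$, so the argument concerns only concentration of $\tilde P_h^k$ around $P_h$ and is unaffected by the heavy-tailed rewards. The bound on $|(\tilde P_h^k - P_h)V_{h+1}^*|$ follows from Hoeffding's inequality (Lemma~\ref{hoeffding}) and the bound on $|P_h(s'|s,a) - \tilde P_h^k(s'|s,a)|$ from the empirical Bernstein inequality, in both cases exactly as in the proof of Lemma~\ref{ConcenPrivateVI}; only the numerical values of $E_{\epsilon,\delta,1}$ and $E_{\epsilon,\delta,3}$ are now taken from Lemma~\ref{ErrorLDP}, and since $\beta_h^{k,pv}$ is stated to coincide with the JDP lemma no new work is needed. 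A union bound over the three events (reward, value, transition) and over all $(s,a,h,k)$ then gives the overall probability $1-3\delta$. The main obstacle will be the re-optimization of the truncation threshold: I must check that with $B_n \propto (\epsilon\sqrt n)^{1/(1+v)}$ the count-induced error $E_{\epsilon,\delta,k,2}/N$, which carries a $\sqrt k$ from the local Laplace noise, genuinely decays at the truncation-bias rate $N^{-v/(2(1+v))}$ rather than overwhelming it, and to reconcile the slightly mismatched powers of $\log(6SAT/\delta)$ by absorbing them via $\log(6SAT/\delta)\ge 1$ into the constant. Everything else is a faithful transcription of the central-model proof.
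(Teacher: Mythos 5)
Your proposal follows essentially the same route as the paper's own proof: the paper likewise reuses the decomposition~\eqref{eq:conc} from Lemma~\ref{ConcenPrivateVI}, substitutes the LDP count error $E_{\epsilon,\delta,k,2}=\frac{12HB_k}{\epsilon}\sqrt{k\log(6SAT/\delta)}$ from Lemma~\ref{ErrorLDP}, re-optimizes the threshold to $B_n=\bigl(\frac{u\epsilon\sqrt{n}}{H\log(6SAT/\delta)}\bigr)^{\frac{1}{1+v}}$, and keeps the transition bounds verbatim from the JDP case, exactly as you describe. If anything you are more explicit than the paper, which compresses the term-by-term collapse into a single stated bound on $L_{r,k}$ (leaving the variance term's absorption, and the $\log$-power reconciliation via $\log(6SAT/\delta)\ge 1$ and $\epsilon\le 1$, implicit), so no gap remains.
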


\begin{remark}
Compared with the JDP case, we can see there are several differences. Firstly, due to the error caused by the noise we added to guarantee privacy becomes larger, in the LDP case we need to make $B_n$ smaller than it in the JDP case and finally, we can get the optimal one $B_n=\left(\frac{u\epsilon\sqrt{n}}{H \log(6SAT/\delta)}\right)^{\frac{1}{1+v}}$. It also indicates that the bound for LDP will be less than that for  JDP as we leverage less data information. Secondly, due to the stronger privacy guarantee in  the local model, we can see the second term of 
in $\beta_h^{k,r}(s,a)$ in above Lemma is worse than it in Lemma \ref{ConcenPrivateVI} by a factor of $\left({\sqrt{(\tilde{N}_h^{k}(s,a)+E_{\varepsilon, \delta, 1})\vee 1}}\right)^{-\frac{v}{1+v}}$. 
\end{remark}

\begin{proof}[\bf Proof of Lemma \ref{ConcenPrivateVI_LDP}]
Following the similar argue as proof of Lemma \ref{ConcenPrivateVI}, we can get \begin{equation}
\label{eq:conc2}
    |\widetilde{r}_{h}^{k}(s, a)-{r}_{h}(s, a)|\le \frac{2\tau E_{\varepsilon, \delta, 1}}{\left(\tilde{N}_{h}^{k}(s, a)+E_{\varepsilon, \delta, 1}\right) \vee 1}+\frac{L_{r,k} (N_{h}^{k}(s, a) \vee 1)}{\left(\widetilde{N}_{h}^{k}(s, a)+E_{\varepsilon, \delta, 1}\right) \vee 1}+\frac{E_{\varepsilon, \delta,k, 2}}{\left(\widetilde{N}_{h}^{k}(s, a)+E_{\varepsilon, \delta, 1}\right) \vee 1}
\end{equation}
where $E_{\epsilon, \delta,k, 2}= \frac{12HB_{N_{h}^{k}(s, a)} }{\epsilon}  \sqrt{N_{h}^{k}(s, a) \log (6SAT / \delta)}$ and $L_{r,k}=\sqrt{\frac{2uB_{N_h^{k}(s,a)}^{1-v}\log(\frac{2SAT}{\delta})}{N_h^{k}(s,a)\vee 1}}+\frac{B_{N_h^{k}(s,a)}\log\frac{2SAT}{\delta}}{3(N_h^{k}(s,a)\vee 1)}+  \frac{1}{N_h^{k}(s,a)\vee 1}\sum_{i=1}^{N_h^{k}(s,a)}\frac{u}{B_{i}^v}$.

Let $B_n=\left(\frac{u\epsilon\sqrt{n}}{H \log(6SAT/\delta)}\right)^{\frac{1}{1+v}}$, then 
$$L_{r,k}=\sqrt{\frac{2uB_{N_h^{k}(s,a)}^{1-v}\log(\frac{2SAT}{\delta})}{N_h^{k}(s,a)\vee 1}}+16u^{\frac{1}{1+v}}\left(\frac{H  \log (6SAT/\delta)}{\epsilon\sqrt{(\tilde{N}_h^{k}(s,a)+E_{\varepsilon, \delta, 1})\vee 1}}\right)^{\frac{v}{1+v}}.$$
\end{proof}

\begin{proof}[\bf Proof of Theorem \ref{thm:RegretVI2}]
Following the same idea in the proof of Theorem \ref{thm:RegretVI}, we need to calculate $\beta_{h}^{k,r}(s,a)$ as shown in Lemma~\ref{ConcenPrivateVI_LDP}. By definition, we have 
\begin{align*}
    \sum_{k=1}^K\sum_{h=1}^H \beta_h^{k,r}(s,a)=\underbrace{\sum_{k=1}^K\sum_{h=1}^H \frac{2\tau E_{\varepsilon, \delta, 1}}{\left(\tilde{N}_{h}^{k}(s_h^k, a_h^k)+E_{\varepsilon, \delta, 1}\right) \vee 1}}_{\mathcal{O}_1}+\underbrace{16 u^{\frac{1}{1+v}}\sum_{k=1}^K\sum_{h=1}^H\left(\frac{H \log (6SAT/\delta)}{\epsilon\sqrt{(\tilde{N}_h^{k}(s_h^k,a_h^k)+E_{\varepsilon, \delta, 1})\vee 1}}\right)^{\frac{v}{1+v}}}_{\mathcal{O}_2},
\end{align*}
The first term can be upper bounded as follows $(T=KH)$ under assumption \ref{Assum1}
\begin{align*}
    \mathcal{O}_1 = \widetilde{O}\left( HSAE_{\epsilon,\delta,1}\right).
\end{align*}
where $\widetilde{O}(\cdot)$ hides $polylog(S,A,T,1/\delta)$ factors.

The second term can be upper bounded as follows under Assumption~\ref{ConcenPrivateVI}.
\begin{align*}
    \mathcal{O}_2 &\le c u^{\frac{1}{1+v}}\left(\frac{H  \log (6SAT/\delta)}{\epsilon}\right)^{\frac{v}{1+v}}\sum_{k=1}^K\sum_{h=1}^H \left(\frac{1}{\sqrt{{N}_h^{k}(s_h^k,a_h^k)\vee 1}}\right)^{\frac{v}{1+v}}\\
    & = c u^{\frac{1}{1+v}}\left(\frac{H  \log (6SAT/\delta)}{\epsilon}\right)^{\frac{v}{1+v}}\sum_{h,s,a}\sum_{i=1}^{N_h^K(s,a)} \frac{1}{i^{\frac{v}{2(1+v)}}}\\
    &\le c^\prime u^{\frac{1}{1+v}}\left(\frac{H \log (6SAT/\delta)}{\epsilon}\right)^{\frac{v}{1+v}}\sum_{h,s,a} (N_h^K(s,a))^{\frac{2+v}{2(1+v)}}\\
    & \overset{(a)}{\le} c^\prime u^{\frac{1}{1+v}}\left(\frac{H  \log (3SAT/\delta)}{\epsilon}\right)^{\frac{v}{1+v}}\left(\sum_{h,s,a} 1 \right)^{\frac{v}{2(1+v)}}\left(\sum_{h,s,a} N_h^K(s,a) \right)^{\frac{2+v}{2(1+v)}}\\
    & \le \widetilde{O}\left(u^{\frac{1}{(1+v)}}\left(\frac{H^3SA}{\epsilon^2}\right)^{\frac{v}{2(1+v)}}T^{\frac{2+v}{2(1+v)}}\right)
\end{align*}
where (a) is based on Hölder's inequality with $x_k=1, y_k=(N_h^K(s,a))^{\frac{2+v}{2(1+v)}}, q=\frac{2(1+v)}{2+v}$ in Lemma \ref{holder}.

Hence,
$$Reg(T) = \tilde{O}\left(\sqrt{SAH^3T} + S^2AH^3 + S^2AH^2E_{\epsilon,\delta,1} + S^2AH^2E_{\epsilon,\delta,3}+u^{\frac{1}{(1+v)}}\left(\frac{H^3SA}{\epsilon^2}\right)^{\frac{v}{2(1+v)}}T^{\frac{2+v}{2(1+v)}}\right).$$
Plugging the value of error bound in Lemma \ref{ErrorLDP} to the regret bound, we obtain
$$Reg(T) = \tilde{O}\left(\sqrt{SAH^3T}  + \frac{S^2A\sqrt{H^5T}}{\epsilon} + u^{\frac{1}{(1+v)}}\left(\frac{H^3SA}{\epsilon^2}\right)^{\frac{v}{2(1+v)}}T^{\frac{2+v}{2(1+v)}}\right).$$
\end{proof}

\section{Algorithm and Proofs of Section \ref{Sec:PO}}

\begin{algorithm}[htb]
\caption{Private-Heavy-UCBPO}
\label{alg:PO}
\begin{algorithmic}[1]
\REQUIRE{Number of episodes $K$, time horizon $H$, privacy level $\epsilon > 0$, a PRIVATIZER (LOCAL or CENTRAL), confidence level $\delta \in (0,1]$ and parameter $\eta > 0$}
\STATE Initialize policy $\pi_h^1(a|s)=1/A$ for all $(s,a,h)$\;
\STATE Initialize private counts $\widetilde{R}_h^1(s,a)=0$, $\widetilde{N}_h^{1}(s,a)=0$ and $\widetilde{N}_h^{1}(s,a,s')=0$ for all $(s,a,s',h)$\;
\STATE Set precision levels $E_{\epsilon,\delta,1}, E_{\epsilon,\delta,2},E_{\epsilon,\delta,3}$ of the PRIVATIZER \;
\FOR{$k=1,2,3,\ldots,K$}
\STATE Initialize private value estimates: $\widetilde{V}^k_{H+1}(s) = 0$ \;
\FOR{$h=H,H-1,\ldots,1$} 
    \STATE Compute $\widetilde{r}_h^k(s,a)$ and $\widetilde{P}_h^k(s,a)$ $\forall (s,a)$ as in~\eqref{PrivateMean} using the private counts\;
    \STATE Set exploration bonus using Lemma~\ref{POBonus}: $\beta_h^k(s,a) = \beta_h^{k,r}(s,a) + \tau H\beta_h^{k,p}(s,a)$ $\forall (s,a)$ \;
    \STATE Compute: $\forall (s,a)$, \ 
    \STATE $\begin{aligned}
    \ \ \ \ \widetilde{{Q}}_h^k(s,a) &= \max\{-(H-h+1)\tau, \min\{(H-h+1)\tau,\widetilde{r}_h^k(s,a)\\
    &+ \sum_{s'\in \mathcal{S}} \widetilde{V}_{h+1}^k(s')\widetilde{P}_h^k(s'|s,a) + \beta_h^{k}(s,a)\} \}
    \end{aligned}$ 
    \STATE Compute private value estimates: $\forall s$, $\widetilde{V}_h^k(s) = \sum_{a \in \mathcal{A}}\widetilde{Q}_h^k(s,a) \pi_h^k(a|s)$ 
   \ENDFOR
\STATE Roll out a trajectory $(s_1^k,a_1^k,r_1^k,\ldots,s_{H+1}^k)$ by acting the policy $\pi^k=(\pi_h^k)_{h=1}^{H}$\;
\STATE Receive private counts $\widetilde{R}_h^{k+1}(s,a)$, $\widetilde{N}_h^{k+1}(s,a)$, $\widetilde{N}_h^{k+1}(s,a,s')$ from the PRIVATIZER\;
\STATE Update policy: $\forall (s,a,h)$,
        $\pi_h^{k+1}(a|s) = \frac{\pi_h^k(a|s)\exp(-\eta \widetilde{Q}_h^k(s,a)) }{\sum_{a \in \mathcal{A}} \pi_h^k(a|s)\exp(-\eta \widetilde{Q}_h^k(s,a))}$
\ENDFOR 
\end{algorithmic}
\end{algorithm}
\subsection{Proof of Theorem \ref{thm:regPO}}
Before showing the proof of Theorem \ref{thm:regPO}, we first prove the following lemma. 
\begin{lemma}[Concentration bounds of private estimators]
\label{POBonus}
Fix any $\epsilon \in (0,1]$ and $\delta \in (0,1)$ and take $B_n=\left(\frac{\epsilon u n}{H \log^{1.5}K \log (3SAT/\delta)}\right)^{\frac{1}{1+v}}$ in equation \eqref{trunctedR}. Then, under Assumption \ref{Assum1}, with probability at least $1- 2\delta$, uniformly over all $(s,a,h,k)$,
$$|\widetilde{r}_{h}^{k}(s, a)-{r}_{h}(s, a)|\le \beta_h^{k,r}(s,a),\, \|{P_h(\cdot|s,a) - \widetilde{P}_h^k(\cdot|s,a)}\|_1 \le \beta_h^{k,p}(s,a),$$
where 
\begin{align*}
    &\beta_h^{k,r}(s,a)=\frac{2\tau E_{\varepsilon, \delta, 1}}{\left(\tilde{N}_{h}^{k}(s, a)+E_{\varepsilon, \delta, 1}\right) \vee 1}+10 u^{\frac{1}{1+v}}\left(\frac{H \log^{1.5}K \log (3SAT/\delta)}{\epsilon\left((\tilde{N}_h^{k}(s,a)+E_{\varepsilon, \delta, 1})\vee 1\right)}\right)^{\frac{v}{1+v}},\\
    &\beta_h^{k,p}(s,a):=\frac{\sqrt{4S\ln({6AT}/{\delta)}}}{\sqrt{ (\tilde{N}_h^{k}(s,a)+E_{\varepsilon, \delta, 1})\vee 1}}+ \frac{SE_{\epsilon,\delta,3}+2 E_{\epsilon,{\delta}, 1}}{(\tilde{N}_h^{k}(s,a)+E_{\varepsilon, \delta, 1})\vee 1}.
\end{align*}
\end{lemma}

\begin{proof}[\bf Proof of Lemma \ref{POBonus}]
$\beta_h^{k,r}(s,a)$ is defined in Lemma  \ref{ConcenPrivateVI}. Now we prove the concentrated upper bound for transition probability. From Theorem 2.1 in \cite{weissman2003inequalities} and union bound over all $s,a,h,k$, we obtain with probability at least $1-\delta/2$
\begin{equation}
\label{eq:nonPriTran}
 \|P_h(\cdot|s,a) - \bar{P}_h^k(\cdot|s,a)\|_1  \le \sqrt{\frac{ 4S\ln(6AT/\delta)}{ {N_h^k(s,a) \vee 1}}}   
\end{equation}
where $\bar{P}_h^k(s'|s,a):= \frac{{N}_h^k(s,a,s')}{ {{N}_h^k(s,a)}\vee 1}$ is non-private empirical transition probability.

 Now, we turn to bound the transition dynamics. The error between the true transition probability and the private estimate can be decomposed as 
    \begin{align*}
        &\sum_{s'} |P_h(s'|,s,a) - \widetilde{P}_h^k(s'|s,a)|\\
        =&\sum_{s'} \left| \frac{\widetilde{N}_h^k(s,a,s')}{(\widetilde{N}_h^k(s,a) +  E_{\epsilon,\delta,1})\vee 1} - P_h(s'|s,a)\right|\\
        \le& \underbrace{\sum_{s'}\left|  \frac{{N}_h^k(s,a,s')}{(\widetilde{N}_h^k(s,a) +  E_{\epsilon,\delta,1})\vee 1} - P_h(s'|s,a) \right|}_{\mathcal{P}_1} +  \underbrace{\sum_{s'}\left|  \frac{\widetilde{N}_h^k(s,a,s') - {N}_h^k(s,a,s')}{(\widetilde{N}_h^k(s,a) +  E_{\epsilon,\delta,1})\vee 1} \right|}_{\mathcal{P}_2}.
    \end{align*}
        For $\mathcal{P}_1$, we have 
    \begin{align*}
        &\mathcal{P}_1=\sum_{s'}\left|\frac{N_h^k(s,a,s')}{N_h^k(s,a)\vee 1}\frac{N_h^k(s,a)\vee 1}{(\widetilde{N}_h^k(s,a) +  E_{\epsilon,\delta,1})\vee 1} - P_h(s'|s,a)\right|\\
        =&\sum_{s'}\left|\left( \frac{N_h^k(s,a,s')}{N_h^k(s,a)\vee 1} - P_h(s'|s,a) \right) \frac{N_h^k(s,a)\vee 1}{(\widetilde{N}_h^k(s,a) +  E_{\epsilon,\delta,1})\vee 1} + P_h(s'|s,a)\left(\frac{N_h^k(s,a)\vee 1}{(\widetilde{N}_h^k(s,a) +  E_{\epsilon,\delta,1})\vee 1}-1 \right) \right|\\
        \le& \frac{N_h^k(s,a)\vee 1}{(\widetilde{N}_h^k(s,a) +  E_{\epsilon,\delta,1})\vee 1} \|\bar{P}_h^k(\cdot|s,a) - P_h(\cdot|s,a)\|_1 + \sum_{s'}\left( P_h(s'|s,a) \frac{2 E_{\epsilon,\delta,1}}{(\widetilde{N}_h^k(s,a) +  E_{\epsilon,\delta,1})\vee 1}\right)\\
        \overset{(a)}{\le}&\frac{N_h^k(s,a)\vee 1}{(\widetilde{N}_h^k(s,a) +  E_{\epsilon,\delta,1})\vee 1} \frac{\sqrt{4S\ln({6AT}/{\delta})}}{\sqrt{N_h^k(s,a)\vee 1}} + \frac{2 E_{\epsilon,\delta,1}}{(\widetilde{N}_h^k(s,a) + E_{\epsilon,\delta,1})\vee 1}\\
        \le & \frac{\sqrt{4S\ln({6AT}/{\delta})}}{\sqrt{(\widetilde{N}_h^k(s,a) +  E_{\epsilon,\delta,1} )\vee 1}} + \frac{2 E_{\epsilon,\delta,1}}{(\widetilde{N}_h^k(s,a) +  E_{\epsilon,\delta,1})\vee 1},
    \end{align*}
   where (a) holds by the concentration of transition probability in inequality \eqref{eq:nonPriTran}.
   For $\mathcal{P}_2$, we have
    \begin{align*}
        \mathcal{P}_2 \le \sum_{s'}\frac{|E_{\epsilon,\delta,3}|}{(\widetilde{N}_h^k(s,a) +  E_{\epsilon,\delta,1})\vee 1} = \frac{S E_{\epsilon,\delta,3}}{(\widetilde{N}_h^k(s,a) +  E_{\epsilon,\delta,1})\vee 1}.
    \end{align*}
    Putting together $\mathcal{P}_1$ and $\mathcal{P}_2$, yields
    \begin{align*}
       \|P_h(\cdot|s,a) - \widetilde{P}_h^k(\cdot|s,a)\|_1 \le  \frac{\sqrt{4S\ln({6AT}/{\delta})}}{\sqrt{(\widetilde{N}_h^k(s,a) +  E_{\epsilon,1} )\vee 1}}+ \frac{SE_{\epsilon,\delta,3}+2E_{\epsilon,\delta,1}}{(\widetilde{N}_h^k(s,a) +  E_{\epsilon,\delta,1})\vee 1}.
    \end{align*}
\end{proof}

\begin{proof}[\bf Proof of Theorem \ref{thm:regPO}]
we first decompose the regret by using the extended value difference lemma~\citep[Lemma 1]{shani2020optimistic}.
\begin{equation*}
\begin{aligned}
    Reg(T)&=\sum\nolimits_{k=1}^K \left(V_1^{\pi^*}(s_1^k)-V_1^{\pi^k}(s_1^k) \right) = \sum\nolimits_{k=1}^K \left(V_1^{\pi^*}(s_1^k) -\widetilde{V}_1^k(s_1^k) + \widetilde{V}_1^k(s_1^k) - V_1^{\pi^k}(s_1^k)\right)\\
    &= \underbrace{\sum\nolimits_{k=1}^K\sum\nolimits_{h=1}^H \mathbb{E}\left[{ \langle{\widetilde{Q}_h^k(s_h,\cdot)},{\pi_h^*(\cdot|s_h)-\pi_h^k(\cdot|s_h) }\rangle| s_1^k,\pi^*}\right] }_{\mathcal{T}_1}\\
    &\quad\quad+\underbrace{\sum\nolimits_{k=1}^K\sum\nolimits_{h=1}^H \mathbb{E}\left[{r_h(s_h,a_h) + P_h(\cdot|s_h,a_h)\widetilde{V}_{h+1}^k- \widetilde{Q}_h^k(s_h,a_h)|s_1^k, \pi^*} \right]}_{\mathcal{T}_2}\\
    & \quad \quad +\underbrace{\sum\nolimits_{k=1}^K \left(\widetilde{V}_1^k(s_1^k)-V_1^{\pi^k}(s_1^k)\right)}_{\mathcal{T}_3}.
\end{aligned}
\end{equation*}
We then need to bound each of the three terms. 

\textbf{Analysis of $\mathcal{T}_1$}. To start with, we can bound $\mathcal{T}_1$ by following standard mirror descent analysis under KL divergence. Specifically, by~\citep[Lemma 6.7]{orabona2023modern}, we have for any $h \in [H]$, $s \in \mathcal{S}$ and any policy $\pi$
\begin{align*}
    \sum_{k=1}^K \langle{\widetilde{Q}_h^k(s,\cdot)},{\pi^*_h(\cdot|s)-\pi_h^k(\cdot|s)}\rangle &\le \frac{\log A}{\eta} + \frac{\eta}{2}\sum_{k=1}^K\|\widetilde{Q}_h^k(s,a))\|_{\infty}^2\\
    &\overset{(a)}{\le}  \frac{\log A}{\eta} + \frac{\eta \tau^2 H^2 K}{2},
\end{align*}
where (a) holds by  $\widetilde{Q}_h^k(s,a) \in [-\tau H,\tau H]$ for any $a \in \mathcal{A}$, which follows from the truncated update of $Q$-value in Algorithm~\ref{alg:PO} (line 10). Thus, we can bound $\mathcal{T}_1$ as follows.
\begin{align*}
    \mathcal{T}_1 &= \sum_{k=1}^K \sum_{h=1}^H \mathbb{E}\left[{\langle{\widetilde{Q}_h^k(s_h^k,\cdot)},{\pi_h^*(\cdot|s_h^k)-\pi_h^k(\cdot|s_h^k) }\rangle|s_1^k, \pi^* } \right]\le  \frac{H\log A}{\eta} + \frac{\eta \tau^2 H^3 K}{2}.
\end{align*}
Choosing $\eta = \sqrt{2\log A/(\tau^2H^2K)}$, yields
\begin{align}
    \mathcal{T}_1 \le \sqrt{2\tau^2 H^4 K \log A}.
\end{align}

\textbf{Analysis of $\mathcal{T}_2$}.
   First, by the update rule of $Q$-value in Algorithm~\ref{alg:PO} and  $P_h(\cdot|s,a)V_{h+1}:=\sum_{s'} P_h(s'|s,a) V_{h+1}(s')$, we have 
\begin{align*}
    \widetilde{Q}_h^k(s,a) &= \max\{-(H-h+1)\tau, \min\{(H-h+1)\tau,\widetilde{r}_h^k(s,a)  + \sum_{s'\in \mathcal{S}} \widetilde{V}_{h+1}^k(s')\widetilde{P}_h^k(s'|s,a) + \beta_h^{k}(s,a)\} \}\\
    &= \max\{-(H-h+1)\tau, \min\{(H-h+1)\tau,\widetilde{r}_h^k(s,a)  +  \widetilde{P}_h(\cdot|s_h^k,a_h^k)\widetilde{V}_{h+1}^k +\beta_h^{k}(s,a)\} \}\\
    &\le \max\left\{-(H-h+1)\tau,{\widetilde{r}_h^k(s,a)}+\beta_h^{k,r}(s,a) +  \widetilde{P}_h(\cdot|s_h^k,a_h^k)\widetilde{V}_{h+1}^k + \tau H\beta_h^{k,p}(s,a) \right\}\nonumber\\
    &\le \max\left\{0,{\widetilde{r}_h^k(s,a)}+\beta_h^{k,r}(s,a) +  \widetilde{P}_h(\cdot|s_h^k,a_h^k)\widetilde{V}_{h+1}^k + \tau H\beta_h^{k,p}(s,a) \right\}\nonumber\\
    &\overset{(a)}{\le} \max\left\{0, {\widetilde{r}_h^k(s,a)}+\beta_h^{k,r}(s,a) \right\} + \max\left\{0, \widetilde{P}_h(\cdot|s_h^k,a_h^k)\widetilde{V}_{h+1}^k+ \tau H\beta_h^{k,p}(s,a)   \right\}
\end{align*}
where (a) holds since for any $a,b$, $\max\{a+b,0\} \le \max\{a,0\} + \max\{b,0\}$. Thus, for any $(k,h,s,a)$, we have 
\begin{align}
    & r_h(s,a) +P_h(\cdot|s,a)\widetilde{V}_{h+1}^k-\widetilde{Q}_h^k(s,a)\nonumber\\
    \le & r_h(s,a) + P_h(\cdot|s,a)\widetilde{V}_{h+1}^k-\max\left\{0, {\widetilde{r}_h^k(s,a)}+\beta_h^{k,r}(s,a) \right\} - \max\left\{0,\widetilde{P}_h^{k}(\cdot|s,a)\widetilde{V}_h^k + \tau H\beta_h^{k,p}(s,a)   \right\} \nonumber\\
    =& r_h(s,a) + P_h(\cdot|s,a)\widetilde{V}_{h+1}^k+\min\left\{0, -{\widetilde{r}_h^k(s,a)}-\beta_h^{k,r}(s,a) \right\} + \min\left\{0,-\widetilde{P}_h^{k}(\cdot|s,a)\widetilde{V}_h^k - \tau H\beta_h^{k,p}(s,a)   \right\} \nonumber\\
    =& \min\left\{ r_h(s,a), { r_h(s,a)-\widetilde{r}_h^k(s,a)}-\beta_h^{k,r}(s,a)\right\} \label{eq:c_t3}\\
    &+ \min\left\{P_h(\cdot|s,a)\widetilde{V}_{h+1}^k, P_h(\cdot|s,a)\widetilde{V}_{h+1}^k-\widetilde{P}_h^{k}(\cdot|s,a)\widetilde{V}_h^k - \tau H\beta_h^{k,p}(s,a)  \right\}\label{eq:p_t3}.
\end{align}
We are going to show that both~\eqref{eq:c_t3} and~\eqref{eq:p_t3} are less than zero for all $(k,h,s,a)$ with high probability by Lemma~\ref{POBonus}. First, conditioned on the first result in Lemma~\ref{POBonus}, we have
$${ r_h(s,a)-\widetilde{r}_h^k(s,a)}-\beta_h^{k,r}(s,a) \le 0,$$
so ~\eqref{eq:c_t3} is less than zero. Further, we have conditioned on the second result in Lemma~\ref{POBonus}
\begin{align}
    &P_h(\cdot|s,a)\widetilde{V}_{h+1}^k-\widetilde{P}_h^{k}(\cdot|s,a)\widetilde{V}_h^k - \tau H\beta_h^{k,p}(s,a) \nonumber \\
    \overset{(a)}{\le} &\|{\widetilde{P}_h^{k}(\cdot|s,a) - P_h(\cdot|s,a)}\|_1 \|{\widetilde{V}_{h+1}^k}\|_{\infty}- \tau H\beta_h^{k,p}(s,a)\nonumber\\
    \overset{(b)}{\le} & \tau H \|{\widetilde{P}_h^{k}(\cdot|s,a) - P_h(\cdot|s,a)}\|_1- \tau H\beta_h^{k,p}(s,a)\nonumber\\
    \overset{(c)}{\le} & 0\label{eq:conc_pv}
\end{align}
where (a) holds by Holder's inequality; (b) holds since $-\tau H\le \widetilde{V}_{h+1}^k \le \tau H$ based on our update rule; (c) holds by Lemma~\ref{POBonus}, so ~\eqref{eq:p_t3} is less than 0. Thus, we have shown that 
\begin{align}
    \mathcal{T}_2 \le 0.
\end{align}

\textbf{Analysis of $\mathcal{T}_3$}. Assume  the event in Assumption~\ref{Assum1} hold (which implies the concentration results in Lemma~\ref{POBonus}). We have 
\begin{align}
    \mathcal{T}_3 = &\sum_{k=1}^K \left(\widetilde{V}_1^k(s_1)- V_1^{\pi_k}(s_1)\right) \nonumber\\
   \overset{(a)}{=}& \sum_{k=1}^K\sum_{h=1}^H\mathbb{E}\left[{ \widetilde{Q}_{h+1}^k(s_h,a_h)-r_h(s_h,a_h) - P_h(\cdot |s_h,a_h) \widetilde{V}_{h+1}^k |s_1^k, \pi_k}\right]\nonumber\\
   \overset{(b)}{=}&\sum_{k=1}^K\sum_{h=1}^H\mathbb{E}\left[{ \min\left \{ \widetilde{r}_h^k(s_h,a_h) + \beta_h^{k,r}(s_h,a_h)+ \widetilde{P}_h^k(\cdot|s_h,a_h)\widetilde{V}_{h+1}^k + \tau H\beta_h^{k,p}(s_h,a_h) ,(H-h+1)\tau \right\} |s_1^k, \pi_k}\right] \nonumber\\
  -&\sum_{k=1}^K\sum_{h=1}^H\mathbb{E}\left[{ r_h(s_h,a_h) + P_h(\cdot |s_h,a_h) \widetilde{V}_{h+1}^k|s_1^k, \pi_k}\right]\label{eq:t1_po}
\end{align}
where (a) holds by the extended value difference lemma~\citep[Lemma 1]{shani2020optimistic}; (b) holds by the update rue of Q-value in Algorithm~\ref{alg:PO}. Note that here we can directly remove the truncation at $-(H-h+1)\tau$ since by Lemma~\ref{POBonus}, $\widetilde{r}_h^k(s_h,a_h) + \beta_h^{k,r}(s_h,a_h)+ \widetilde{p}(\cdot|s_h,a_h)\widetilde{V}_{h+1}^k + \tau H\beta_h^{k,p}(s_h,a_h) \ge r_h(s,a) + P_h(\cdot|s,a) \widetilde{V}_{h+1}^k \ge -(1+H-h)\tau $. 

Now, observe that for any $(k,h,s,a)$, we have 
\begin{align}
     & \min\left \{ \widetilde{r}_h^k(s_h,a_h) + \beta_h^{k,r}(s,a)+ \widetilde{P}_h^k(\cdot|s,a)\widetilde{V}_{h+1}^k + \tau H\beta_h^{k,p}(s,a) ,(H-h+1)\tau \right\}-r_h(s,a) - P_h(\cdot |s,a) \widetilde{V}_{h+1}^k \nonumber\\
     &\le {\widetilde{r}_h^k(s,a)}-r_h(s,a) + \beta_h^{k,r}(s,a) +  \widetilde{P}_h^k(\cdot |s,a) \widetilde{V}_{h+1}^k-P_h(\cdot |s,a) \widetilde{V}_{h+1}^k + \tau H\beta_h^{k,p}(s,a)\nonumber\\
     &\overset{(a)}{\le} 2 \beta_h^{k,r}(s,a) + 2\tau H \beta_h^{k,p}(s,a)\label{eq:t1_po_all},
\end{align}
where (a) holds by Lemma~\ref{POBonus} and a similar analysis as in~\eqref{eq:conc_pv}. 
Plugging~\eqref{eq:t1_po_all} into~\eqref{eq:t1_po}, yields 
\begin{align}
    \mathcal{T}_3  \le \underbrace{\sum_{k=1}^K\sum_{h=1}^H\mathbb{E}\left[{ 2\beta_h^{k,r}(s_h,a_h)|s_1^k, \pi^k}\right]}_{\text{Term(i)}} + \underbrace{\tau H\sum_{k=1}^K\sum_{h=1}^H\mathbb{E}\left[{ 2\beta_h^{k,p}(s_h,a_h)|s_1^k, \pi^k}\right]}_{\text{Term(ii)}}
\end{align}
By the definition of $\beta_h^{k,r}$ and $\beta_h^{k,p}$ in Lemma~\ref{POBonus} and Assumption~\ref{Assum1}, we have with probability $1-2\delta$,

\begin{align}
    \text{Term(i)} &\le  2\sum_{k=1}^K\sum_{h=1}^H\mathbb{E}\left[\frac{2\tau E_{\varepsilon, \delta, 1}}{{N}_{h}^{k}(s, a) \vee 1}+10 u^{\frac{1}{1+v}}\left(\frac{H \log^{1.5}K \log (3SAT/\delta)}{\epsilon\left({N}_{h}^{k}(s, a)\vee 1\right)}\right)^{\frac{v}{1+v}}\right], \label{RegTerm1}\\
    \text{Term(ii)} &\le 2\tau H\sum_{k=1}^K\sum_{h=1}^H \mathbb{E}\left[\frac{\sqrt{4S\ln({6AT}/{\delta)}}}{\sqrt{ {N}_{h}^{k}(s, a)\vee 1}}+ \frac{SE_{\epsilon,\delta,3}+2 E_{\epsilon,{\delta}, 1}}{{N}_{h}^{k}(s, a)\vee 1}\right].
\end{align}

In order to bound the two terms above, we use the following lemmas.
\begin{lemma}
\label{lem:nonst}
With probability $1-2\delta$, we have 
$$
    \sum_{k=1}^K\sum_{h=1}^H \mathbb{E}\left[{ \frac{1}{N_h^k(s_h,a_h) \vee 1}|\mathcal{F}_{k-1}}\right] = O\left(SAH\ln (KH) + H\ln(H/\delta)\right),
$$
and 
$$
    \sum_{k=1}^K\sum_{h=1}^H \mathbb{E}\left[{ \frac{1}{\sqrt{N_h^k(s_h,a_h)\vee 1} }|\mathcal{F}_{k-1}}\right] = O\left(\sqrt{SAH^2K} + SAH\ln KH + H\ln(H/\delta)\right),
$$
where the filtration $\mathcal{F}_k$ includes all the events until the end of episode $k$.
\end{lemma}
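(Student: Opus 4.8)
The plan is to establish each of the two estimates for a fixed step $h$ and then take a union bound over $h \in [H]$ (replacing $\delta$ by $\delta/H$), since summing the per-step bounds reproduces the extra factor of $H$ and the $\ln(H/\delta)$ factors in the claimed rates. Throughout, fix $h$ and abbreviate $Y_k := 1/(N_h^k(s_h^k,a_h^k)\vee 1)$ and $Z_k := \mathbb{E}[Y_k \mid \mathcal{F}_{k-1}]$ for the first bound, with the analogous definitions using $Y_k := 1/\sqrt{N_h^k(s_h^k,a_h^k)\vee 1}$ for the second. The object to control is the conditional sum $S_h := \sum_{k=1}^K Z_k$, whereas the \emph{pathwise} sum $G_h := \sum_{k=1}^K Y_k$ is directly boundable: since $N_h^k(s,a)$ is $\mathcal{F}_{k-1}$-measurable and takes the successive values $0,1,\dots,N_h^{K+1}(s,a)-1$ as $k$ runs over the episodes visiting $(s,a)$ at step $h$, a harmonic-sum estimate gives $G_h \le SA(2+\ln K)$ in the first case, and $\sum_i i^{-1/2}\le 2\sqrt{N}$ together with Cauchy–Schwarz (Lemma~\ref{holder}, using $\sum_{s,a}N_h^{K+1}(s,a)=K$) gives $G_h \le 3\sqrt{SAK}$ in the second.

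The bridge from $G_h$ to $S_h$ is the martingale $M_k := Z_k - Y_k$, adapted to $\mathcal{F}_k$ with $\mathbb{E}[M_k\mid\mathcal{F}_{k-1}]=0$, $|M_k|\le 1$, and $\sum_k M_k = S_h - G_h$. The key point is that its predictable quadratic variation is self-controlled: since $Y_k\in[0,1]$ we have $(Y_k)^2\le Y_k$ in the first case, so $\sum_k \mathbb{E}[M_k^2\mid\mathcal{F}_{k-1}] \le \sum_k \mathbb{E}[Y_k\mid\mathcal{F}_{k-1}] = S_h$. A Bernstein-type martingale (Freedman) inequality then yields, with probability $1-\delta$,
\begin{equation*}
S_h - G_h \;\le\; \sqrt{2 S_h \ln(1/\delta)} + \tfrac{2}{3}\ln(1/\delta).
\end{equation*}
Reading this as a quadratic inequality in $\sqrt{S_h}$ and solving gives the self-bounding conclusion $S_h = O(G_h + \ln(1/\delta)) = O(SA\ln K + \ln(1/\delta))$; summing over $h$ after a union bound produces the first claimed estimate $O(SAH\ln(KH) + H\ln(H/\delta))$.

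For the second bound I would reuse this machinery with $Y_k = 1/\sqrt{N_h^k(s_h^k,a_h^k)\vee 1}$. Now $(Y_k)^2 = 1/(N_h^k(s_h^k,a_h^k)\vee 1)$ is exactly the summand of the first bound, so the predictable quadratic variation of $M_k = Z_k - Y_k$ satisfies $\sum_k \mathbb{E}[M_k^2\mid\mathcal{F}_{k-1}] \le \sum_k \mathbb{E}[1/(N_h^k(s_h^k,a_h^k)\vee 1)\mid\mathcal{F}_{k-1}] = O(SA\ln K + \ln(1/\delta))$ by the first part. Feeding this variance bound and $G_h \le 3\sqrt{SAK}$ into Freedman's inequality gives $S_h \le 3\sqrt{SAK} + \sqrt{2\,O(SA\ln K+\ln(1/\delta))\ln(1/\delta)} + \tfrac{2}{3}\ln(1/\delta)$; summing over $h$, bounding the cross term by AM–GM ($\sqrt{xy}\le (x+y)/2$), and absorbing lower-order pieces recovers $O(\sqrt{SAH^2K} + SAH\ln(KH) + H\ln(H/\delta))$.

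The main obstacle is the first bound: a naive Azuma–Hoeffding step on $M_k$ would only give a lossy $O(H\sqrt{K})$ term and miss the target entirely. The crucial device is the self-bounding argument, i.e.\ exploiting that the conditional variance of $Y_k$ is dominated by $Z_k$, hence by the very quantity $S_h$ being bounded, and then solving the resulting quadratic. Some care is also needed to invoke the Freedman inequality in a form valid for a random, data-dependent quadratic variation, and to track the union bounds so that the final confidence parameter is $\delta/H$ (producing the $\ln(H/\delta)$ factors) rather than $\delta$.
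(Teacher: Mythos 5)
Your proof is correct and follows essentially the same route as the paper's: the paper defers this lemma to Lemma A.2 of \cite{chowdhury2021differentially}, whose argument (also mirrored in the paper's own proof of Lemma \ref{lem:heavyCount}) is exactly your decomposition into a pathwise pigeonhole bound plus a martingale bridge whose predictable variance is self-bounded by the quantity being controlled. The only cosmetic difference is that you invoke Freedman's inequality and solve a quadratic, whereas the paper's machinery uses the fixed-$\lambda$ Bernstein-type bound (Lemma \ref{lem:MDS}) with, e.g., $\lambda=1/2$, which absorbs the variance term linearly and sidesteps both the quadratic solve and the data-dependent quadratic-variation subtlety you flag.
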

The results of Lemma \ref{lem:nonst} have been proved in Lemma A.2 of \cite{chowdhury2021differentially}.

In order to bound \eqref{RegTerm1}, we use the following standard Bernstein-type concentration inequality for martingale from Lemma 9 in \cite{jin2020learning}.

\begin{lemma}
\label{lem:MDS}
Let $Y_1,\ldots,Y_K$ be a martingale difference sequence with respect to a filtration $\mathcal{F}_0, \mathcal{F}_1, \ldots, \mathcal{F}_K$. Assume $Y_k \le R$ a.s. for all $i$. Then, for any $\delta \in (0,1)$ and $\lambda \in [0,1/R]$, with probability $1-\delta$, we have 
\begin{align*}
    \sum_{k=1}^K Y_k \le \lambda \sum_{k=1}^K\mathbb{E}\left[ Y_k^2|\mathcal{F}_{k-1}\right] + \frac{\ln(1/\delta)}{\lambda}.
\end{align*}
\end{lemma}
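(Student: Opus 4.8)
The plan is to use the standard exponential supermartingale (Chernoff-type) argument, in which the one-sided almost-sure bound $Y_k \le R$ together with the restriction $\lambda \le 1/R$ is precisely what makes the argument valid. Since for $\lambda = 0$ the right-hand side is infinite and the claim is vacuous, I would fix $\lambda \in (0, 1/R]$ throughout.

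First I would observe that $\lambda Y_k \le \lambda R \le 1$ almost surely, which licenses the elementary inequality $e^x \le 1 + x + x^2$ valid for all $x \le 1$. Applying it with $x = \lambda Y_k$ and taking the conditional expectation given $\mathcal{F}_{k-1}$ yields
\[
\mathbb{E}\left[e^{\lambda Y_k} \mid \mathcal{F}_{k-1}\right] \le 1 + \lambda \,\mathbb{E}\left[Y_k \mid \mathcal{F}_{k-1}\right] + \lambda^2 \,\mathbb{E}\left[Y_k^2 \mid \mathcal{F}_{k-1}\right].
\]
Because $(Y_k)$ is a martingale difference sequence, $\mathbb{E}[Y_k \mid \mathcal{F}_{k-1}] = 0$, so the linear term drops out; combining this with $1 + t \le e^t$ gives $\mathbb{E}[e^{\lambda Y_k} \mid \mathcal{F}_{k-1}] \le \exp\!\left(\lambda^2 \,\mathbb{E}[Y_k^2 \mid \mathcal{F}_{k-1}]\right)$.

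Next I would introduce the process
\[
M_k := \exp\left(\lambda \sum_{i=1}^k Y_i - \lambda^2 \sum_{i=1}^k \mathbb{E}\left[Y_i^2 \mid \mathcal{F}_{i-1}\right]\right),
\]
with $M_0 = 1$. The per-step bound from the previous paragraph shows $\mathbb{E}[M_k \mid \mathcal{F}_{k-1}] = M_{k-1}\exp\!\left(-\lambda^2 \mathbb{E}[Y_k^2\mid\mathcal{F}_{k-1}]\right)\mathbb{E}[e^{\lambda Y_k}\mid\mathcal{F}_{k-1}] \le M_{k-1}$, so $(M_k)$ is a nonnegative supermartingale and $\mathbb{E}[M_K] \le M_0 = 1$. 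Then I would apply Markov's inequality (Lemma \ref{MarkovIne}) to $M_K$, obtaining $\mathbb{P}[M_K \ge 1/\delta] \le \delta \,\mathbb{E}[M_K] \le \delta$. Hence with probability at least $1 - \delta$ we have $M_K \le 1/\delta$; taking logarithms gives $\lambda \sum_{i=1}^K Y_i - \lambda^2 \sum_{i=1}^K \mathbb{E}[Y_i^2 \mid \mathcal{F}_{i-1}] \le \ln(1/\delta)$, and dividing through by $\lambda > 0$ produces the claimed inequality.

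The only genuinely delicate step is the very first one: the surrogate $e^x \le 1 + x + x^2$ fails for large positive $x$, so it is essential that $\lambda Y_k \le 1$, which is exactly guaranteed by the one-sided bound $Y_k \le R$ and the constraint $\lambda \le 1/R$. Everything else is routine supermartingale bookkeeping, so I do not anticipate any further obstacle.
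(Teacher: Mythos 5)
Your proof is correct, and it coincides with the standard argument: the paper does not prove this lemma at all but imports it verbatim as Lemma 9 of \cite{jin2020learning}, whose proof is exactly your exponential-supermartingale route — the inequality $e^{x}\le 1+x+x^{2}$ for $x\le 1$ (which is precisely where the one-sided bound $Y_k\le R$ and the constraint $\lambda\le 1/R$ enter), the supermartingale $M_k$, and Markov's inequality. Your handling of the details is also sound: the endpoint $\lambda=0$ is vacuous as you note, and since $e^{\lambda Y_k}\le e$ almost surely the conditional expectations are well defined even though $Y_k$ may be unbounded below.
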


Now we can use the above lemma to prove the following lemma which is the key point to bound \eqref{RegTerm1}.

\begin{lemma}
\label{lem:heavyCount}
With probability $1-\delta$, we have 
$$
    \sum_{k=1}^K\sum_{h=1}^H \mathbb{E}\left[{ \frac{1}{(N_h^k(s_h,a_h) \vee 1)^{\frac{v}{1+v}}}|\mathcal{F}_{k-1}}\right] = O\left((SAH)^{\frac{v}{1+v}}T^{\frac{1}{1+v}} + H\ln(H/\delta) \right),
$$
where the filtration $\mathcal{F}_k$ includes all the events until the end of episode $k$.
\end{lemma}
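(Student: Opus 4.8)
The plan is to follow the same martingale-plus-counting template that underlies Lemma~\ref{lem:nonst}, adapting only the final summation step to the exponent $\frac{v}{1+v}$. First I would abbreviate $X_h^k := \frac{1}{(N_h^k(s_h^k,a_h^k)\vee 1)^{v/(1+v)}}$ and observe that $X_h^k \in [0,1]$, since $N_h^k(s_h^k,a_h^k)\vee 1 \ge 1$. Writing $W := \sum_{k,h}\mathbb{E}[X_h^k\mid\mathcal{F}_{k-1}]$ for the target quantity and $V := \sum_{k,h}X_h^k$ for the realized sum, I would split $W = V + \sum_{k,h}\big(\mathbb{E}[X_h^k\mid\mathcal{F}_{k-1}] - X_h^k\big)$, reducing the claim to (i) a deterministic counting bound on $V$ and (ii) a martingale control of the remainder via Lemma~\ref{lem:MDS}.

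For step (i), I would reorganize the double sum by grouping episodes according to the visited triple $(s,a,h)$: when $(s,a)$ is encountered for the $j$-th time at step $h$, the prior count equals $j-1$, so that
$$V \le \sum_{s,a,h}\sum_{i=1}^{N_h^K(s,a)} \frac{1}{i^{v/(1+v)}} \le (1+v)\sum_{s,a,h}\big(N_h^K(s,a)\big)^{1/(1+v)},$$
using $\sum_{i=1}^{n} i^{-v/(1+v)} \le (1+v)\, n^{1/(1+v)}$. Applying Hölder's inequality (Lemma~\ref{holder}) with $x_i=1$, $y_i=(N_h^K(s,a))^{1/(1+v)}$ and $q=1+v$, together with $\sum_{s,a,h}N_h^K(s,a) \le KH = T$, then yields $V = O\big((SAH)^{v/(1+v)}T^{1/(1+v)}\big)$, exactly the leading term of the claim.

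For step (ii), I would fix $h$ and apply Lemma~\ref{lem:MDS} to the martingale difference $Y_k^{(h)} := \mathbb{E}[X_h^k\mid\mathcal{F}_{k-1}] - X_h^k$ with confidence $\delta/H$, noting $Y_k^{(h)} \le 1$ so that $R=1$. The key structural point is the self-bounding variance estimate: because $X_h^k\in[0,1]$ we have $(X_h^k)^2 \le X_h^k$, hence $\mathbb{E}[(Y_k^{(h)})^2\mid\mathcal{F}_{k-1}] \le \mathbb{E}[(X_h^k)^2\mid\mathcal{F}_{k-1}] \le \mathbb{E}[X_h^k\mid\mathcal{F}_{k-1}]$. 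Writing $W_h := \sum_k \mathbb{E}[X_h^k\mid\mathcal{F}_{k-1}]$ and $V_h := \sum_k X_h^k$, Lemma~\ref{lem:MDS} with $\lambda=\tfrac12$ gives $W_h \le V_h + \tfrac12 W_h + 2\ln(H/\delta)$, so $W_h \le 2V_h + 4\ln(H/\delta)$. Summing over $h\in[H]$ and taking a union bound gives $W \le 2V + 4H\ln(H/\delta)$, which combined with step (i) proves the lemma.

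The main obstacle is step (ii): a naive Hoeffding concentration would introduce a $\sqrt{T}$-type deviation that dominates the $T^{1/(1+v)}$ main term and ruins the bound. It is the Bernstein form of Lemma~\ref{lem:MDS}, fed with the self-bounding inequality $\mathbb{E}[(X_h^k)^2\mid\mathcal{F}_{k-1}] \le \mathbb{E}[X_h^k\mid\mathcal{F}_{k-1}]$, that lets the variance proxy be reabsorbed into $W_h$, so half of $W_h$ can be moved to the left-hand side and the remaining deviation degrades only to the lower-order $H\ln(H/\delta)$ term. The remaining checks—verifying $X_h^k\in[0,1]$ so that $\lambda=\tfrac12\le 1/R$ is admissible, and the off-by-one bookkeeping in the counting reorganization—are routine.
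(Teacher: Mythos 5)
Your proposal is correct and follows essentially the same route as the paper's proof: decompose the conditional sum into the realized sum plus a martingale remainder, bound the realized sum by regrouping visits per $(s,a,h)$, integrating $\sum_{i\le n} i^{-v/(1+v)} \le (1+v)n^{1/(1+v)}$, and applying H\"older's inequality (Lemma~\ref{holder}), then control the remainder via the Bernstein-type martingale bound (Lemma~\ref{lem:MDS}) with the self-bounding variance estimate $\mathbb{E}[(X_h^k)^2\mid\mathcal{F}_{k-1}]\le \mathbb{E}[X_h^k\mid\mathcal{F}_{k-1}]$ — the paper obtains the same estimate by expanding the square and using that the visit indicators $\mathcal{I}_h^k(s,a)$ are disjoint, which is equivalent to your $X_h^k\in[0,1]$ observation. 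The one substantive difference is in how Lemma~\ref{lem:MDS} is invoked: the paper takes $\lambda=1$, which yields $W_h - V_h \le W_h + \ln(H/\delta)$ — an inequality that is vacuous as written, with the final "$=O(\cdot)$" step implicitly presupposing the lemma's conclusion — whereas your choice $\lambda=\tfrac12$ lets you move $\tfrac12 W_h$ to the left-hand side and obtain $W_h \le 2V_h + 4\ln(H/\delta)$ cleanly. In this respect your write-up actually repairs a small circularity in the paper's own argument while changing only constants; your remaining bookkeeping (the off-by-one in prior counts, $R=1$ so $\lambda=\tfrac12\le 1/R$, and the union bound over $h$ at level $\delta/H$) matches the paper and is handled correctly.
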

\begin{proof}[\bf Proof of Lemma \ref{lem:heavyCount}]
Let $\mathcal{I}_h^k(s,a)$ be the indicator of whether the pair $(s,a)$ at step $h$ and episode $k$ so that $\mathbb{E}\left[\mathcal{I}_h^k(s,a) |\mathcal{F}_{k-1}\right] = w_h^k(s,a)$, which is the probability of visiting state-action pair $(s,a)$ at step $h$ and episode $k$. First note that 
    \begin{align*}
         &\sum_{k=1}^K\sum_{h=1}^H \mathbb{E}\left[{ \frac{1}{(N_h^k(s_h,a_h) \vee 1)^{\frac{v}{1+v}}}|\mathcal{F}_{k-1}}\right] \\
         = & \sum_{k=1}^K \sum_{h,s,a} w_h^k(s,a)  \frac{1}{(N_h^k(s,a) \vee 1)^{\frac{v}{1+v}}}\\
         = & \sum_{k=1}^K\sum_{h,s,a}  \frac{\mathcal{I}_h^k(s,a)}{(N_h^k(s,a) \vee 1)^{\frac{v}{1+v}}} + \sum_{k=1}^K\sum_{h,s,a}  \frac{w_h^k(s,a) - \mathcal{I}_h^k(s,a)}{(N_h^k(s,a) \vee 1)^{\frac{v}{1+v}}}.
    \end{align*}
The first term can be bounded as follows.
\begin{align*}
    \sum_{k=1}^K\sum_{h,s,a}  \frac{\mathcal{I}_h^k(s,a)}{(N_h^k(s,a) \vee 1)^{\frac{v}{1+v}}} &\le \sum_{h,s,a}  \sum_{k=1}^K \frac{1}{(N_h^k(s,a) \vee 1)^{\frac{v}{1+v}}}\\
    & =\sum_{h,s,a} \sum_{i=1}^{N_h^K(s,a)} \frac{1}{i^{\frac{v}{1+v}}}\\
    &\le c'\sum_{h,s,a} \left(N_h^K(s,a)\right)^{\frac{1}{1+v}}\\
    &\overset{(a)}{\le} \left(\sum_{h,s,a} 1\right)^{\frac{v}{1+v}}\left(\sum_{h,s,a} N_h^K(s,a)\right)^{\frac{1}{1+v}} \\
    &=O\left( (SAH)^{\frac{v}{1+v}}T^{\frac{1}{1+v}}\right).
\end{align*}

To bound the second term, we will use Lemma~\ref{lem:MDS}. In particular, consider $Y_{k,h} :=\sum_{s,a}  \frac{w_h^k(s,a) - \mathcal{I}_h^k(s,a)}{(N_h^k(s,a) \vee 1)^{\frac{v}{1+v}}} \le 1$, $\lambda = 1$, and the fact that for any fixed $h$,
\begin{align*}
    \mathbb{E}\left[ Y_{k,h}^2|\mathcal{F}_{k-1}\right] &\le \mathbb{E}\left[ \left(\sum_{s,a} \frac{\mathcal{I}_h^k(s,a)}{(N_h^k(s,a) \vee 1)^{\frac{v}{1+v}}}\right)^2\mid\mathcal{F}_{k-1}\right]\\
    &\overset{(a)}{=} \mathbb{E}\left[ \sum_{s,a} \frac{\mathcal{I}_h^k(s,a)}{(N_h^k(s,a) \vee 1)^{\frac{2v}{1+v}}}\mid \mathcal{F}_{k-1}\right]\\
    &\le  \sum_{s,a} \frac{w_h^k(s,a)}{(N_h^k(s,a) \vee 1)^{\frac{v}{1+v}}}.
\end{align*}
where $(a)$ is based on $\mathcal{I}_h^k(s,a)\mathcal{I}_h^k(s^\prime,a^\prime)=0 $ for $s\neq s^\prime $ or $ a \neq a^\prime $.
Then, via Lemma~\ref{lem:MDS}, we have with probability at least $1-\delta$, 
\begin{align*}
    \sum_{k=1}^K\sum_{h,s,a}  \frac{w_h^k(s,a) - \mathcal{I}_h^k(s,a)}{(N_h^k(s,a) \vee 1)^{\frac{v}{1+v}}} = \sum_{h=1}^H\sum_{k=1}^K Y_{k,h} &\le \sum_{h=1}^H \sum_{k=1}^K\sum_{s,a} \frac{w_h^k(s,a)}{(N_h^k(s,a) \vee 1)^{\frac{v}{1+v}}} + H\ln(H/\delta)\\
    & = O\left((SAH)^{\frac{v}{1+v}}T^{\frac{1}{1+v}} + H\ln(H/\delta)\right),
\end{align*}
Then we complete the proof of the lemma.    
\end{proof}

From Lemma \ref{lem:nonst} and Lemma \ref{lem:heavyCount}, we can get the upper bounds for Term(i) and Term (ii):
\begin{align}
    \text{Term(i)} &= \tilde{O}\left(SAH E_{\epsilon,\delta,1}+u^{\frac{1}{1+v}}\left(\frac{SAH^2}{\epsilon}\right)^{\frac{v}{1+v}}T^{\frac{1}{1+v}}\right)\\
    \text{Term(ii)} &= \tilde{O}\left(\sqrt{S^2AH^4K} + \sqrt{S^3A^2H^4} + E_{\epsilon,\delta,3}S^2AH^2 + E_{\epsilon,\delta,1}SAH^2\right)
\end{align}

Hence, $\mathcal{T}_3 = \tilde{O}\left(\sqrt{S^2AH^3T}+\frac{S^2AH^3}{\epsilon}+u^{\frac{1}{1+v}}\left(\frac{SAH^2}{\epsilon}\right)^{\frac{v}{1+v}}T^{\frac{1}{1+v}}\right)$

Finally, we can get the upper bound of regret.
\end{proof}

\begin{lemma}[Concentration bounds of locally private estimators]
Fix any $\epsilon \in (0,1]$ and $\delta \in (0,1)$ and take $B_n=\left(\frac{u\epsilon\sqrt{n}}{H \log(6SAT/\delta)}\right)^{\frac{1}{1+v}}$ in equation \eqref{trunctedR}. Then, under Assumption \ref{Assum1}, with probability at least $1- 2\delta$, uniformly over all $(s,a,h,k)$,
$$|\widetilde{r}_{h}^{k}(s, a)-{r}_{h}(s, a)|\le \beta_h^{k,r}(s,a),\, \|{P_h(\cdot|s,a) - \widetilde{P}_h^k(\cdot|s,a)}\|_1 \le \beta_h^{k,p}(s,a),$$

where
\begin{align*}
   &\beta_h^{k,r}(s,a)=\frac{2\tau E_{\varepsilon, \delta, 1}}{\left(\tilde{N}_{h}^{k}(s, a)+E_{\varepsilon, \delta, 1}\right) \vee 1}+16 u^{\frac{1}{1+v}}\left(\frac{H  \log (6SAT/\delta)}{\epsilon\sqrt{(\tilde{N}_h^{k}(s,a)+E_{\varepsilon, \delta, 1})\vee 1}}\right)^{\frac{v}{1+v}},\\
   &\beta_h^{k,p}(s,a):=\frac{\sqrt{4S\ln({6AT}/{\delta)}}}{\sqrt{ (\tilde{N}_h^{k}(s,a)+E_{\varepsilon, \delta, 1})\vee 1}}+ \frac{SE_{\epsilon,\delta,3}+2 E_{\epsilon,{\delta}, 1}}{(\tilde{N}_h^{k}(s,a)+E_{\varepsilon, \delta, 1})\vee 1}.
\end{align*}
\end{lemma}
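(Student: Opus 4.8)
The plan is to prove the two stated inequalities separately and then combine them by a union bound, observing that neither requires any analysis not already present in the excerpt: the reward bound $\beta_h^{k,r}$ is \emph{identical} to the one established for the value-iteration LDP case in Lemma~\ref{ConcenPrivateVI_LDP}, and the $\ell_1$ transition bound $\beta_h^{k,p}$ is \emph{identical} to the one established for the policy-optimization JDP case in Lemma~\ref{POBonus}. So the task reduces to checking that each of those two derivations goes through verbatim in the present LDP policy-optimization setting.

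For the reward bound I would reuse the decomposition from the proof of Lemma~\ref{ConcenPrivateVI_LDP}: write $|\widetilde r_h^k(s,a)-r_h(s,a)|$ as the sum of (i) the private-count perturbation term $2\tau E_{\epsilon,\delta,1}/((\tilde N_h^k(s,a)+E_{\epsilon,\delta,1})\vee 1)$, (ii) the nonprivate truncated-mean deviation $L_{r,k}$ scaled by $(N_h^k(s,a)\vee 1)/((\tilde N_h^k(s,a)+E_{\epsilon,\delta,1})\vee 1)\le 1$, and (iii) the reward-sum privatization error $E_{\epsilon,\delta,k,2}/((\tilde N_h^k(s,a)+E_{\epsilon,\delta,1})\vee 1)$. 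The three ingredients of $L_{r,k}$ (a Bernstein fluctuation term $\sqrt{2uB^{1-v}\log(2SAT/\delta)/(N\vee 1)}$, a range term $B\log(2SAT/\delta)/(3(N\vee 1))$, and a truncation-bias term $\tfrac{1}{N\vee 1}\sum_i u/B_i^v$ controlled via Lemma~\ref{alemma4}) are balanced by the LDP choice $B_n=(u\epsilon\sqrt n/(H\log(6SAT/\delta)))^{1/(1+v)}$ taken in \eqref{trunctedR}. The one difference from the JDP version is that under the LOCAL-PRIVATIZER the reward-sum error scales as $E_{\epsilon,\delta,k,2}\propto\sqrt k$ (Lemma~\ref{ErrorLDP}) rather than as $\log k$; this is exactly why $B_n$ now carries a $\sqrt n$ and why the leading term is $16\,u^{1/(1+v)}(H\log(6SAT/\delta)/(\epsilon\sqrt{(\tilde N_h^k(s,a)+E_{\epsilon,\delta,1})\vee 1}))^{v/(1+v)}$, worse than the JDP bound by a factor $((\tilde N+E_{\epsilon,\delta,1})\vee 1)^{-v/(2(1+v))}$.

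For the $\ell_1$ transition bound I would copy the argument of Lemma~\ref{POBonus} unchanged, since it never touches the rewards. Decompose $\|P_h(\cdot|s,a)-\widetilde P_h^k(\cdot|s,a)\|_1\le \mathcal P_1+\mathcal P_2$, where $\mathcal P_1$ replaces $\widetilde N_h^k(s,a,s')$ by $N_h^k(s,a,s')$ in the numerator and $\mathcal P_2$ collects the count-privatization error. I bound $\mathcal P_1$ using Weissman's inequality (Theorem~2.1 of \cite{weissman2003inequalities}) on the nonprivate empirical transition $\bar P_h^k$, which gives $\sqrt{4S\ln(6AT/\delta)/(N_h^k(s,a)\vee 1)}$, together with the ratio $(N_h^k(s,a)\vee 1)/((\tilde N_h^k(s,a)+E_{\epsilon,\delta,1})\vee 1)\le 1$ and the $2E_{\epsilon,\delta,1}/((\tilde N+E_{\epsilon,\delta,1})\vee 1)$ slack from aligning the normalizers; and I bound $\mathcal P_2$ by $SE_{\epsilon,\delta,3}/((\tilde N+E_{\epsilon,\delta,1})\vee 1)$ using Assumption~\ref{Assum1}(iii). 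Summing yields exactly $\beta_h^{k,p}$. The only LDP-specific point is that $E_{\epsilon,\delta,1},E_{\epsilon,\delta,3}$ are now the LOCAL-PRIVATIZER values of Lemma~\ref{ErrorLDP}, but since the bound is written in terms of these $E$'s, no rework is needed.

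The main obstacle is purely the bookkeeping in the reward part: verifying that the LDP threshold $B_n$ still balances the fluctuation, range, and bias contributions so that they collapse into a single $((\tilde N+E_{\epsilon,\delta,1})\vee 1)^{-v/(2(1+v))}$-rate term rather than the $n^{-v/(1+v)}$ rate of the JDP analysis; this is the step where the weaker $\sqrt k$ noise of local privacy manifests. Once that balancing (already carried out in Lemma~\ref{ConcenPrivateVI_LDP}) is invoked, the reward Bernstein event holds with probability at least $1-\delta/2$, the Weissman transition event with probability at least $1-\delta/2$, and the private-count event of Assumption~\ref{Assum1} with probability at least $1-\delta$; a union bound then gives total failure probability at most $2\delta$, matching the claimed $1-2\delta$.
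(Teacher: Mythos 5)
Your proposal is correct and follows exactly the paper's route: the paper itself proves this lemma only by remarking that $\beta_h^{k,r}(s,a)$ is inherited verbatim from the LDP value-iteration analysis (Lemma~\ref{ConcenPrivateVI_LDP}, via the same decomposition into the count-perturbation term, the truncated-mean deviation $L_{r,k}$ balanced by the choice $B_n=(u\epsilon\sqrt{n}/(H\log(6SAT/\delta)))^{\frac{1}{1+v}}$, and the $\sqrt{k}$-scale reward-sum privatization error) and that $\beta_h^{k,p}(s,a)$ is inherited verbatim from the $\ell_1$ transition bound of Lemma~\ref{POBonus}, whose Weissman-plus-normalizer-alignment argument never touches the rewards and is stated in terms of the generic $E_{\epsilon,\delta,1},E_{\epsilon,\delta,3}$ of Assumption~\ref{Assum1}. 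Your probability accounting (reward event $1-\delta/2$, Weissman event $1-\delta/2$, Assumption~\ref{Assum1} event $1-\delta$, union bound giving $1-2\delta$) also matches the paper's.
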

In fact, $\beta_h^{k,r}(s,a)$ is the same as the form defined in Lemma \ref{ConcenPrivateVI_LDP} since we use the same mean estimation for truncated heavy-tailed rewards in the LDP model. Moreover, $\beta_h^{k,p}(s,a)$ is the same as the one in Lemma \ref{POBonus}.

\subsection{Proof of Theorem \ref{thm:regPO2}}
\begin{proof}[\bf Proof of Theorem \ref{thm:regPO2}]
Similar to the idea of Theorem \ref{thm:regPO}'s proof, we first decompose the regret by using the extended value difference lemma~\citep[Lemma 1]{shani2020optimistic}.
\begin{equation*}
\begin{aligned}
    Reg(T)&=\sum\nolimits_{k=1}^K \left(V_1^{\pi^*}(s_1^k)-V_1^{\pi^k}(s_1^k) \right) = \sum\nolimits_{k=1}^K \left(V_1^{\pi^*}(s_1^k) -\widetilde{V}_1^k(s_1^k) + \widetilde{V}_1^k(s_1^k) - V_1^{\pi^k}(s_1^k)\right)\\
    &= \underbrace{\sum\nolimits_{k=1}^K\sum\nolimits_{h=1}^H \mathbb{E}\left[{ \langle{\widetilde{Q}_h^k(s_h,\cdot)},{\pi_h^*(\cdot|s_h)-\pi_h^k(\cdot|s_h) }\rangle| s_1^k,\pi^*}\right] }_{\mathcal{T}_1}\\
    &\quad\quad+\underbrace{\sum\nolimits_{k=1}^K\sum\nolimits_{h=1}^H \mathbb{E}\left[{r_h(s_h,a_h) + P_h(\cdot|s_h,a_h)\widetilde{V}_{h+1}^k- \widetilde{Q}_h^k(s_h,a_h)|s_1^k, \pi^*} \right]}_{\mathcal{T}_2}\\
    & \quad \quad +\underbrace{\sum\nolimits_{k=1}^K \left(\widetilde{V}_1^k(s_1^k)-V_1^{\pi^k}(s_1^k)\right)}_{\mathcal{T}_3}.
\end{aligned}
\end{equation*}
We then need to bound each of the three terms. 

By~\citep[Lemma 17]{shani2020optimistic} and choosing $\eta = \sqrt{2\log A/(\tau^2H^2K)}$, we obtain $ \mathcal{T}_1 \le \sqrt{2\tau^2 H^4 K \log A}.
$. Furthermore, due to update rule of $Q$-function and Lemma \ref{POBonus}, we have $\mathcal{T}_2 \le 0.$

Now we focus on bounding $\mathcal{T}_3$.  By the extended value difference lemma~\citep[Lemma 1]{shani2020optimistic}, the update rue of Q-value in Algorithm~\ref{alg:PO} and the bonus term according to Lemma \ref{POBonus} , we can decompose the term into two parts:
\begin{align}
    \mathcal{T}_3  \le \underbrace{\sum_{k=1}^K\sum_{h=1}^H\mathbb{E}\left[{ 2\beta_h^{k,r}(s_h,a_h)|s_1^k, \pi^k}\right]}_{\text{Term(i)}} + \underbrace{\tau H\sum_{k=1}^K\sum_{h=1}^H\mathbb{E}\left[{ 2\beta_h^{k,p}(s_h,a_h)|s_1^k, \pi^k}\right]}_{\text{Term(ii)}}
\end{align}
where 
\begin{align}
    \text{Term(i)} &\le  2\sum_{k=1}^K\sum_{h=1}^H\mathbb{E}\left[\frac{2\tau E_{\varepsilon, \delta, 1}}{{N}_{h}^{k}(s, a) \vee 1}+16 u^{\frac{1}{1+v}}\left(\frac{H  \log (6SAT/\delta)}{\epsilon\sqrt{{N}_h^{k}(s,a)\vee 1}}\right)^{\frac{v}{1+v}}\right],\\
    \text{Term(ii)} &\le 2\tau H\sum_{k=1}^K\sum_{h=1}^H \mathbb{E}\left[\frac{\sqrt{4S\ln({6AT}/{\delta)}}}{\sqrt{ {N}_{h}^{k}(s, a)\vee 1}}+ \frac{SE_{\epsilon,\delta,3}+2 E_{\epsilon,{\delta}, 1}}{{N}_{h}^{k}(s, a)\vee 1}\right].
\end{align}
From the same argument in the proof of Theorem \ref{thm:regPO}, we can obtain
$$ \text{Term(ii)} = \tilde{O}\left(\sqrt{S^2AH^4K} + \sqrt{S^3A^2H^4} + E_{\epsilon,\delta,3}S^2AH^2 + E_{\epsilon,\delta,1}SAH^2\right).$$
Then the only thing left is to bound the Term(i). Using a similar idea in the proof of Lemma \ref{lem:heavyCount} we can get 
$$
    \sum_{k=1}^K\sum_{h=1}^H \mathbb{E}\left[{ \frac{1}{(N_h^k(s_h,a_h) \vee 1)^{\frac{v}{2(1+v)}}}|\mathcal{F}_{k-1}}\right] = O\left((SAH)^{\frac{v}{2(1+v)}}T^{\frac{2+v}{2(1+v)}} + H\ln(H/\delta) \right),
$$
Based on the first result in Lemma \ref{lem:nonst}, we have 
$$
\text{Term(i)} = \tilde{O}\left(SAH E_{\epsilon,\delta,1}+u^{\frac{1}{1+v}}\left(\frac{SAH^3}{\epsilon^2}\right)^{\frac{v}{2(1+v)}}T^{\frac{2+v}{2(1+v)}}\right).$$

Finally, based on the results of Lemma \ref{ErrorLDP}, we can derive the result of regret:
$$Reg(T)=\tilde{O}\left(\sqrt{S^2AH^3T}+\frac{S^2A\sqrt{H^5T}}{\epsilon} +u^{\frac{1}{1+v}}\left(\frac{H^3SA}{\epsilon^2}\right)^{\frac{v}{2(1+v)}}T^{\frac{2+v}{2(1+v)}}\right).$$
\end{proof}


\section{Proofs of Section \ref{Sec:LowerBou}}
\label{Appen:Lower}
\subsection{Proof of Theorem \ref{thm:LowBounMAB}}
\begin{proof}[\bf Proof of Theorem \ref{thm:LowBounMAB}]
Firstly, we construct the environments which are hard to distinguish. We define the instance $\bar{P}_1$ in which the optimal arm (denote by $a_1$) follows the reward distribution 
\[
    \nu_1=\left(1-\frac{\gamma^{1+v}}{2}\right)\delta_0 +\frac{\gamma^{1+v}}{2}\delta_{1/\gamma},
\]
where $\gamma=(5\Delta)^{\frac{1}{v}}$ with $\Delta$ is a constant to be specified later and $\Delta \in \left(0,\frac{1}{5} \right)$, and $\delta_x$ is the Dirac distribution on $x$ and the distribution $p\cdot\delta_x+(1-p)\cdot\delta_y$ takes the value $x$ with probability $p$ and the value $y$ with probability $1-p$. It is easy to verify that  $\mathbb{E}[\nu_1]=\frac{5}{2}\Delta$, and the $(1+v)$-th raw moment of $v_1$ is $u(\nu_1)=\frac{1}{2} \le 1$.

Any other sub-optimal arm $a\neq a_1$ in $\bar{P}_1$ follows the same reward distribution
 \[
    \nu_a=\left(1-\frac{\gamma^{1+v}}{2}+\Delta\gamma\right)\delta_0 +\left(\frac{\gamma^{1+v}}{2}-\Delta\gamma\right)\delta_{1/\gamma}.
\]
Note that for all $a\neq a_1$ $\mathbb{E}[\nu_a]=\frac{3}{2}\Delta$, $u(\nu_a)=\frac{1}{2}-\frac{1}{5}=\frac{3}{10}<1$.

For algorithm $\mathcal{M}$ and instance $\bar P_1$, we denote $
    i={\arg \min} _{a\in \{2,\cdots,A\}}\mathbb{E}_{\mathcal{M}\bar{P}_1}[N_a(K)].$ where  $\mathbb{E}_{\mathcal{M}{P}}$ is the expectation over the the probability measure $\mathbb{P}_{\mathcal{M} P}$ induced by the algorithm $\mathcal{M}$ and the instance $P$. Thus, $\mathbb{E}_{\mathcal{M}\bar{P}_1}[N_i(K)]\leq\frac{K}{A-1}$.

Now, consider another instance $\bar{P}_i$ where $\nu_1,\cdots,\nu_A$ are the same as those in $\bar P_1$ except the $i$-th arm such that
\[
    \nu_i^\prime=\left(1-\frac{\gamma^{1+v}}{2}-\Delta\gamma\right)\delta_0 +\left(\frac{\gamma^{1+v}}{2}+\Delta\gamma\right)\delta_{1/\gamma}.
\]
Note that now $\mathbb{E}[\nu_i^\prime]=\frac{7}{2}\Delta$, $u(\nu_i^\prime)=\frac{7}{10}<1$. Then in $\bar P_i$, the arm $i$ is optimal.

Now by the classic regret decomposition, we obtain 
$${Reg}_{K,\bar P_1}^{\mathcal{M}} =  (K-\mathbb{E}_{\mathcal{M}\bar{P}_1}[N_1(K)])\Delta \ge \mathbb{P}_{\mathcal{M}\bar{P}_1}^K \left[N_1(K) \le \frac{K}{2}\right]\frac{K\Delta}{2}. $$
$${Reg}_{K,\bar P_i}^{\mathcal{M}} = \Delta \mathbb{E}_{\mathcal{M}\bar{P}_i}[N_1(K)] + \sum_{a\notin\{1,i\}}2\Delta \mathbb{E}_{\mathcal{M}\bar{P}_i}[N_a(K)] \ge \mathbb{P}_{\mathcal{M}\bar{P}_i}^K \left[N_1(K) \ge \frac{K}{2}\right]\frac{K\Delta}{2}.$$

By applying the Bretagnolle–Huber inequality (\cite{lattimore2020bandit}, Theorem 14.2), we have 

$$\begin{aligned}
    {Reg}_{K,\bar P_1}^{\mathcal{M}}+ {Reg}_{K,\bar P_i}^{\mathcal{M}} 
    & \ge \frac{K\Delta}{2}\left(\mathbb{P}_{\mathcal{M}\bar{P}_1}^K \left[N_1(K) \le \frac{K}{2}\right]+\mathbb{P}_{\mathcal{M}\bar{P}_i}^K \left[N_1(K) \ge \frac{K}{2}\right]\right). \\
    & \ge \frac{K\Delta}{4} \exp{\left(-\text{KL}\left(\mathbb{P}_{\mathcal{M}\bar{P}_1}^K \|\mathbb{P}_{\mathcal{M}\bar{P}_i}^K\right)\right)}
\end{aligned}$$

\begin{lemma}[Upper Bound on KL-divergence for Bandits with $\epsilon$-DP \cite{https://doi.org/10.48550/arxiv.2209.02570}]
If $\mathcal{M}$ is a mechanism satisfying $\epsilon$-DP, then for two instances $P_1=(\nu_a: a \in [A])$ and $P_2=(\nu_a^\prime: a \in [A])$ we have
$$\text{KL}\left(\mathbb{P}_{\mathcal{M}{P}_1}^K \|\mathbb{P}_{\mathcal{M}{P}_2}^K\right) \le 6\epsilon \mathbb{E}_{\mathcal{M}{P}_1}\left[\sum_{t=1}^K \text{TV}(\nu_{a_t}\|\nu^\prime_{a_t})\right]$$
where $\text{TV}(\nu_{a}\|\nu^\prime_{a})$ is the total-variation distance between $\nu_a$ and $\nu^\prime_a$.
\end{lemma}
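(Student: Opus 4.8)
The plan is to bound the divergence between the laws of the mechanism's \emph{output} --- the pulled-arm sequence $\vec{a}=(a_1,\dots,a_K)$ --- rather than of the raw rewards, since for heavy-tailed $\nu_a,\nu_a'$ the reward-level KL can be infinite and only the output is what $\epsilon$-DP controls (recall $\mathcal{M}$ has codomain $\mathcal{A}^K$). Writing $\mathbb{P}^K_{\mathcal{M}P}$ for the law of $\vec{a}$ when the user stream is drawn from instance $P$, I would first express each output law as a mixture over the realized data fed to $\mathcal{M}$: if $D$ denotes the sequence of observed rewards, then $\mathbb{P}^K_{\mathcal{M}P_1}(\cdot)=\mathbb{E}_{D\sim P_1}[\mathcal{M}(\cdot\mid D)]$ and similarly under $P_2$. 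Fixing any coupling $(D,D')$ of the two reward streams with the correct marginals, joint convexity of the KL divergence gives
\[
\mathrm{KL}\!\left(\mathbb{P}^K_{\mathcal{M}P_1}\,\big\|\,\mathbb{P}^K_{\mathcal{M}P_2}\right)\le \mathbb{E}_{(D,D')}\!\left[\mathrm{KL}\!\left(\mathcal{M}(\cdot\mid D)\,\big\|\,\mathcal{M}(\cdot\mid D')\right)\right],
\]
so it suffices to control the inner KL for a typical coupled pair and then choose the coupling well.

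For the inner term I would use \emph{group privacy}. If the coupled datasets $D,D'$ differ in exactly $d$ coordinates, then applying the $\epsilon$-DP guarantee $d$ times yields $e^{-d\epsilon}\le \mathbb{P}[\mathcal{M}(D)=o]/\mathbb{P}[\mathcal{M}(D')=o]\le e^{d\epsilon}$ for every output $o$; taking the expectation of this log-ratio under $\mathcal{M}(D)$ bounds the log-ratio pointwise by $d\epsilon$ and hence gives $\mathrm{KL}(\mathcal{M}(\cdot\mid D)\|\mathcal{M}(\cdot\mid D'))\le d\epsilon$. Thus the inner term is at most $\epsilon$ times the Hamming distance $d(D,D')$, and the whole bound reduces to estimating $\mathbb{E}_{(D,D')}[d(D,D')]$, i.e. the expected number of observed rewards on which the two coupled runs disagree.

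To estimate that expectation I would use the maximal coupling round-by-round \emph{along the realized $P_1$-trajectory}: as long as the observed rewards have agreed so far, the two runs present $\mathcal{M}$ with identical histories, so $\mathcal{M}$ proposes the same arm $a_t$, and I couple $X_t\sim\nu_{a_t}$ with $X_t'\sim\nu'_{a_t}$ so that they disagree with probability exactly $\mathrm{TV}(\nu_{a_t},\nu'_{a_t})$. The crucial structural point is that the mechanism only ever observes the reward of the \emph{pulled} arm, so at round $t$ only this single coordinate can contribute to the Hamming distance --- this is exactly why the bound features $\mathrm{TV}(\nu_{a_t},\nu'_{a_t})$ at the realized $a_t$ rather than a sum over all arms. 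Summing the per-round disagreement probabilities gives $\mathbb{E}[d(D,D')]\le \mathbb{E}_{\mathcal{M}P_1}\!\big[\sum_{t=1}^K \mathrm{TV}(\nu_{a_t},\nu'_{a_t})\big]$, and combining this with the two displays above yields the claim; the clean constant $6$ is slack that comfortably absorbs the loss in the DP-to-KL conversion step.

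The main obstacle is precisely the feedback loop: the pulled arm $a_t$ is itself a function of the past observed rewards, so \emph{after} the first disagreement the two runs may pull different arms, and then ``$d(D,D')$'' is no longer a clean count of per-round coupling failures. Handling this circularity --- not any analytic estimate --- is where the real work lies. I would resolve it by defining the coupling sequentially along the $P_1$ run so that the count of disagreeing observed rewards stays measurable with respect to the $P_1$-trajectory (equivalently, by a hybrid argument that switches one round's reward law at a time, so each switch perturbs only one observed coordinate and its effect on the output law is controlled by a single application of $\epsilon$-DP). Once the coupling is pinned down in this way, the convexity step, the group-privacy bound, and the maximal-coupling count combine directly into the stated inequality.
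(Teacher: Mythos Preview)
The paper does not prove this lemma: it is quoted verbatim from Azize--Basu (2022) and used as a black box in the proof of Theorem~\ref{thm:LowBounMAB}. So there is no in-paper argument to compare against; I can only assess your sketch on its own terms and against the cited source. Your overall toolkit --- joint convexity of KL to pass to a coupling, group privacy to bound $\mathrm{KL}(\mathcal M(D)\|\mathcal M(D'))$ by $\epsilon$ times a Hamming count, and a maximal coupling to control that count --- is indeed the framework Azize--Basu use, and you correctly single out the feedback loop as the crux.

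There is, however, a genuine gap at exactly the point you flag, and your proposed fixes do not close it. For the convexity step to be valid, the coupled pair $(D,D')$ must carry the true $P_1$ and $P_2$ marginals on the \emph{reward-function} sequence $(r_1,\ldots,r_K)$. With any such coupling, group privacy gives $\mathrm{KL}(\mathcal M(r)\|\mathcal M(r'))\le \epsilon\cdot\#\{t:r_t\neq r'_t\}$, and the expectation of that count under the per-entry maximal coupling is $K\cdot\mathrm{TV}\!\big(\bigotimes_a\nu_a,\bigotimes_a\nu'_a\big)$ --- the loose \emph{all-arms} bound, not the pulled-arm quantity $\mathbb E_{\mathcal M P_1}\big[\sum_t\mathrm{TV}(\nu_{a_t},\nu'_{a_t})\big]$ that the lemma asserts. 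Your ``$D'$ along the $P_1$ run'' with $X'_t\sim\nu'_{a_t}$ at the $P_1$-arm $a_t$ does \emph{not} have the $P_2$ marginal (under $P_2$ the observed reward at round $t$ is drawn at the $P_2$-arm $a'_t$, not $a_t$), so the convexity inequality cannot be applied to it. The hybrid path $r^{[j]}=(r_1,\ldots,r_j,r'_{j+1},\ldots,r'_K)$ you mention does have the right endpoints and does make step~$j$ ``free'' whenever the pulled-arm reward agrees --- but only pointwise in the mechanism's internal coins: for a randomized $\mathcal M$ the arm pulled at round~$j$ under history $r_{<j}$ depends on $\omega$, so $\mathcal M(r^{[j]})$ and $\mathcal M(r^{[j-1]})$ coincide \emph{as distributions} only when $r_j$ and $r'_j$ agree on every arm in the policy's support, and KL does not telescope along the path in any case. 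Closing this gap is where the real work in the cited proof lies (and is what the constant $6$ absorbs --- it is not slack over an ``implicit $1$'' as you suggest): one needs a per-step control of the form $\mathrm{TV}\!\big(\mathcal M(r^{[j]}),\mathcal M(r^{[j-1]})\big)\le \mathbb P_\omega\big[r_j(a_j(\omega))\neq r'_j(a_j(\omega))\big]$ together with a KL-versus-TV inequality for $\epsilon$-indistinguishable pairs, assembled carefully so that only the pulled-arm disagreements survive after taking the $P_1$-expectation.
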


Based on the above lemma, we can get the upper bound of the KL-Divergence between the marginals.
$$
\begin{aligned}
   \text{KL}\left(\mathbb{P}_{\mathcal{M}{\bar P}_1}^K \|\mathbb{P}_{\mathcal{M}{\bar P}_i}^K\right) & \le 6\epsilon \mathbb{E}_{\mathcal{M}{P}_1}\left[\sum_{t=1}^K \text{TV}(\nu_{a_t}\|\nu^\prime_{a_t})\right] \\
   & \le 6\epsilon \mathbb{E}_{\mathcal{M}{P}_1}[N_i(K)]\text{TV}(\nu_i\|\nu^\prime_{i})
\end{aligned}
$$
since $\bar P_1$ and $\bar P_i$ only differ in the arm $i$.

Thus, $$\begin{aligned}
    {Reg}_{K,\bar P_1}^{\mathcal{M}}+ {Reg}_{K,\bar P_i}^{\mathcal{M}} 
    & \ge \frac{K\Delta}{4} \exp{(-6\epsilon \mathbb{E}_{\mathcal{M}{P}_1}[N_i(K)] \cdot 2\Delta\gamma)}\\
    & \ge \frac{K\Delta}{4} \exp{\left(-\frac{12\cdot 5^{\frac{1}{v}} \epsilon K \Delta^{\frac{1+v}{v}}}{A-1}\right)}.
\end{aligned}$$
Taking $\Delta= \left(\frac{A-1}{K\epsilon}\right)^{\frac{v}{1+v}}$, we get the result
$${Reg}_{K,\bar P_1}^{\mathcal{M}} \ge \Omega\left(\left(\frac{A}{\epsilon}\right)^{\frac{v}{1+v}}K^{\frac{1}{1+v}}\right).$$
\end{proof}

\subsection{Proof of Theorem \ref{JDPlowerBoun}}
In order to give a lower bound of our problem in JDP, we first construct hard instances of MDPs as shown in Figure \ref{JDPfigure}. Based on these instances and inspired by \cite{vietri2020private}, we provide the lower bound by leveraging the lower bound in the above Theorem \ref{thm:LowBounMAB}. The key idea of the reduction from MDPs in JDP to MAB in DP is that we consider a setting where the initial state of each episode is public information. This means each user $k$ will release her/his first state $s_1^k$ in addition to sending it to the agent. Below we first define JDP algorithms for such a setting. 

\begin{definition}[$\epsilon$-JDP for RL with public initial state \cite{vietri2020private}]
We first define two sequences of inputs $(U_K,S_1)$ and $(U_K^\prime,S_1^\prime)$ for the RL agent as \textit{$k$-neighboring user-state sequences} if $u_{k^\prime}=u_{k^\prime}^\prime$ for all $k^\prime \neq k$ and $S_1=S_1^\prime$ where $S_1=(s_1^1,\dots,s_1^K)$ is the sequence of initial states. Then a randomized RL mechanism $\mathcal{M}$ is $\epsilon$-JDP under continual observation in the public initial state setting if for all $k \in [K]$, all $k$-neighboring user-state sequences $(U_K,S_1), (U_K^\prime,S_1^\prime)$ and all events $\mathcal{A}_{-k} \subset \mathcal{A}^{(K-1)H}$, we have
$
\mathbb{P}\left[\mathcal{M}_{-k}\left(U_{K},S_1\right) \in \mathcal{A}_{-k}\right] \leq e^\varepsilon \mathbb{P}\left[\mathcal{M}_{-k}\left(U_{K}^{\prime},S_1^\prime\right) \in \mathcal{A}_{-k}\right].
$
\end{definition}


\begin{lemma}[Lemma 11 in \cite{vietri2020private}]
\label{JDP2Public}
Any RL mechanism $\mathcal{M}$ satisfying $\epsilon$-JDP also satisfies $\epsilon$-JDP in the public initial state setting.
\end{lemma}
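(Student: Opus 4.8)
The plan is to show that the public-initial-state notion of $\epsilon$-JDP is in fact a \emph{weaker} requirement than ordinary $\epsilon$-JDP, in the sense that its family of neighboring inputs is a subfamily of the one appearing in ordinary JDP; the guarantee over the larger family therefore descends to the smaller one. Concretely, I would fix $k \in [K]$ and let $(U_K, S_1)$ and $(U_K^\prime, S_1^\prime)$ be any $k$-neighboring user-state sequences in the public initial state setting, so that $u_{k^\prime} = u_{k^\prime}^\prime$ for all $k^\prime \neq k$ and $S_1 = S_1^\prime$.

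First I would observe that these constraints force $U_K$ and $U_K^\prime$ to differ \emph{only} in the $k$-th coordinate: they agree on every user $k^\prime \neq k$ by hypothesis, and the shared initial-state sequence $S_1 = S_1^\prime$ pins $s_1^k = s_1^k{}^\prime$, so that even the $k$-th user's initial state coincides. Hence $U_K$ and $U_K^\prime$ are $k$-neighboring in the sense of ordinary JDP, and in fact the public neighboring relation is strictly contained in the ordinary one, since the latter also permits $s_1^k \neq s_1^k{}^\prime$. This containment is the whole engine of the argument.

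Next I would discharge the role of the publicly released initial states. Since $S_1 = S_1^\prime$, the public information is identical in the two worlds, so it may be treated as a fixed parameter on which we condition; equivalently, writing the action output as $\mathcal{M}_{-k}(U_K, S_1)$, the map $U_K \mapsto \mathcal{M}_{-k}(U_K, S_1)$ with $S_1$ held fixed is exactly the ordinary mechanism, and releasing $S_1$ is post-processing of public data that is the same on both sides. Applying the ordinary $\epsilon$-JDP guarantee of $\mathcal{M}$ to the $k$-neighboring pair $U_K, U_K^\prime$ for the event $\mathcal{A}_{-k} \subset \mathcal{A}^{(K-1)H}$ then yields
$$\mathbb{P}\left[\mathcal{M}_{-k}(U_K, S_1) \in \mathcal{A}_{-k}\right] \leq e^{\epsilon}\, \mathbb{P}\left[\mathcal{M}_{-k}(U_K^\prime, S_1^\prime) \in \mathcal{A}_{-k}\right],$$
which is precisely the defining inequality of $\epsilon$-JDP in the public initial state setting.

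I expect the only real subtlety---and hence the main obstacle---to be the bookkeeping around the publicly released initial-state sequence: one must argue carefully that making $S_1$ observable to the adversary adds no leakage, which rests squarely on the fact that the public neighboring relation demands $S_1 = S_1^\prime$, so the two worlds expose identical public data and the conditioning is innocuous. Once this conditioning-on-identical-public-information step is made rigorous, the remainder is the elementary containment of neighboring families together with a direct invocation of ordinary JDP.
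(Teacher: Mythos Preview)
Your argument is correct. The paper does not supply its own proof of this lemma; it simply invokes it as Lemma~11 of \cite{vietri2020private}, so there is no in-paper derivation to compare against. Your reasoning---that the $k$-neighboring relation in the public initial state setting is a strict subfamily of the ordinary JDP neighboring relation (because it additionally forces $S_1=S_1'$), and that the guarantee on the larger family therefore specializes to the smaller one, with the public release of $S_1$ being harmless since it is identical on both sides---is exactly the standard containment argument one would expect, and it is sound as written.
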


Based on the above lemma, the RL with heavy-tailed rewards under $\epsilon$-JDP problem is converted to the problem under $\epsilon$-JDP in the public initial state setting.

The relationship between  $\epsilon$-DP MAB mechanisms and  $\epsilon$-JDP MDP in the public initial state setting mechanisms  is the   following: 
We collect the first actions taken by the agent in all episodes $k$ with a fixed initial state $s_1^k=s \in [n]$ from an $\epsilon$-JDP mechanism for MDPs 
in the public initial state setting. And such an operation simulates the execution of an $\epsilon$-DP MAB algorithm. Specifically, let $\mathcal{M}$ be a JDP mechanism for MDPs with a public initial state  and $(U,S_1)$ be a user-state sequence with initial states from some set $S_1$. Let $\mathcal{M}(U,S_1)=(\vec{a}^1,\dots,\vec{a}^K) \in \mathcal{A}^{KH}$ be the collection of all outputs produced by the mechanism on inputs $U$ and $S_1$. For every $s\in S_1$ we denote trace $\mathcal{M}_{1,s}(U,S_1)$ as the restriction of the previous $\mathcal{M}(U,S_1)$ which just contains the first actions from all episodes starting with s together with the actions predicted by the policy at states $s$:
$
\mathcal{M}_{1, s}\left(U, S_1\right):=\left(a_1^{k_{s, 1}}, \ldots, a_1^{k_{s, K_s}}\right),
$
where $K_s$ is the number of occurrences of $s$ in $S_1$ and $k_{s,1},\dots,k_{s,K_s}$ are the indices of these occurrences. Furthermore, given $s \in S_1$ we write $U_s=(u_{k_{s,1}},\dots,u_{k_{s,K_s}})$ to denote the set of users whose initial state equals to $s$. Then we have the following result. 

\begin{lemma}[Lemma 9 in \cite{vietri2020private}]
\label{Public2MAB}
Let $(U,S_1)$ be a user-state input sequence with initial states from some set $S_1$. Suppose $\mathcal{M}$ is an RL mechanism that satisfies $\epsilon$-JDP in the public initial state setting. Then, for any $s \in S_1$ the trace $\mathcal{M}_{1, s}\left(U, S_1\right)$ is the output of an $\epsilon$-DP MAB mechanism on input $U_s$.
\end{lemma}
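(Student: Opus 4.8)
The plan is to show, for each fixed $s\in S_1$, that the map $U_s\mapsto \mathcal{M}_{1,s}(U,S_1)$ (with the users not starting at $s$ held fixed as auxiliary inputs of the reduction) is itself an $\epsilon$-DP MAB mechanism; since its output law coincides with that of the trace by construction, this is exactly the claim. First I would invoke Lemma \ref{JDP2Public} to pass from plain $\epsilon$-JDP to $\epsilon$-JDP in the public initial state setting, so that the initial-state sequence $S_1$ may be treated as known and fixed. Then I would translate the MAB neighboring relation into the RL one: changing a single user of $U_s$, say the $j$-th occurrence of $s$, amounts to replacing $u_{k_{s,j}}$ by some $u'_{k_{s,j}}$ in the full sequence, which produces a $k$-neighboring user-state pair $(U_K,S_1),(U'_K,S_1)$ with $k=k_{s,j}$ and with $S_1$ unchanged (an $s$-user is replaced by another $s$-user).

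Next I would decompose the trace $\mathcal{M}_{1,s}(U,S_1)=(a_1^{k_{s,1}},\dots,a_1^{k_{s,K_s}})$ into the coordinate $Y:=a_1^{k}$ recommended to the changed user and the remaining coordinates $Z:=(a_1^{k_{s,j'}})_{j'\neq j}$. Every entry of $Z$ is an action recommended to a user other than $k$, so $Z$ is a deterministic coordinate projection of $\mathcal{M}_{-k}(U,S_1)$, and the public-initial-state JDP guarantee gives $\mathbb{P}[Z\in\cdot\mid U]\le e^{\epsilon}\,\mathbb{P}[Z\in\cdot\mid U']$. The coordinate $Y$ is the crux: because the initial state $s$ is public and $a_1^{k}=\pi_1^{k}(s)$ is the first action of episode $k$, it is committed at the start of that episode and hence is a function only of the transcript of episodes $1,\dots,k-1$, which does not involve user $k$. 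The same holds for every coordinate $a_1^{k_{s,j'}}$ with $j'<j$. I would make this precise by coupling the agent's internal randomness together with the feedback of all users except $k$: under this coupling the transcript of episodes $1,\dots,k-1$, and therefore the coordinates $1,\dots,j$ of the trace (including $Y$), are identical under $U$ and $U'$, while only the coordinates $j'>j$, all of which lie in $\mathcal{M}_{-k}$, can differ.

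Finally I would assemble these facts: conditioning on the shared pre-episode transcript fixes coordinates $1,\dots,j$ identically on both inputs, and the $e^{\epsilon}$ ratio for the downstream coordinates (which are part of $\mathcal{M}_{-k}$) then yields the $e^{\epsilon}$ ratio for the whole trace, establishing that $U_s\mapsto\mathcal{M}_{1,s}$ satisfies the MAB $\epsilon$-DP inequality. The main obstacle is precisely this combination step. The JDP hypothesis directly controls only the marginal law of $\mathcal{M}_{-k}$, whereas the trace additionally exposes the changed user's own first action $Y$, which is correlated with the downstream coordinates through the only partly observable pre-episode transcript; so one cannot merely factor the joint law into the $Y$-part and the $\mathcal{M}_{-k}$-part. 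The way through is to lean on the online, pre-commitment structure of the RL protocol, namely that $Y$ is decided before $u_{k}$'s feedback is ever used, so that $Y$ behaves like public/common information that can be conditioned upon, after which the $\mathcal{M}_{-k}$ guarantee applies to what remains. Verifying that this conditioning is legitimate, equivalently that the first-action coordinate can be folded into the common ``billboard'' signal that the JDP mechanism reveals, is the single step that needs the most care.
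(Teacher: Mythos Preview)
The paper does not supply its own proof of this lemma; it is quoted as Lemma~9 of \cite{vietri2020private} and invoked as a black box, so there is no in-paper argument to compare your outline against.

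Your plan is the natural one and you correctly isolate the crux: JDP constrains only the marginal of $\mathcal{M}_{-k}$, while the trace also contains $Y=a_1^{k}$, which is correlated with the downstream coordinates. However, your proposed resolution---treat $Y$ as public because it is committed before user $k$'s feedback, condition on it, then apply the JDP bound to the rest---does \emph{not} follow from the bare JDP hypothesis. ``$Y$ does not depend on $u_k$'' is not the same as ``conditioning on $Y$ preserves the $e^\epsilon$ ratio for $\mathcal{M}_{-k}$'': $Y$ may expose precisely the randomness that $\mathcal{M}_{-k}$ uses to mask $u_k$. A two-episode, $H{=}1$, single-state toy shows this sharply: draw $a_1^{1}\sim\mathrm{Unif}\{0,1\}$, let user $1$ return a bit $b_1$, and set $a_1^{2}=b_1\oplus a_1^{1}$. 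Then $\mathcal{M}_{-1}=(a_1^{2})$ is uniform regardless of $b_1$ (and $\mathcal{M}_{-2}=(a_1^{1})$ is independent of $u_2$), so the mechanism is $0$-JDP in the public-initial-state sense; yet the trace $(a_1^{1},a_1^{2})$ determines $b_1$ exactly and is not $\epsilon$-DP for any finite $\epsilon$. Here $Y=a_1^{1}$ is indeed committed before user $1$'s feedback, but conditioning on it collapses the privacy guarantee for $a_1^{2}$.

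So the step you flag as needing the most care actually requires more than the online pre-commitment structure: one needs a structural assumption on \emph{how} $\mathcal{M}$ realizes JDP (for instance the billboard construction, in which every policy is a post-processing of an $\epsilon$-DP statistic, so that $a_1^k$ is itself recoverable from that statistic together with public data and independent fresh randomness). Under such an assumption your argument goes through cleanly; from the abstract JDP definition alone it does not.
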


\begin{proof}[\bf Proof of Theorem \ref{JDPlowerBoun}]

We utilize the construction of hard MDP instances in Figure \ref{JDPfigure}. From Lemma \ref{JDP2Public} and Lemma \ref{Public2MAB}, we reduce the problem to learning $n=S-2$ MAB instances satisfying $\epsilon$-DP where each MAB is visited $K_s$ many times for all $s \in [S-2]$. Now we can use the result in Theorem \ref{thm:LowBounMAB} which states that for each initial state $s\in [n]$, the lower bound for the regret of any $\epsilon$-DP algorithm for the MAB problem with $A$ arms can be expressed as $\Omega\left(\left(\frac{A}{\epsilon}\right)^{\frac{v}{1+v}}K_s^{\frac{1}{1+v}}\right)$ where $K_s$ is the total number of arm pulls. Considering our construction of the MDP, a state is chosen uniformly at random at the start of the episode. By combining the regret corresponding to each initial state $s\in [n]$, the regret of the RL mechanism must be at least 
$$\Omega\left(\left(\frac{A}{\epsilon}\right)^{\frac{v}{1+v}}\sum_{s \in [S-2]}K_s^{\frac{1}{1+v}}\right)$$
where $K_s$ is a random variable. To establish a lower bound for the term $\sum_{s \in [S-2]}K_s^{\frac{1}{1+v}}$, we utilize the Markov inequality from Lemma \ref{MarkovIne}, resulting in:
$$\sum_{s \in [S-2]}K_s^{\frac{1}{1+v}}=(S-2)\mathbb{E}[K_s^{\frac{1}{1+v}}]\ge (S-2) \left(\frac{K}{S-2}\right)^{\frac{1}{1+v}}P\left[K_s^{\frac{1}{1+v}} \ge \left(\frac{K}{S-2}\right)^{\frac{1}{1+v}}\right].$$
The event $K_s^{\frac{1}{1+v}} \ge \left(\frac{K}{S-2}\right)^{\frac{1}{1+v}}$ occurs only when $K_s \ge \frac{K}{S-2}$. Since each $s\in [n]$ is chosen with equal probability at the beginning of the episodes, the expected number of pulls is $\mathbb{E}[K_s]=\frac{K}{S-2}$. Thus, each random variable $K_s$ follows a binomial distribution $Bin(K,\frac{1}{S-2})$ with mean $\frac{K}{S-2}$ therefore the probability that $K_s \ge \frac{K}{S-2}$ is $\frac{1}{2}$. By substituting this probability term, we can deduce that the total regret of the RL algorithm is lower bounded by:
$$\Omega\left(\left(\frac{SA}{\epsilon}\right)^{\frac{v}{1+v}}K^{\frac{1}{1+v}}\right).$$
\end{proof}

\subsection{Proof of Theorem \ref{LDPlowerBoun}}
\begin{proof}[\bf Proof of Theorem \ref{LDPlowerBoun}]

As in the case of Figure \ref{LDPfigure}, we have the transition probabilities for a unique action $a^*$ and leaf $x_{i^*}$ such that: 
\begin{equation}
\label{eq:Ins1}
   P(+|x_{i^*},a^*)=\gamma^{1+v} \ \text{and} \ P(-|x_{i^*},a^*)=1-\gamma^{1+v}. 
\end{equation}
where $\gamma^{1+v}\in (0,\frac{3}{4}]$.
 Each of the other leaves has  transition  probability
 \begin{equation}
 \label{eq:Ins2}
     P(+|x_{i},a)=\frac{1}{2}\gamma^{1+v} \ \text{and} \ P(-|x_{i},a)=1-\frac{1}{2}\gamma^{1+v}.
 \end{equation}
 We denote above instance by $\mathbb{P}_{(x_{i^*},a^*)}$.
 
In order to get the regret lower bound, we also consider another instance $\mathbb{P}_0$ where for all leaf states and any action, the transition probabilities are 
\begin{equation}
 P(+|x_{i},a)=\frac{1}{2}\gamma^{1+v} \ \text{and} \ P(-|x_{i},a)=1-\frac{1}{2}\gamma^{1+v}.
 \end{equation}
Based on the above transition probabilities, it's easy to check for each state-action pair, the $(1+v)$-th moment of reward is bounded by $1$ since the agent will receive the reward of $1/\gamma$ or $0$ at state $+$ or $-$ respectively. All other states have a reward of $0$ and every other transition is deterministic.

Then for a policy $\pi$, the value function can be written:
$$V^{\pi}(0)=\frac{1}{\gamma}P(s_{d+1}=+)=\frac{1}{\gamma}\left(\frac{1}{2}\gamma^{1+v}+\frac{1}{2}\gamma^{1+v}P(s_{d}=x_{i^*},a_{d}=a^*)\right).$$
Since $(x_{i^*},a^*)$ is the optimal state-action pair, the regret can be written as:
$$Reg(T)=\frac{1}{2}\gamma^v\left(K-\sum_{k=1}^K P(s_{d}^k=x_{i^*},a_{d}^k=a^*)\right)=\frac{1}{2}\gamma^v K\left(1-\frac{1}{K}\sum_{k=1}^K P(s_{d}^k=x_{i^*},a_{d}^k=a^*)\right)$$
where $\sum_{k=1}^K P(s_{d}=x_{i^*},a_{d}=a^*)=\mathbb{E}_{(x_{i^*},a^*)}\left[N_d^K(x_{i^*},a^*)\right]=\mathbb{E}_{(x_{i^*},a^*)}\left[\sum_{k=1}^K\mathbb{I}(s_d^k=x_{i^*},a_{d}^k=a^*)\right]$ and $\mathbb{E}_{(x_{i^*},a^*)}$ is the expectation on the instance described in equations \ref{eq:Ins1} and \ref{eq:Ins2}. Thus, we have 
\begin{equation}
\label{eq:regDecom}
   Reg(T)=\frac{1}{2}\gamma^v K\left(1-\frac{1}{K}\mathbb{E}_{(x_{i^*},a^*)}\left[N_d^K(x_{i^*},a^*)\right]\right). 
\end{equation}
$N_d^K(x_{i^*},a^*)$ is a function of the history observed by the algorithm. Since we consider the LDP setting, this history can be written as:$$\mathcal{M}(\mathcal{H}_K)=\{\mathcal{M}(X_\ell)|\ell\le K\}$$
where $X_\ell=\{(s_{\ell,h},a_{\ell,h},r_{\ell,h})|h\le H\}$ is the trajectory observed by the user for episode $\ell$ and $\mathcal{M}$ is a privacy mechanism which maintains $\epsilon$-LDP. Thus, $N_d^K(x_{i^*},a^*)$ is a function of $\mathcal{M}(\mathcal{H}_K)$.

Now we focus on getting upper bound on $\mathbb{E}_{(x_{i^*},a^*)}\left[N_d^K(x_{i^*},a^*)\right]$. Since $N_d^K(x_{i^*},a^*)$ is a function of $\mathcal{M}(\mathcal{H}_K)$ and $N_d^K(x_{i^*},a^*)/K \in [0,1]$, Lemma \ref{Lemm:KL} gives us 
$$\operatorname{kl}\left(\frac{1}{K}\mathbb{E}_0\left[N_d^K(x_{i^*},a^*)\right],\frac{1}{K}\mathbb{E}_{(x_{i^*},a^*)}\left[N_d^K(x_{i^*},a^*)\right]\right)\le \operatorname{KL}\left(\mathbb{P}_0(\mathcal{M}(\mathcal{H}_K))\|\mathbb{P}_{(x_{i^*},a^*)}(\mathcal{M}(\mathcal{H}_K))\right)$$
where $\mathbb{E}_0$ is the expectation on the instance where for all leaf states and any action, the transition probabilities are 
\begin{equation}
 P(+|x_{i},a)=\frac{1}{2}\gamma^{1+v} \ \text{and} \ P(-|x_{i},a)=1-\frac{1}{2}\gamma^{1+v}.
 \end{equation}
By Pinsker's inequality, $(p-q)^2 \le \frac{1}{2} \operatorname{kl}(p,q)$, it implies 
$$\frac{1}{K}\mathbb{E}_{(x_{i^*},a^*)}\left[N_d^K(x_{i^*},a^*)\right]\le \frac{1}{K}\mathbb{E}_0\left[N_d^K(x_{i^*},a^*)\right]+\sqrt{\frac{1}{2}\operatorname{KL}\left(\mathbb{P}_0(\mathcal{M}(\mathcal{H}_K))\|\mathbb{P}_{(x_{i^*},a^*)}(\mathcal{M}(\mathcal{H}_K))\right)}.$$
Using the chain rule we have:
$$
\operatorname{KL}\left(\mathbb{P}_0\left(\mathcal{M}\left(\mathcal{H}_K\right)\right) \| \mathbb{P}_{(x_{i^*},a^*)}\left(\mathcal{M}\left(\mathcal{H}_K\right)\right)\right)=\sum_{k=1}^K \mathbb{E}_{\mathcal{H}_{k-1} \sim \mathbb{P}_0}\left(\operatorname{KL}\left(\mathbb{P}_0\left(\cdot \mid \mathcal{M}\left(\mathcal{H}_{k-1}\right)\right) \| \mathbb{P}_{(x_{i^*},a^*)}\left(\cdot \mid \mathcal{M}\left(\mathcal{H}_{k-1}\right)\right)\right)\right)
.$$
where $\mathcal{M}\left(\mathcal{H}_{k-1}\right)$ means the results of privacy mechanism on history $\mathcal{H}_{k-1}$.

Because $\mathcal{M}$ is an $\epsilon$-LDP mechanism, from Theorem 1 in \cite{duchi2013local} we have 
$$
\operatorname{KL}\left(\mathbb{P}_0\left(\cdot \mid \mathcal{M}\left(\mathcal{H}_{k-1}\right)\right) \| \mathbb{P}_{(x_{i^*},a^*)}\left(\cdot \mid \mathcal{M}\left(\mathcal{H}_{k-1}\right)\right)\right) \leq 4(\exp (\varepsilon)-1)^2 \operatorname{KL}\left(\mathbb{P}_0\left(\cdot \mid \mathcal{H}_{k-1}\right) \| \mathbb{P}_{(x_{i^*},a^*)}\left(\cdot \mid \mathcal{H}_{k-1}\right)\right).
$$
 Thus $$\operatorname{KL}\left(\mathbb{P}_0\left(\mathcal{M}\left(\mathcal{H}_K\right)\right) \| \mathbb{P}_{(x_{i^*},a^*)}\left(\mathcal{M}\left(\mathcal{H}_K\right)\right)\right) \le 4(\exp (\varepsilon)-1)^2\operatorname{KL}\left(\mathbb{P}_0\left(\mathcal{H}_K\right) \| \mathbb{P}_{(x_{i^*},a^*)}\left(\mathcal{H}_K\right)\right) $$
 
 Lemma 5 in \cite{domingues2021episodic} ensures that:
 $$\operatorname{KL}\left(\mathbb{P}_0\left(\mathcal{H}_K\right) \| \mathbb{P}_{(x_{i^*},a^*)}\left(\mathcal{H}_K\right)\right)=\mathbb{E}_0\left[N_d^K(x_{i^*},a^*)\right]\operatorname{KL}(P_0(\cdot|x_{i^*},a^*)\|P_{(x_{i^*},a^*)}(\cdot|x_{i^*},a^*)).$$
 By using $\operatorname{KL}(\operatorname{Ber}(p)\|\operatorname{Ber}(q)) \le \frac{(p-q)^2}{q(1-q)}$, we obtain
 $$\operatorname{KL}(P_0(\cdot|x_{i^*},a^*)\|P_{(x_{i^*},a^*)}(\cdot|x_{i^*},a^*))=\operatorname{KL}\left(\operatorname{Ber}\left(\frac{\gamma^{1+v}}{2}\right)\|\operatorname{Ber}\left(\gamma^{1+v}\right)\right)\le \frac{\gamma^{1+v}}{4(1-\gamma^{1+v})}\le \gamma^{1+v}$$
 where the last inequality holds when $\gamma^{1+v} \in (0,\frac{3}{4}]$.
 According to the fact that $e^\epsilon-1 \approx \epsilon$ when $\epsilon$ is small, we have
 $$\frac{1}{K}\mathbb{E}_{(x_{i^*},a^*)}\left[N_d^K(x_{i^*},a^*)\right]\le \frac{1}{K}\mathbb{E}_0\left[N_d^K(x_{i^*},a^*)\right]+\sqrt{2\epsilon^2 \gamma^{1+v}\mathbb{E}_0\left[N_d^K(x_{i^*},a^*)\right]}.$$
 
 Now, let's assume that $I=(x_{i^*},a^*)$ is distributed uniformly over $\{x_1,\dots,x_L\}\times[A]$. That is to say, that the leaf $i^* \sim \mathcal{U}([L])$ and given the realization of $i^*$, $a^*$ is drawn uniformly in the action set of node $x_{i^*}$, i.e., $a^* \sim \mathcal{U}([A])$. we denote the expectation over the random variable $(x_{i^*},a^*)$ by $\mathbb{E}_I$. It then holds that:
 $$\mathbb{E}_I\mathbb{E}_0\left[N_d^K(x_{i^*},a^*)\right]=\mathbb{E}_0\sum_{k=1}^K\sum_{l=1}^L\sum_{a=1}^A \frac{1}{LA}\mathbb{I}\{s_d^k=x_l,a_d^k=a\}=\frac{K}{LA}.$$
 Then thanks to Jensen's inequality the regret in \eqref{eq:regDecom} is lower bound by 
 $$\mathbb{E}_I [Reg(T)]\ge \frac{1}{2}\gamma^v K \left(1-\frac{1}{LA}-\sqrt{\frac{2K\epsilon^2 \gamma^{1+v}}{LA}}\right).$$
 
 Take $\gamma=\left(\frac{LA}{32 K\epsilon^2}\right)^{\frac{1}{1+v}}$, we have 
 $$\max_{I\in \{x_1,\dots,x_L\}\times[A]}Reg(T) \ge \mathbb{E}_I [Reg(T)]\ge \Omega\left(\left(\frac{SA}{\epsilon^2}\right)^{\frac{v}{1+v}}K^{\frac{1}{1+v}}\right)$$
 where the last inequality holds since $L \ge (S-2)/2$.
\end{proof}

\section{Experiments}
In this section, we conduct proof-of-concept numerical experiments to verify our theoretical results for both policy-based and value-based algorithms.
\subsection{Setting}
We consider the standard tabular MDP environment \texttt{RiverSwim}~\citep{osband2013more}, illustrated in Fig.~\ref{fig:river}. It consists of six states and two actions `left' and `right', i.e., $S = 6$ and $A=2$. An agent starts with the left side and tries to reach the right side. At each step, if the agent chooses action `left', she will always succeed (the dotted arrow). Otherwise, the agent often fails (the solid arrow). The agent only receives a small reward of $0.005$ if she reaches the leftmost side, but obtains a large reward of $1$ once she arrives at the rightmost state. The agent gets a reward of $0$ for the intermediate states. Thus, this MDP naturally requires sufficient exploration to obtain the optimal policy. 
\begin{figure}[h]
\centering
\includegraphics[width=5in]{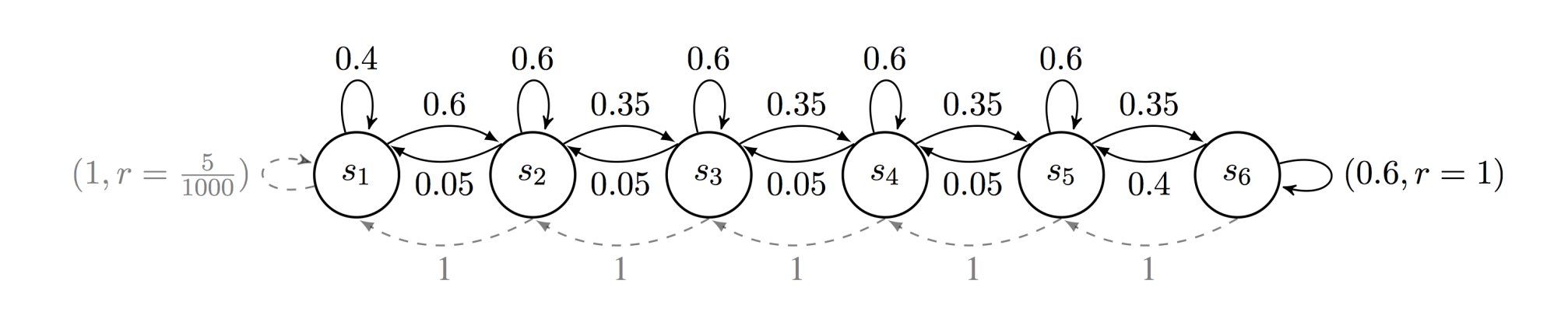}
\caption{\texttt{RiverSwim} MDP -- solid and dotted arrows denote the transitions under actions `right' and `left', respectively~\citep{osband2013more}. }
\label{fig:river}
\end{figure}

To generate heavy-tailed rewards, we use symmetric
$\alpha$-stable Levy distribution as in \cite{zhuang2021no}. The heaviness of the
tail is controlled by the parameter $\alpha$. In particular, $\alpha'$-th moments of $\alpha$-stable distributions are bounded for any $\alpha' \leq \alpha$. We denote this distribution as $\mathcal{L}(\alpha, \beta,\mu,\sigma)$, where $\beta$ is the skewness parameter, $\mu$ is the mean, and $\sigma$ is the shape parameter.
In all experiments, we set $\alpha = 2$ (i.e., the second moment of rewards is bounded). We consider only symmetric distributions (i.e., $\beta=0$) with unit shape (i.e., $\sigma=1$). We generate the heavy-tailed rewards corresponding to mean values $\mu \in \lbrace 0,1,0.005\rbrace$ as specified in the \texttt{RiverSwim} environment.

\subsection{Results}

\begin{figure}[ht]
\centering
  \includegraphics[width=0.49\linewidth]{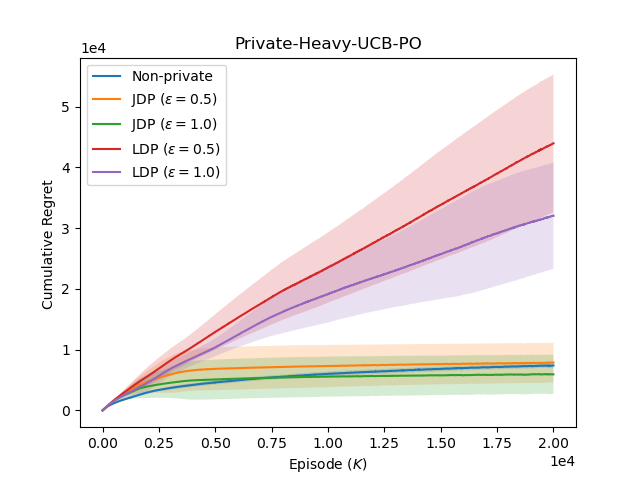}  
  \includegraphics[width=0.49\linewidth]{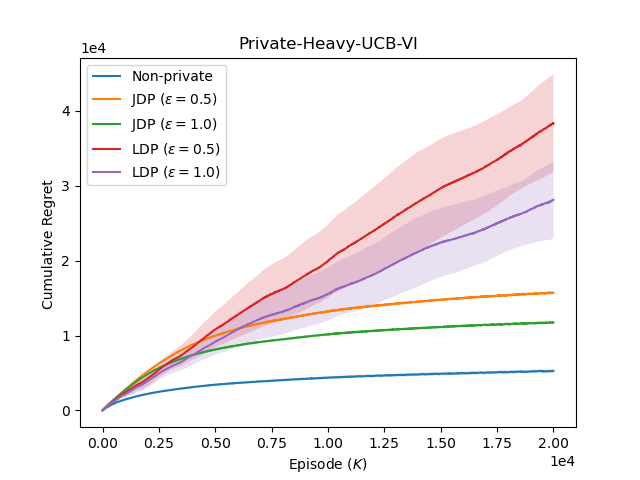}  
\caption{Cumulative regret vs. Episode for policy optimization and value iteration under heavy-tailed rewards with varying privacy levels $\epsilon \in \lbrace 0.5,1\rbrace$.}
\label{fig:sim}
\end{figure}

We evaluate both Private-Heavy-UCBVI and Private-Heavy-UCBPO under different privacy budgets $\epsilon$. As baselines, we design non-private UCBVI \citep{azar2017minimax} and OPPO \citep{shani2020optimistic} algorithms under heavy-tailed noise following the high-level approach of \cite{zhuang2021no}. 
We set all the parameters in our proposed algorithms in the same order as the theoretical results. We tune the learning rate $\eta$ and the scaling of the confidence interval to obtain the best results. We run $10$ independent experiments, each consisting of $K = 2 \cdot 10^4$ episodes. Each episode is reset every $H=20$ step. We plot the average cumulative regret along with the standard deviation for each setting, as shown in Fig.~\ref{fig:sim}

As suggested by our theoretical results, in both PO and VI experiments, we see that the cost of privacy under JDP becomes negligible as the number of episodes increases (since JDP doesn't increase the order of regret). However, under the stricter LDP requirement, the cost of privacy remains high (since LDP results in a higher-order term in regret). Furthermore, it is worth noting that the cost of privacy increases as the protection level increases, i.e., the value of $\epsilon$ decreases.

\end{document}